\documentclass{article}

% if you need to pass options to natbib, use, e.g.:
%     \PassOptionsToPackage{numbers, compress}{natbib}
% before loading neurips_2025

% ready for submission
% \usepackage{neurips_2025}

% to compile a preprint version, e.g., for submission to arXiv, add add the
% [preprint] option:
%     \usepackage[preprint]{neurips_2025}

% to compile a camera-ready version, add the [final] option, e.g.:
\usepackage[final]{neurips_2025}

% to avoid loading the natbib package, add option nonatbib:
%    \usepackage[nonatbib]{neurips_2025}

\usepackage[utf8]{inputenc} % allow utf-8 input
\usepackage[T1]{fontenc}    % use 8-bit T1 fonts
\usepackage{hyperref}       % hyperlinks
\usepackage{url}            % simple URL typesetting
\usepackage{booktabs}       % professional-quality tables
\usepackage{amsfonts}       % blackboard math symbols
\usepackage{amssymb} 
\usepackage{amsthm} 
\usepackage{nicefrac}       % compact symbols for 1/2, etc.
\usepackage{microtype}      % microtypography
\usepackage{xcolor}         % colors
\hypersetup{
    colorlinks=true,
    linkcolor=red,
    citecolor=teal,
    urlcolor=cyan,
    % urlcolor=magenta,
}
\usepackage{enumitem}
\usepackage{etoc}
\etocdepthtag.toc{mtchapter}
\etocsettagdepth{mtchapter}{subsubsection}
\etocsettagdepth{mtappendix}{none}

\usepackage{sail}
\usepackage{amsmath}
\usepackage{wrapfig}

\usepackage{tabularx}
\usepackage{multirow}
\usepackage{makecell}
\usepackage{colortbl}
\usepackage{xcolor}
\usepackage{threeparttable} 
\usepackage{graphicx}
\usepackage{subfig}
\usepackage{algorithm}
\usepackage{algorithmic}
\makeatletter
\newcommand\figcaption{\def\@captype{figure}\caption}
\newcommand\tabcaption{\def\@captype{table}\caption}

\makeatother

\newcolumntype{B}{>{\bfseries}c}

\title{Physics-Driven Spatiotemporal Modeling for AI-Generated Video Detection}

% The \author macro works with any number of authors. There are two commands
% used to separate the names and addresses of multiple authors: \And and \AND.
%
% Using \And between authors leaves it to LaTeX to determine where to break the
% lines. Using \AND forces a line break at that point. So, if LaTeX puts 3 of 4
% authors names on the first line, and the last on the second line, try using
% \AND instead of \And before the third author name.

\author{
    Shuhai Zhang\textsuperscript{\rm 1 \rm 4}\thanks{Equal contribution. Email: shuhaizhangshz@gmail.com, lianzihaolzh@gmail.com} ~~ 
    Zihao Lian\textsuperscript{\rm 1 }\footnotemark[1] ~~
    Jiahao Yang\textsuperscript{\rm 1} ~~
    Daiyuan Li\textsuperscript{\rm 1} ~~
    Guoxuan Pang\textsuperscript{\rm 2}\\
    \textbf{Feng Liu}\textsuperscript{\rm 5} ~
    \textbf{Bo Han}\textsuperscript{\rm 7} ~
    \textbf{Shutao Li}\textsuperscript{\rm 6}\footnotemark[2] ~
    \textbf{Mingkui Tan}\textsuperscript{\rm 1 \rm 3}\thanks{Corresponding author. Email: mingkuitan@scut.edu.cn, shutao\_li@hnu.edu.cn} ~ \\
    \textsuperscript{\scriptsize{\rm 1}}\small{South China University of Technology,}
    \textsuperscript{\rm 2}\small{University of Science and Technology of China} \\
    \textsuperscript{\rm 3}\small{Key Laboratory of Big Data and Intelligent Robot, Ministry of Education,} \textsuperscript{\rm 4}\small{Pazhou Lab} \\
    \textsuperscript{\scriptsize{\rm 5}}\small{University of Melbourne,}
    \textsuperscript{\scriptsize{\rm 6}}\small{Hunan University,}
    \textsuperscript{\scriptsize{\rm 7}}\small{Hong Kong Baptist University}    
}

\begin{document}

\maketitle

\begin{abstract}
AI-generated videos have achieved near-perfect visual realism (\eg, Sora), urgently necessitating reliable detection mechanisms. However, detecting such videos faces significant challenges in modeling high-dimensional spatiotemporal dynamics and identifying subtle anomalies that violate physical laws. In this paper, we propose the first physics-driven AI-generated video detection paradigm based on probability flow conservation principles. Specifically, we propose a statistic called \textit{Normalized Spatiotemporal Gradient} (NSG), which quantifies the ratio of spatial probability gradients to temporal density changes, explicitly capturing deviations from natural video dynamics. Leveraging pre-trained diffusion models, we develop an NSG estimator through spatial gradients approximation and motion-aware temporal modeling without complex motion decomposition while preserving physical constraints. Building on this, we propose an NSG-based video detection method (NSG-VD) that computes the \textit{Maximum Mean Discrepancy} (MMD) between NSG features of the test and real videos as a detection metric. Last, we derive an upper bound of NSG feature distances between real and generated videos, proving that generated videos exhibit amplified discrepancies due to distributional shifts. Extensive experiments confirm that NSG-VD outperforms state-of-the-art baselines by 16.00\% in Recall and 10.75\% in F1-Score,  validating the superior performance of NSG-VD. The source code is available at \url{https://github.com/ZSHsh98/NSG-VD}.
\end{abstract}

\section{Introduction}

The rapid advancement of generative models \cite{blattmann2023stable,blattmann2023align,brooks2024video,wu2025customcrafter,chi2024unveiling,huang2025fine}, particularly diffusion-based frameworks (\eg, Sora \cite{brooks2024video}), has achieved unprecedented capabilities in synthesizing photorealistic video content. While these breakthroughs enable transformative applications in content creation for creative industries \cite{chen2024videocrafter2,xing2024dynamicrafter,shi2024motion,li2025image}, they simultaneously pose critical societal risks through malicious manipulation (\eg, deepfake disinformation \cite{bao2018towards,guo2024pulid,zheng2023out,wang2021hififace,zhao2023diffswap,oorloff2024avff,li2023learning}, synthetic media fraud \cite{chen2024videocrafter2,rombach2022high}). As AI-generated videos become increasingly realistic in both spatial and temporal domains, developing effective video detection methods becomes critically urgent for preserving societal trust in digital media.

A fundamental challenge for AI-generated video detection lies in modeling the spatiotemporal dynamics of video evolution. Intuitively, natural videos inherently obey physical laws like motion coherence and texture continuity \cite{Huang_2024_CVPR,zheng2025vbench}, while AI-generated videos often exhibit subtle yet systematic inconsistencies in spatiotemporal coherence \cite{ouyang2024codef}. This observation raises a crucial question: 

\textit{How can we model intrinsic spatiotemporal dynamics of natural videos to expose synthetic anomalies?}

Two critical difficulties arise: 1) Video content inherently contains complex \textit{spatial domain correlations} (\eg, texture structure) and \textit{temporal domain dependencies} (\eg, motion trajectories), requiring modeling frameworks that jointly capture both spatial structures and temporal dynamics characteristics. 2) AI-generated videos are rapidly approaching the perceptual quality of natural videos, with discrepancies that may become vanishingly \textit{subtle} in both visual appearance and temporal evolution.

\begin{figure*}[t]
    \vspace{-22pt}
    \begin{center}
    \subfloat[Traditional Spatiotemporal Modeling]
    {\includegraphics[width=0.48\linewidth]{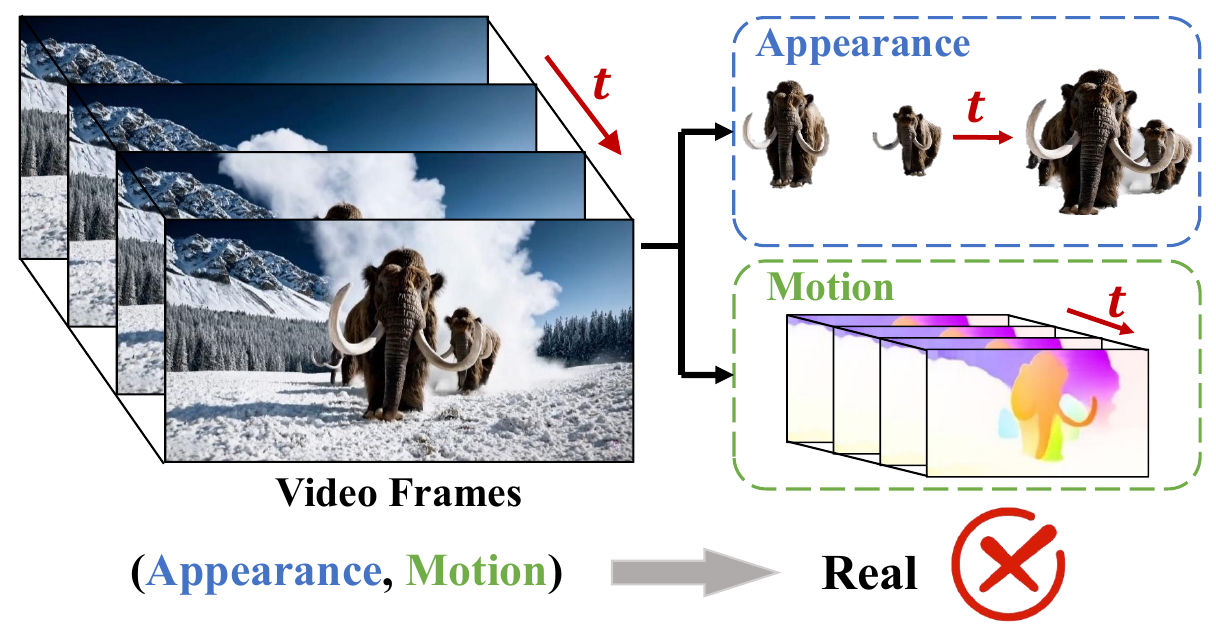}}
    \hskip 0.2in
    \subfloat[Physics-Driven Spatiotemporal Modeling (Ours)]
    {\includegraphics[width=0.47\linewidth]{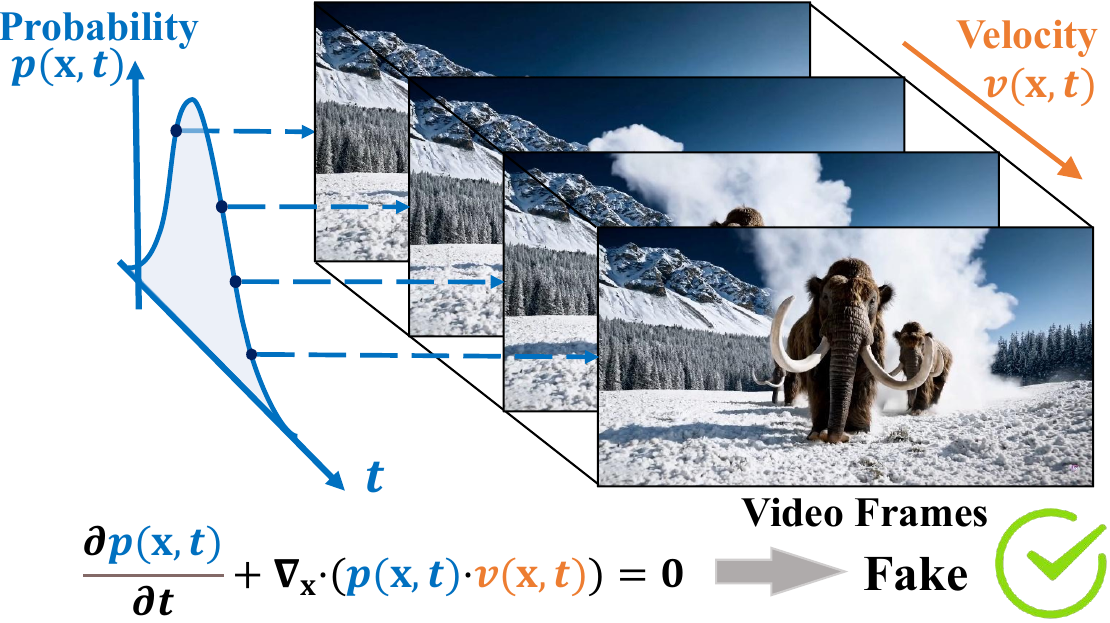}}
    \vspace{-3pt}
    \caption{Comparisons of traditional and physics-driven paradigms for spatiotemporal modeling in AI-generated video detection. (a) Traditional methods \cite{amerini2019deepfake,wang2023altfreezing,xu2023tall} often rely on specific artifacts like appearance consistency and optical flow-based motion modeling, struggling with highly realistic content yet physically implausible (\eg, Sora). (b) Our physics-driven approach explicitly models video dynamics via physics conservation laws, effectively identifying violations of physical laws.}
    \label{fig: motivation}
    \end{center}
    \vspace{-15pt}
\end{figure*}

Existing  AI-generated video detection methods primarily rely on local feature inconsistencies (\eg, optical flow-based motion modeling \cite{amerini2019deepfake},  appearance consistency modeling \cite{wang2023altfreezing}) or supervised learning with large-scale datasets \cite{chen2024demamba,xu2023tall,qian2020thinking,tan2024rethinking}. However, they often ignore physics-driven constraints governing spatiotemporal evolution inherent to natural videos. This limitation exhibits inherent vulnerabilities when confronting synthetic anomalies that violate physical laws, \eg, non-physical motion patterns in Sora-generated videos \cite{brooks2024video} (Figure \ref{fig: motivation}-a), leading to inferior performance.

In this paper, we propose the first physics-driven paradigm based on probability flow conservation principles \cite{batchelor2000introduction,rieutord2014fluid}. By modeling video dynamics as fluid mechanics, we formulate video evolution through a probability flow velocity field governed by continuity equations (see Figure \ref{fig: motivation}-b and Section \ref{sec: Modeling Spatiotemporal Dynamics}). This reveals a key insight: \textit{natural video dynamics preserve the product between the velocity field and the ratio of spatial probability gradients to temporal density changes}. Inspired by this, we introduce a \textbf{Normalized Spatiotemporal Gradient (NSG)} statistic, which quantifies the ratio of spatial probability gradients to temporal density changes. NSG captures fundamental discrepancies in how videos adhere to physical constraints while eliminating reliance on specific artifacts, enabling sensitive detection even when visual differences are imperceptible to humans or conventional models.

To enable practical estimation, we develop an NSG estimator leveraging pre-trained diffusion models' inherent gradient estimation ability \cite{song2019generative,song2020score} in Section \ref{sec: Estimating statistic}. By approximating spatial gradients with learned score functions (\ie, the gradient of the log probability density) from the diffusion models and temporal derivatives through motion-aware temporal dynamics via a brightness constancy constraint \cite{horn1981determining}, our method avoids explicit flow computation while preserving essential physical constraints. This estimator eliminates reliance on complex motion modeling by physics-inspired priors while maintaining sensitivity to subtle spatiotemporal inconsistencies inherent to synthetic content.

Building on this foundation, we propose an \textbf{NSG-based video detection method (NSG-VD)} in Section \ref{sec: Exploring statistic}, which computes the \textit{Maximum Mean Discrepancy} (MMD) \cite{gretton2012kernel, liu2020learning} between NSG features of real videos and the test video as the detection metric, as illustrated in Figure \ref{framework}. We further theoretically derive an upper bound of the distance between NSG features of real and generated data in Section \ref{sec: Theoretical Analysis}, showing this bound expands with increasing distribution shifts in generated videos. This implies that the MMD between NSGs of real videos tends to be smaller than that between real and generated videos, establishing the theoretical basis for the effectiveness of NSG-VD. 
Extensive experiments show that NSG-VD achieves $16.00\%$ higher Recall and $10.75\%$ higher F1-score than baselines, validating the superior performance of NSG-VD. 
Our contributions are summarized as:
\begin{itemize}[leftmargin=*]
    \item A physics-driven NSG statistic: We propose the first formulation of video evolution via a probability flow velocity field with a continuity equation, introducing a statistic Normalized Spatiotemporal Gradient (NSG) that explicitly models spatiotemporal dynamics of videos. By quantifying the ratio of spatial probability gradients to temporal density changes, NSG fundamentally captures violations of physical continuity in AI-generated videos without reliance on artifact-specific supervision.
    \item A diffusion-guided NSG estimation with physical priors: We develop an NSG estimator by spatial gradients approximation and motion-aware temporal dynamics modeling using pre-trained diffusion models. By avoiding explicit flow modeling and instead enforcing brightness constancy constraints, our method achieves effective NSG approximation without domain-specific motion modeling.
    \item An AI-generated video detection method with theoretical and empirical justifications: We propose an NSG-based video detection method (NSG-VD), which quantifies distributional shifts in NSG features using \textit{Maximum Mean Discrepancy} (MMD). We derive an upper bound of NSG feature distances between real and generated videos, proving that generated videos exhibit amplified discrepancies under distribution shifts. Empirical results also show the superiority of our NSG-VD.
\end{itemize}

\section{Related Work}

\textbf{AI-Generated Video Detection.} Early generated video detection methods primarily focus on identifying synthetic facial videos. Yang et al.~\cite{yang2019exposing} and Amerini et al.~\cite{amerini2019deepfake} exploit auxiliary facial motion cues (landmark dynamics vs. optical flow) for deepfake detection. Gu et al.~\cite{gu2021spatiotemporal} separately model spatial and temporal inconsistencies, and introduce a vertical slicing feature fusion mechanism to establish a more comprehensive spatial-temporal representation. Wang et al.~\cite{wang2023altfreezing} propose an alternating-freezing strategy with spatiotemporal augmentation for facial consistency modeling. Xu et al.~\cite{xu2023tall} transform consecutive frames into a predefined layout via masking/resizing to enable efficient spatiotemporal modeling. Peng et al.~\cite{peng2024deepfakes} integrate multi-feature fusion of facial perspectives, textures, and attributes. While most methods utilize facial priors, their reliance on domain-specific features limits their generalizability to more general AI-generated content detection.

With the rapid advancement in video generation, detecting general AI-generated content has become challenging. Bai et al.~\cite{bai2024ai} fuse frame-level and optical flow predictions to detect spatial-temporal anomalies. To jointly capture spatiotemporal cues, Ma et al.~\cite{ma2024decof}  and Chen et al.~\cite{chen2024demamba} propose Transformer- and mamba-based frameworks to model spatiotemporal relationships in video frame features for detection. Song et al.~\cite{songlearning} exploit the cross-modal perception and reasoning in vision-language large models to learn general forgery features. Despite this progress, these methods mainly focus on appearance inconsistencies, while overlooking the intrinsic spatiotemporal dynamics cues, thereby struggling to tackle visual cues from diverse video generative models.

\textbf{Diffusion Models.}
%参考epsad
Diffusion models \cite{ho2020denoising, song2019generative, song2020improved, song2020score} have emerged as powerful probabilistic generative models, benefiting from their diffusion-denoising paradigm that perturb data into noise through Gaussian processes and reconstruct samples via iterative denoising.
Intuitively, the high-quality and diverse generative capabilities of diffusion models come from their ability to capture and exploit the distributional characteristics of natural data, enabling effective discrimination between natural samples and outliers. Motivated by this, a growing body of research has leveraged diffusion models for the detection of adversarial \cite{nie2022diffusion,yoon2021adversarial,zhangs2023EPSAD} and generated samples \cite{song2025detecting,wang2023dire,zhang2024detecting}, wherein the score model emerges as a powerful discriminative tool. Nevertheless, it remains challenging to simultaneously capture and integrate spatiotemporal features when relying solely on score models.

\section{Modeling Spatiotemporal Dynamics for AI-Generated Video Detection}

\textbf{AI-Generated Video Detection.} Let $\mathbb{P}$ be a Borel probability measure on a separable spatiotemporal metric space $\mathcal{X} {\subset} \mathbb{R}^{T \times d}$, where $T$ is the number of frames and $d$ is the spatial dimension. Given  independent and identically distributed (i.i.d.) samples $S_{\mathbb{P}} {=} \{\mathbf{x}^{(i)}\}_{i=1}^n$ from the real video distribution $\mathbb{P}$, we aim to determine whether each sample $\mathbf{y}^{(j)}$ in $S_{\mathbb{Q}} {=} \{\tilde{\mathbf{y}}^{(j)}\}_{j=1}^m$ originates from $\mathbb{P}$.

\textbf{Challenges for Video Detection.} The complex spatiotemporal dynamics in high-dimensional video data often require modeling both spatial irregularities (\eg, unnatural textures) and temporal inconsistencies (\eg, implausible motions). Moreover, the diversity of generative paradigms (\eg, diffusion models \cite{blattmann2023stable} and generative adversarial networks \cite{brooks2022generating}) introduces heterogeneous distribution shifts that exhibit as subtle statistical inconsistencies rather than explicit artifacts.  These challenges are further worsened by rapidly evolving generative techniques (\eg, Sora \cite{brooks2024video}), which continuously produce novel spatiotemporal patterns surpassing existing detection mechanisms. 

\textbf{Method Overview.} To address these challenges, we propose a physics-driven method based on physical conservation principles to model \textit{spatiotemporal dynamics} and introduce a novel statistic \textbf{Normalized Spatiotemporal Gradient (NSG)}, which quantifies the ratio of spatial probability gradients to temporal density changes, capturing subtle anomalies in videos (Section \ref{sec: Modeling Spatiotemporal Dynamics}). Leveraging diffusion models, we develop an effective NSG estimator by spatial gradients approximation and motion-aware temporal dynamics modeling (Section \ref{sec: Estimating statistic}). Building on this, we develop an \textbf{NSG-based video detection method (NSG-VD)}, which computes the \textit{Maximum Mean Discrepancy} (MMD) between NSG features of the test video and real videos as a detection characteristic (Section \ref{sec: Exploring statistic}), where its framework is shown in Figure \ref{framework}. Last, we theoretically show that the MMD between NSGs of real videos tends to be smaller than that between real and generated videos (Section \ref{sec: Theoretical Analysis}).

\begin{figure*}[t]
    \centering
    \includegraphics[width=\linewidth]{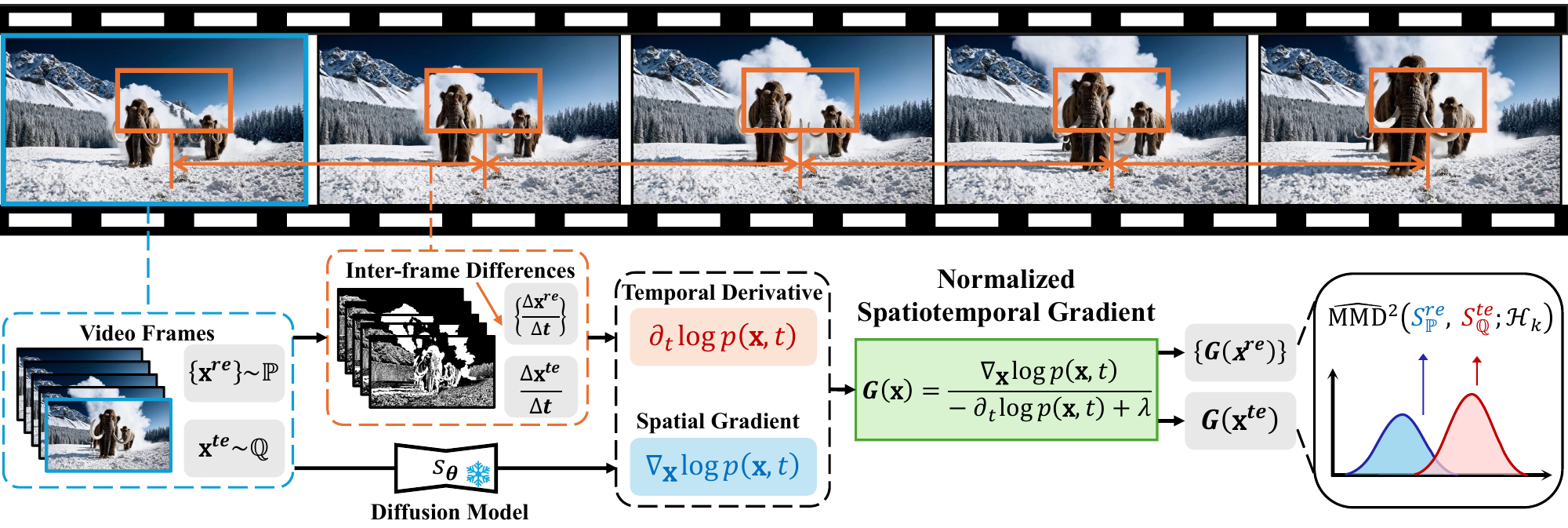}
    \vspace{-15pt}
    \caption{Overview of the proposed NSG-VD. Given a reference set of real videos $\{\mathbf{x}^{re}\}$ and a test video $\mathbf{x}^{te}$, we estimate their spatial gradients $\nabla_{\mathbf{x}} \log p(\mathbf{x}, t)$ and temporal derivatives $\partial_t \log p(\mathbf{x}, t)$ via a pre-trained diffusion model $s_\theta$, from which we derive their Normalized Spatiotemporal Gradients (NSGs) and calculate the MMD between NSG features of real and test videos as a detection metric.
    }
    \label{framework}
\end{figure*}

\subsection{Modeling Spatiotemporal Dynamics via Normalized Spatiotemporal Gradient}
\label{sec: Modeling Spatiotemporal Dynamics}

Detecting AI-generated videos requires capturing both spatial irregularities and temporal inconsistencies in synthetic content.  Inspired by conservation laws in physics (\eg, mass or energy transport), we initially formulate the probability flow velocity field $\mathbf{v}(\mathbf{x}, t)$ to model the evolution of probability density $p(\mathbf{x}, t)$, which satisfies a continuity equation for global consistency across spatiotemporal domains. However, solving $\mathbf{v}$ faces challenges due to its underdetermined nature (Eqn. (\ref{eqn: apprx_v})). To address this, we propose \textbf{Normalized Spatiotemporal Gradient (NSG)} $\mathbf{g}(\mathbf{x}, t)$, a \textit{dual} field statistic of $\mathbf{v}$ combining both spatial gradients and temporal dynamics of $p(\mathbf{x}, t)$, as defined in Eqn. (\ref{eqn:NSG}).

\textbf{Probability Flow Velocity Field $\mathbf{v}(\mathbf{x}, t)$.} We begin to conceptualize the \textit{probability flow} (also called \textit{probability current}) as the movement of probability mass of $\mathbf{x}$ over time $t$ \cite{wilczek1999quantum,synge1978tensor}. To formalize this flow, we define the \textit{probability flow density} $\mathbf{J}(\mathbf{x}, t)$, analogous to fluid mechanics \cite{hodge2014electron}:
\begin{equation}
    \mathbf{J}(\mathbf{x}, t) = p(\mathbf{x}, t) \cdot \mathbf{v}(\mathbf{x}, t),
\end{equation}
where $p(\mathbf{x}, t)$ denotes the probability density and $\mathbf{v}(\mathbf{x}, t)$ represents the velocity field guiding the flow of probability mass. The conservation of probability mass \cite{batchelor2000introduction,rieutord2014fluid} implies the continuity equation:
\begin{equation}\label{eqn: conservation equation}
    \frac{\partial p(\mathbf{x}, t)}{\partial t} + \nabla_{\mathbf{x}} \cdot \mathbf{J}(\mathbf{x}, t) = 0,
\end{equation}
where $\nabla_{\mathbf{x}} \cdot \mathbf{J}{=} \sum_{i} \frac{\partial \mathbf{J}_i}{\partial \mathbf{x}_i}$ denotes the divergence of the vector field $\mathbf{J}$ \cite{synge1978tensor}. This is not a video-specific assumption but a universal mathematical formulation of probability mass conservation, which holds for any time-evolving probability density $p(\mathbf{x}, t)$ \cite{risken1989fokker,rieutord2014fluid}.
Intuitively, this equation shows that the rate of change in probability density $\partial_t p$ at a point equals the difference between \textit{inflow} (negative divergence) or \textit{outflow} (positive divergence) of the probability flow $\mathbf{J}$. 
Substituting $\mathbf{J}(\mathbf{x}, t)$ into Eqn. (\ref{eqn: conservation equation}), dividing by $p(\mathbf{x}, t)$, and applying the chain rule to $\log p(\mathbf{x}, t)$, yields:
\begin{equation}\label{eqn: log_p_with_delta_v}
    \partial_t \log p(\mathbf{x}, t) + \nabla_{\mathbf{x}} \cdot \mathbf{v}(\mathbf{x}, t) + \mathbf{v}(\mathbf{x}, t) \cdot \nabla_{\mathbf{x}} \log p(\mathbf{x}, t) = 0.
\end{equation}
This expression reveals how the velocity field $\mathbf{v}(\mathbf{x}, t)$ simultaneously encodes temporal evolution ($\partial_t \log p(\mathbf{x}, t)$) and spatial gradients ($\nabla_{\mathbf{x}} \log p(\mathbf{x}, t)$) of the probability distribution.

\textbf{Normalized Spatiotemporal Gradient $\mathbf{g}(\mathbf{x}, t)$ as Dual Field of $\mathbf{v}(\mathbf{x}, t)$.} To solve $\mathbf{v}(\mathbf{x}, t)$, we focus on the dominant components of Eqn. (\ref{eqn: log_p_with_delta_v}). Assuming that the divergence term $\nabla_{\mathbf{x}} \cdot \mathbf{v}$ is subdominant in smoothly varying distributions (\eg, incompressible flow approximations \cite{batchelor2000introduction, panton2024incompressible}), a condition commonly used in fluid dynamics \cite{batchelor2000introduction} and quantum mechanics \cite{bohm2013quantum}, Eqn. (\ref{eqn: log_p_with_delta_v}) simplifies to:
\begin{equation}\label{eqn: apprx_v}
    \mathbf{v}(\mathbf{x}, t) \cdot \nabla_{\mathbf{x}} \log p(\mathbf{x}, t) \approx -\partial_t \log p(\mathbf{x}, t).
\end{equation}
Considering the non-uniqueness of solutions to $\mathbf{v}(\mathbf{x}, t)$ in Eqn. (\ref{eqn: apprx_v}), we normalize both sides into
\begin{equation}\label{eqn: unit}
    \mathbf{v}(\mathbf{x}, t) \cdot \frac{ \nabla_{\mathbf{x}} \log p(\mathbf{x}, t)}{-\partial_t \log p(\mathbf{x}, t)} \approx 1.
\end{equation}
\begin{deftn}(\textbf{Normalized Spatiotemporal Gradient (NSG)}.) 
The relation in Eqn. (\ref{eqn: unit}) reveals that \textit{natural video dynamics preserve the product between the velocity field and the ratio of spatial probability gradients to temporal density changes}. We formalize this constrained ratio as the Normalized Spatiotemporal Gradient (NSG), defined as:
\begin{equation}\label{eqn:NSG}
    \mathbf{g}(\mathbf{x}, t) = \frac{\nabla_{\mathbf{x}} \log p(\mathbf{x}, t)}{-\partial_t \log p(\mathbf{x}, t)+\lambda}.
\end{equation}  
\end{deftn}
Here, $\lambda >0$ prevents numerical instability. 
Eqn. (\ref{eqn: unit}) and (\ref{eqn:NSG}) imply that $\mathbf{g}(\mathbf{x}, t)$ acts as a \textit{dual field} to $\mathbf{v}(\mathbf{x}, t)$,  satisfying $\mathbf{v} \cdot \mathbf{g}\approx 1$. The formulation of $\mathbf{g}(\mathbf{x}, t)$ bypasses the ill-posed velocity $\mathbf{v}(\mathbf{x}, t)$ inversion problem while preserving the critical information about spatiotemporal gradient dynamics.

\textbf{Interpretation and Advantages}. The NSG statistic $\mathbf{g}(\mathbf{x}, t)$ quantifies the directional sensitivity of probability flow per unit temporal variation, driven by both spatial gradients ($\nabla_{\mathbf{x}} \log p(\mathbf{x}, t)$) and temporal derivatives ($\partial_t \log p(\mathbf{x}, t)$). This statistic captures both spatial irregularities (via $\nabla_{\mathbf{x}} \log p(\mathbf{x}, t)$) and temporal inconsistencies (via $\partial_t \log p(\mathbf{x}, t)$), enabling comprehensive analysis of video dynamics. Moreover, by modeling fundamental probability flow dynamics, NSG avoids dependencies on specific artifacts, making it suitable for detecting generated videos across diverse generation paradigms.

\subsection{Estimating NSG with Diffusion Models}
\label{sec: Estimating statistic}

The NSG statistic $\mathbf{g}(\mathbf{x}, t)$ in Eqn. (\ref{eqn:NSG}) requires estimating two key components: the spatial gradients $\nabla_{\mathbf{x}} \log p(\mathbf{x}, t)$ and the temporal derivatives $\partial_t \log p(\mathbf{x}, t)$.
Using diffusion models' inherent gradient estimation ability \cite{song2019generative,song2020score}, we propose an effective estimator combining spatial gradients from pre-trained diffusion models with motion-aware temporal dynamics using Eqn. (\ref{eqn: score_approx}) and (\ref{eqn: partial_t_approx}), yielding: 
\begin{equation}\label{eqn: NSG_est}
    \mathbf{g}(\mathbf{x}, t) \approx \frac{\mathbf{s}_{\theta}(\mathbf{x}_t)}{ \mathbf{s}_{\theta}(\mathbf{x}_t) \cdot \frac{\mathbf{x}_{t+\Delta t} - \mathbf{x}_t}{\Delta t} + \lambda },
\end{equation}
where $\mathbf{s}_{\theta}$ denotes the learned score function from diffusion models and $\mathbf{x}_t$ represents the $t$-th video frame. This estimator eliminates the need for explicit flow computation while preserving critical spatiotemporal dynamics through physics-inspired modeling. Below, we detail its derivation.

\textbf{Spatial Gradients Estimation.}
Diffusion models \cite{song2020score,dhariwal2021diffusion} explicitly learn a score network $\mathbf{s}_{\theta}$ through score matching \cite{song2019generative} or denoising diffusion modeling \cite{ho2020denoising} to approximate $\nabla_{\mathbf{x}} \log p(\mathbf{x}, t)$. For a given video $\mathbf{x}$ at $t$-th frame, the spatial gradient is estimated by:  
\begin{equation}\label{eqn: score_approx}
    \nabla_{\mathbf{x}} \log p(\mathbf{x}, t) \approx \mathbf{s}_{\theta}(\mathbf{x}_t),
\end{equation}
where $\mathbf{x}_t$ is the $t$-th frame of the video $\mathbf{x}$.
Here, we omit the diffusion timestep to make the notation clearer. This estimation allows direct computation of $\nabla_{\mathbf{x}} \log p(\mathbf{x}, t)$ in NSG via a single forward pass of the pre-trained diffusion model, eliminating the need for numerical differentiation.

\textbf{Temporal Derivatives Approximation.} To estimate \(\partial_t \log p(\mathbf{x}, t)\), we exploit the temporal coherence of video sequences under the \textit{brightness constancy assumption} \cite{horn1981determining}, which posits that the probability density along motion trajectories remains constant. This leads to the following approximation:  
\begin{prop}\label{prop:partial_t_approx}  
Under the brightness constancy assumption $p(\mathbf{x}+\Delta\mathbf{x}, t+\Delta t) \approx  p(\mathbf{x}, t)$ with small inter-frame motion ($\Delta t \to 0$) and inter-frame displacement ($\Delta \mathbf{x} \to 0$), we have  
\begin{equation}\label{eqn: partial_t_approx}  
\partial_t \log p(\mathbf{x}, t) \approx -\frac{\nabla_{\mathbf{x}} \log p(\mathbf{x}, t) \cdot \Delta\mathbf{x}}{\Delta t}.
\end{equation} 
\end{prop}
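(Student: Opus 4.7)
The plan is to derive the stated approximation by a direct first-order Taylor expansion of $\log p$ at $(\mathbf{x}, t)$, closely paralleling the classical derivation of the optical flow constraint equation from intensity conservation. Since the brightness constancy assumption $p(\mathbf{x}+\Delta\mathbf{x}, t+\Delta t) \approx p(\mathbf{x}, t)$ is itself a first-order closeness statement, the natural tool is linearization around $(\mathbf{x}, t)$.

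First, assuming $p(\mathbf{x}, t) > 0$ (so that $\log p$ is well defined) and that $p$ is continuously differentiable in both its spatial and temporal arguments, I would take the logarithm of both sides of the brightness constancy assumption to obtain $\log p(\mathbf{x}+\Delta\mathbf{x}, t+\Delta t) \approx \log p(\mathbf{x}, t)$. Next, I would expand the left-hand side via a multivariate Taylor expansion to first order around $(\mathbf{x},t)$:
\begin{equation*}
\log p(\mathbf{x}+\Delta\mathbf{x}, t+\Delta t) = \log p(\mathbf{x}, t) + \nabla_{\mathbf{x}} \log p(\mathbf{x}, t) \cdot \Delta\mathbf{x} + \partial_t \log p(\mathbf{x}, t)\, \Delta t + o(\|\Delta\mathbf{x}\| + |\Delta t|).
\end{equation*}
Equating this with the right-hand side, the zeroth-order terms cancel, leaving the first-order balance $\nabla_{\mathbf{x}} \log p(\mathbf{x}, t) \cdot \Delta\mathbf{x} + \partial_t \log p(\mathbf{x}, t)\, \Delta t \approx 0$. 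Dividing through by $\Delta t$ and isolating $\partial_t \log p(\mathbf{x},t)$ yields Eqn. (\ref{eqn: partial_t_approx}) directly.

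The derivation itself is essentially routine, so the only real subtleties are regularity conditions: one needs $p>0$ for the logarithm to exist, $C^1$ smoothness of $p$ for the Taylor expansion to apply, and control of the remainder under the joint limit $\Delta\mathbf{x}, \Delta t \to 0$ (so that the $o(\cdot)$ term is dominated by the first-order terms after division by $\Delta t$, which in turn requires $\|\Delta\mathbf{x}\|/\Delta t$ to stay bounded, i.e.\ a finite instantaneous motion velocity). The main conceptual \emph{obstacle}, rather than a technical one, is the interpretation of $\Delta\mathbf{x}$: it must represent the physical inter-frame displacement along motion trajectories of the probability mass, since otherwise the brightness constancy postulate itself is not meaningful. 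This interpretation is precisely what later permits replacing $\Delta\mathbf{x}/\Delta t$ with the finite-difference motion estimate $(\mathbf{x}_{t+\Delta t} - \mathbf{x}_t)/\Delta t$ in Eqn. (\ref{eqn: NSG_est}).
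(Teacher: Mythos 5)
Your proposal is correct and takes essentially the same route as the paper's proof: a first-order Taylor expansion of $\log p(\mathbf{x}+\Delta\mathbf{x}, t+\Delta t)$ around $(\mathbf{x},t)$, cancellation of the zeroth-order term by the brightness constancy assumption, and division by $\Delta t$. Your added remarks on the regularity conditions ($p>0$, $C^1$ smoothness, and boundedness of $\|\Delta\mathbf{x}\|/\Delta t$ so the remainder stays subdominant) are sound refinements the paper leaves implicit but do not change the argument.
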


\subsection{Exploring NSG for Detecting AI-Generated Videos}
\label{sec: Exploring statistic}

To effectively distinguish AI-generated content from real videos, it is crucial to design metrics that capture subtle distributional discrepancies in high-dimensional spatiotemporal features. Recent studies show \textit{Maximum Mean Discrepancy} (MMD) \cite{gretton2012kernel}—a non-parametric statistic for distribution alignment—has demonstrated remarkable capabilities in measuring distributional differences, \eg, AI-text detection \cite{zhang2024detecting,song2025detecting} and adversarial samples detection \cite{gao2021maximum,zhang2023detecting}. Building upon MMD's theoretical foundation and NSG's unique strength in modeling spatiotemporal dynamics, we propose an \textbf{NSG-based video detection method (NSG-VD)}  that integrates MMD with the NSG feature representation. 

\textbf{MMD Formulation with NSG Features.} We aggregate NSG features across $T$ frames in each video as $\mathbf{G}(\mathbf{x}){=}\{\mathbf{g}(\mathbf{x},t)\}_{t=1}^T$. Let $S^{re}_\mathbb{P} {=} \{\mathbf{x}^{(i)}\}_{i=1}^n$ denote a reference set of real videos and $S^{te}_\mathbb{Q} {=} \{\tilde{\mathbf{y}}\}$ represent a test video. The MMD \cite{gretton2012kernel} between $S^{re}_\mathbb{P}$ and $S^{te}_\mathbb{Q}$ in terms of NSG is computed as:
\begin{equation}\label{eqn: MMD_NSG}
\widehat{\mathrm{MMD}}_{b}^{2}\!\left[S^{re}_\mathbb{P},S^{te}_\mathbb{Q};\mathcal{H}_{k}\right] {=} \frac{1}{n^{2}} \!\!\sum_{i,j=1}^{n}\! k\!\left(\mathbf{G}^{(i)}, \mathbf{G}^{(j)}\right) {-} \frac{2}{n} \!\sum_{i=1}^{n} k\!\left(\mathbf{G}^{(i)}, \mathbf{G}^{(\text{test})}\!\right) {+} k\!\left(\mathbf{G}^{(\text{test})}, \mathbf{G}^{(\text{test})}\!\right)\!,
\end{equation}
where \(\mathbf{G}^{(i)} = \mathbf{G}(\mathbf{x}^{(i)})\) and \(\mathbf{G}^{(\text{test})} = \mathbf{G}(\tilde{\mathbf{y}})\) are NSG features extracted from real and test videos. The kernel \(k: \mathcal{G} \times \mathcal{G} \to \mathbb{R}\) maps NSG features to a reproducing kernel Hilbert space (RKHS) \(\mathcal{H}_k\), such as the Gaussian kernel $k \left(\mathbf{a},\mathbf{b}\right)=\exp \left(-\left\|\mathbf{a}-\mathbf{b}\right\|^{2} /\left(2 \sigma^{2}\right)\right)$.
Note that while MMD is conventionally used for distribution-level comparisons, recent studies \cite{zhang2024detecting,song2025detecting,zhang2023detecting} validate its efficacy in single-sample detection by quantifying deviations from reference distributions. 
Crucially, while MMD provides a viable solution for distributional comparison, the core advantage of NSG-VD stems from the NSG itself modeling fundamental spatiotemporal dynamics (see details in Appendix \ref{sec: Impact of MMD}).

\textbf{Detection Protocol with MMD Metric.} Let $f(\tilde{\mathbf{y}}; S_{\mathbb{P}}, k_\omega, \tau)=\mathbb{I}\left(\widehat{\mathrm{MMD}}_{b}^{2} > \tau\right)$, where $\mathbb{I}$ is the indicator function and $\tau$ is a threshold for the decision. Given a test video $\tilde{\mathbf{y}}$, we compute the MMD with NSG against a referenced real video set and give the decision:
\begin{equation}
\label{eqn: decision}
    f(\tilde{\mathbf{y}}) = \begin{cases} 
\mathrm{Fake}, & \mathrm{if}~ f(\tilde{\mathbf{y}}; S_{\mathbb{P}}, k_\omega,\tau)=1, \\
\mathrm{Real}, & \mathrm{if}~ f(\tilde{\mathbf{y}}; S_{\mathbb{P}},k_\omega, \tau)=0.
\end{cases}
\end{equation}
\textbf{Optimization for NSG-VD.} To enhance discriminative power, we use a deep kernel \cite{liu2020learning} for MMD:
\begin{equation}\label{eqn:deep_kernel}
    k_\omega(\mathbf{x},\mathbf{y}) = \left[(1-\epsilon)\kappa\left(\phi_{\mathbf{G}}(\mathbf{x}), \phi_{\mathbf{G}}(\mathbf{y})\right) + \epsilon\right] \cdot \Phi\left(\mathbf{G}(\mathbf{x}), \mathbf{G}(\mathbf{y})\right),
\end{equation}
where \(\phi_{\mathbf{G}}(\mathbf{x}) = \phi(\mathbf{G}(\mathbf{x}))\) is a deep neural network,
% that maps NSG features to a latent space, 
\(\kappa\) and \(\Phi\) are Gaussian kernels with bandwidths \(\sigma_\phi\) and \(\sigma_{\Phi}\), and \(\epsilon \in (0, 1)\).
The kernel parameters \(\omega {=} \{\epsilon, \phi, \sigma_\phi, \sigma_{\Phi}\}\) will be optimized by Eqn. (\ref{eqn:objMMDopt}) to maximize the detection ability.
Considering the multiple-population scenarios across diverse video distributions \cite{zhang2024detecting}, we adopt a \textit{multi-population aware optimization} for the kernel training:
\begin{equation}\label{eqn:objMMDopt}
    k_\omega^* {=} \arg\max_{k_\omega} \frac{\widehat{\mathrm{MPP}}_{u}(S^{tr}_{\mathbb{P}},S^{tr}_{\mathbb{Q}};k_\omega)}{\sqrt{\hat{\sigma}^{2}(S^{tr}_{\mathbb{P}},S^{tr}_{\mathbb{Q}};k_\omega)+\lambda}},
    ~\hat{\sigma}^{2} {=} \frac{4}{N^{3}}\sum_{i = 1}^{N}\left(\sum_{j = 1}^{N}H_{ij}^{*}\right)^{2} \!\!{-} \frac{4}{N^{4}}\left(\sum_{i = 1}^{N}\sum_{j = 1}^{N}H_{ij}^{*}\right)^{2},
\end{equation}  
where $S^{tr}_{\mathbb{P}}$ and $S^{tr}_{\mathbb{Q}}$ denote the training real and generated videos, respectively, $\widehat{\mathrm{MPP}}_{u}(S^{tr}_{\mathbb{P}},S^{tr}_{\mathbb{Q}};k_\omega)=\frac{1}{N(N-1)}\sum_{i\neq j}H^*_{ij}$ and $H_{ij}^*{=}k_\omega(\mathbf{x}_i,\mathbf{x}_j){-}k_\omega(\mathbf{x}_i,\mathbf{y}_j){-}k_\omega(\mathbf{y}_i,\mathbf{x}_j)$.

\subsection{Theoretical Guarantees for NSG-VD}
\label{sec: Theoretical Analysis}

The effectiveness of NSG-VD relies on ensuring the MMD between NSG features of real videos is smaller than that between real and generated videos. To formalize this, we analyze the MMD formulation in Eqn. (\ref{eqn: MMD_NSG}), where the key discriminative information lies in the cross-term $k\left(\mathbf{G}^{(i)}, \mathbf{G}^{(\text{test})}\right)$ since the first and third terms remain invariant for fixed reference sets.
Under the Gaussian kernel, this cross-term is dominated by the exponential squared distance between NSG features. Note that analyzing $\frac{\nabla_{\mathbf{x}} \log p(\mathbf{x}, t)}{-\partial_t \log p(\mathbf{x}, t)+\lambda}$ under practical distributions can be very difficult and infeasible, 
we adopt a common practice \cite{zheng2023toward,wang2024embedding} by assuming Gaussian-distributed data to derive theoretical insights.
Below, we first characterize the NSG statistics for real and generated videos under Gaussian assumptions.
\begin{prop}\label{prop: NSG}
Let the real video distribution be $p(\mathbf{x}, t)=\mathcal{N}(\mathbf{0}, \sigma(t)^2 \mathbf{I}_d)$ and the generated video distribution be $q(\mathbf{y}, t){=}\mathcal{N}(\boldsymbol{\mu}, \sigma(t)^2 \mathbf{I}_d)$, respectively, where $\mathbf{I}_d \in \mathbb{R}^{d\times d}$ is an identity matrix and $\boldsymbol{\mu}{\neq} \mathbf{0} \in \mathbb{R}^d$ is the distribution shift and $\sigma(t){\neq} \mathbf{0}$, the NSG $\mathbf{g}(\mathbf{x}, t)$ and $\mathbf{g}(\mathbf{y}, t)$ satisfy:
\begin{align*}
   \mathbf{g}(\mathbf{x}, t) &= -\frac{\mathbf{x}/\sigma(t)^2}{ D_r(\mathbf{x})}, ~~-\frac{\mathbf{x} }{\sigma(t)^2} \sim \mathcal{N} \Big(\mathbf{0}, \sigma(t)^2 \mathbf{I}_d \Big), ~D_r(\mathbf{x}) \sim \lambda + \frac{d \dot{\sigma}(t)}{\sigma(t)} - \frac{\dot{\sigma}(t)}{\sigma(t)} \chi^2(d); \\
   \mathbf{g}(\mathbf{y}, t) &= -\frac{\mathbf{y}/\sigma(t)^2}{ D_f(\mathbf{y})},~-\frac{\mathbf{y} }{\sigma(t)^2} \sim \mathcal{N}\Big(-\frac{\boldsymbol{\mu}}{\sigma(t)}, \sigma(t)^2 \mathbf{I}_d \Big), ~D_f(\mathbf{y}) \sim \lambda +\frac{d \dot{\sigma}(t)}{\sigma(t)} - \frac{\dot{\sigma}(t)}{\sigma(t)} \chi^2(d, \varphi),
\end{align*}
where $D_r(\mathbf{x}) = \lambda +\frac{d \dot{\sigma}(t)}{\sigma(t)} - \frac{\|\mathbf{x}\|^2 \dot{\sigma}(t)}{\sigma(t)^3}$, $D_f(\mathbf{y}) = \lambda + \frac{d \dot{\sigma}(t)}{\sigma(t)} - \frac{\|\mathbf{y}\|^2 \dot{\sigma}(t)}{\sigma(t)^3}$, $\dot{\sigma}(t) \triangleq \frac{d}{dt}\sigma(t)$,  and $\varphi = \frac{\|\boldsymbol{\mu}\|^2}{\sigma(t)^2}$, $\chi^2(d)$ is the central chi-squared distribution with $d$ degrees of freedom and $\chi^2(d, \varphi)$ is the noncentral chi-squared distribution with noncentrality parameter $\varphi$ and $d$ degrees of freedom \cite{abdel1954approximate}.
\end{prop}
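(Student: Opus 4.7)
The plan is a direct computation: evaluate the spatial and temporal log-density derivatives for the Gaussian density $p$, substitute them into the definition of NSG in Eqn.~(\ref{eqn:NSG}), and then read off the two marginal distributions of the numerator and denominator by applying standard linear and quadratic transformation rules for Gaussians.

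First I would compute the two building blocks. Writing $\log p(\mathbf{x},t) = -\tfrac{d}{2}\log(2\pi \sigma(t)^2) - \tfrac{\|\mathbf{x}\|^2}{2\sigma(t)^2}$, direct differentiation yields the spatial score $\nabla_{\mathbf{x}}\log p(\mathbf{x},t) = -\mathbf{x}/\sigma(t)^2$, while differentiation in $t$ and using $\tfrac{d}{dt}\sigma(t)^{-2} = -2\dot{\sigma}(t)/\sigma(t)^3$ gives the temporal derivative $\partial_t \log p(\mathbf{x},t) = -d\dot{\sigma}(t)/\sigma(t) + \|\mathbf{x}\|^2 \dot{\sigma}(t)/\sigma(t)^3$. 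Plugging these into Eqn.~(\ref{eqn:NSG}) immediately produces $\mathbf{g}(\mathbf{x},t) = -(\mathbf{x}/\sigma(t)^2)/D_r(\mathbf{x})$ with $D_r(\mathbf{x})$ exactly as stated.

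Next I would identify the marginal laws of numerator and denominator. For the numerator, since $\mathbf{x}\sim\mathcal{N}(\mathbf{0},\sigma(t)^2 \mathbf{I}_d)$, the linear map $\mathbf{x}\mapsto -\mathbf{x}/\sigma(t)^2$ is again Gaussian with mean $\mathbf{0}$, matching the stated form up to the scaling convention in the statement. For the denominator, observe that $\|\mathbf{x}\|^2/\sigma(t)^2 \sim \chi^2(d)$ by the definition of the central chi-squared distribution; substituting $\|\mathbf{x}\|^2 = \sigma(t)^2\,\chi^2(d)$ into $D_r(\mathbf{x})$ collapses the $\sigma(t)^2/\sigma(t)^3$ factor into $\dot{\sigma}(t)/\sigma(t)$ and yields the announced form.

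The generated case is essentially the same calculation, but with a crucial subtlety worth emphasizing: the NSG $\mathbf{g}(\mathbf{y},t)$ is still defined using the \emph{real} log-density $\log p$ (since NSG-VD uses a diffusion model trained on real videos), while the sample $\mathbf{y}$ is drawn from $q$. Hence the functional form of $\mathbf{g}$ and $D_f$ is identical to $\mathbf{g}(\mathbf{x},t)$ and $D_r$, only with $\mathbf{x}$ replaced by $\mathbf{y}$. The distributional consequences then differ because $\mathbf{y}\sim\mathcal{N}(\boldsymbol{\mu},\sigma(t)^2\mathbf{I}_d)$: the numerator inherits the shifted mean $-\boldsymbol{\mu}/\sigma(t)^2$, and for the denominator $\|\mathbf{y}\|^2/\sigma(t)^2$ is a sum of squares of independent Gaussians with nonzero means $\boldsymbol{\mu}/\sigma(t)$, which by definition follows the noncentral chi-squared $\chi^2(d,\varphi)$ with $\varphi = \|\boldsymbol{\mu}\|^2/\sigma(t)^2$~\cite{abdel1954approximate}. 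Substituting this into $D_f(\mathbf{y})$ gives the second assertion.

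The main obstacle is not any deep step but rather bookkeeping: correctly tracking that the score $\nabla_{\mathbf{x}}\log p$ and derivative $\partial_t\log p$ are always those of the real density $p$, even when the argument is a generated sample, and carefully applying the definition of the noncentral chi-squared so that the noncentrality parameter $\varphi = \|\boldsymbol{\mu}\|^2/\sigma(t)^2$ emerges in the right place. Once this is handled, both claims follow by elementary Gaussian and chi-squared identities.
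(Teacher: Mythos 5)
Your proposal is correct and follows essentially the same route as the paper: the paper proves Proposition~\ref{prop: NSG} by first computing $\nabla_{\mathbf{x}}\log p = -\mathbf{x}/\sigma(t)^2$ and $-\partial_t\log p = d\dot\sigma/\sigma - \|\mathbf{x}\|^2\dot\sigma/\sigma^3$ for the Gaussian $p$ (its Propositions~\ref{prop: score} and~\ref{prop: score_dis}), substituting into the NSG definition, and then reading off the marginal laws exactly as you do (central $\chi^2(d)$ for the real case, noncentral $\chi^2(d,\varphi)$ for the shifted case, evaluating both arguments under the same real score $\log p$). You also correctly flagged the scaling ``subtlety'': the linear transform $\mathbf{x}\mapsto-\mathbf{x}/\sigma(t)^2$ of $\mathbf{x}\sim\mathcal{N}(\mathbf{0},\sigma(t)^2\mathbf{I}_d)$ has covariance $\sigma(t)^{-2}\mathbf{I}_d$ and (in the generated case) mean $-\boldsymbol{\mu}/\sigma(t)^2$ --- which is what the paper's own proof of Proposition~\ref{prop: score} derives --- whereas the displayed statement of Proposition~\ref{prop: NSG} writes $\sigma(t)^2\mathbf{I}_d$ and $-\boldsymbol{\mu}/\sigma(t)$; your values are the correct ones and the statement contains a typo carried over from Proposition~\ref{prop: score}.
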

 Proposition \ref{prop: NSG} reveals that the distribution shift \(\boldsymbol{\mu}\) in generated videos introduces deviations in both the numerator and denominator of the NSG, \ie, \textit{noncentral} Gaussian and chi-squared distributions. To quantify this deviation, we derive an upper bound on the squared distance between NSGs:
\begin{thm}\label{thm: bound of NSG}
    Let the real video distribution be  $\mathbf{x} {\sim} \mathcal{N}(\mathbf{0}, \sigma(t)^2 \mathbf{I}_d)$ and the generated video distribution be $\mathbf{y} {\sim} \mathcal{N}(\boldsymbol{\mu}, \sigma(t)^2 \mathbf{I}_d) $, respectively, where $\mathbf{I}_d {\in} \mathbb{R}^{d\times d}$ is an identity matrix and $\boldsymbol{\mu} {\neq} \mathbf{0} {\in} \mathbb{R}^d$ is the distribution shift. Given $\mathbf{G}(\mathbf{x}){=}\{\mathbf{g}(\mathbf{x},t)\}_{t=1}^T$,  denote $\varphi {=} {\|\boldsymbol{\mu}\|^2}/{\sigma(t)^2}$ and assume $\left|-\partial_t \log p(\mathbf{x}, t)+\lambda \right|\geq C>0$ and $\left|-\partial_t \log p(\mathbf{y}, t)+\lambda \right|\geq C>0$, with probability at least $1 - \delta$, we have
\begin{equation*}
    \left\| \mathbf{G}(\mathbf{x}) {-} \mathbf{G}(\mathbf{y}) \right\|^2 {\leq} \mathcal{O} \left(\frac{T}{C^4 \sigma(t)^2} \! \left[\varphi d +d^2 + \varphi + \log \frac{T}{\delta} \! \cdot \! (\varphi + d) + \log^2 \frac{T}{\delta}\right]\right).
\end{equation*}
\end{thm}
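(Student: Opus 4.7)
The strategy is to bound $\|\mathbf{G}(\mathbf{x})-\mathbf{G}(\mathbf{y})\|^2 = \sum_{t=1}^T \|\mathbf{g}(\mathbf{x},t)-\mathbf{g}(\mathbf{y},t)\|^2$ by a frame-wise deterministic inequality, then invoke Gaussian concentration and a union bound over the $T$ frames. First, I would use Proposition~\ref{prop: NSG} to write $\mathbf{g}(\mathbf{x},t) = -\mathbf{x}/(\sigma(t)^2 D_r(\mathbf{x}))$ and $\mathbf{g}(\mathbf{y},t) = -\mathbf{y}/(\sigma(t)^2 D_f(\mathbf{y}))$, and perform the telescoping identity
\begin{equation*}
\mathbf{g}(\mathbf{x},t)-\mathbf{g}(\mathbf{y},t) \;=\; \frac{1}{\sigma(t)^2}\left[\frac{\mathbf{y}-\mathbf{x}}{D_f(\mathbf{y})} \;+\; \mathbf{x}\,\frac{D_r(\mathbf{x})-D_f(\mathbf{y})}{D_r(\mathbf{x})\,D_f(\mathbf{y})}\right]
\end{equation*}
by adding and subtracting $\mathbf{x}/(\sigma(t)^2 D_f(\mathbf{y}))$. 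This cleanly separates the \emph{mean-shift} contribution $\mathbf{y}-\mathbf{x}$ from the \emph{denominator-mismatch} contribution, which by the explicit formulas simplifies to $D_r(\mathbf{x})-D_f(\mathbf{y}) = -\frac{\dot\sigma(t)}{\sigma(t)^3}(\|\mathbf{x}\|^2-\|\mathbf{y}\|^2)$.

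Second, I would apply the hypothesis $|D_r(\mathbf{x})|,|D_f(\mathbf{y})|\geq C$ together with the elementary inequality $\|a+b\|^2\leq 2\|a\|^2+2\|b\|^2$ to obtain the deterministic bound
\begin{equation*}
\|\mathbf{g}(\mathbf{x},t)-\mathbf{g}(\mathbf{y},t)\|^2 \;\lesssim\; \frac{\|\mathbf{y}-\mathbf{x}\|^2}{C^2\sigma(t)^4} \;+\; \frac{\dot\sigma(t)^2\,\|\mathbf{x}\|^2\,(\|\mathbf{x}\|^2-\|\mathbf{y}\|^2)^2}{C^4\sigma(t)^{10}}.
\end{equation*}
This reduces the theorem to high-probability control of three scalar functionals of the independent Gaussians $\mathbf{x}\sim\mathcal{N}(\mathbf{0},\sigma(t)^2\mathbf{I}_d)$ and $\mathbf{y}\sim\mathcal{N}(\boldsymbol{\mu},\sigma(t)^2\mathbf{I}_d)$: $\|\mathbf{y}-\mathbf{x}\|^2$, $\|\mathbf{x}\|^2$, and $\|\mathbf{x}\|^2-\|\mathbf{y}\|^2$. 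All are quadratic forms in Gaussian vectors, so standard tail bounds for central and non-central $\chi^2$ (or equivalently the Hanson--Wright inequality) apply.

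Third, I would instantiate these concentration inequalities at per-frame failure probability $\delta/T$. This gives, with the requisite probability, $\|\mathbf{y}-\mathbf{x}\|^2 \lesssim \sigma(t)^2(d+\varphi+\log(T/\delta))$, $\|\mathbf{x}\|^2 \lesssim \sigma(t)^2(d+\log(T/\delta))$, and $\|\mathbf{x}\|^2-\|\mathbf{y}\|^2$ concentrated around its mean $-\sigma(t)^2\varphi$ with sub-exponential fluctuations of order $\sigma(t)^2(\sqrt{(d+\varphi)\log(T/\delta)}+\log(T/\delta))$. Substituting these into the deterministic bound, expanding the square of $\|\mathbf{x}\|^2-\|\mathbf{y}\|^2$ around its mean, and collecting terms produces a per-frame estimate whose leading order is $\frac{1}{C^4\sigma(t)^2}[\varphi d + d^2 + \varphi + \log(T/\delta)(\varphi+d) + \log^2(T/\delta)]$, where residual $\dot\sigma(t)/\sigma(t)$ prefactors from $D_r-D_f$ are absorbed into the constant $C$ via the denominator hypothesis. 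Summing over $t=1,\ldots,T$ and taking a single union bound across the $T$ concentration events contributes the leading $T$ factor while turning $\log(T/(T\delta))$ back into $\log(T/\delta)$, yielding the stated bound.

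The main obstacle will be the third step, specifically controlling the cross-term $\|\mathbf{x}\|^2(\|\mathbf{x}\|^2-\|\mathbf{y}\|^2)^2$ without over-inflating the dimension dependence. A naive product of marginal tail bounds on $\|\mathbf{x}\|^2$ and on $\|\mathbf{x}\|^2+\|\mathbf{y}\|^2$ would introduce spurious terms such as $d^3$ or $d\varphi^2$ that do not appear in the target inequality. To obtain the advertised $\varphi d + d^2$ scaling, it is essential to first \textbf{center} $\|\mathbf{x}\|^2-\|\mathbf{y}\|^2$ around its mean $-\sigma(t)^2\varphi$ and use the Bernstein-type sub-exponential tail for the centered deviation, rather than bounding $\|\mathbf{x}\|^2$ and $\|\mathbf{y}\|^2$ separately. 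Careful bookkeeping of the $\sigma(t)$-powers across the two additive contributions, so that the $\sigma(t)^{-10}$ denominator in the second term cancels correctly against $\sigma(t)^{6}$ coming from $\|\mathbf{x}\|^2(\|\mathbf{x}\|^2-\|\mathbf{y}\|^2)^2$ to leave the advertised $1/\sigma(t)^2$, is the other piece of the proof where mistakes are easy to make.
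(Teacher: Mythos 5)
Your overall route is the one the paper itself takes in Appendix~\ref{sec: proof bound}: decompose frame-wise by adding and subtracting $\mathbf{x}/(\sigma(t)^2 D_f(\mathbf{y}))$, apply $|D_r|,|D_f|\geq C$ together with $\|a+b\|^2\leq 2\|a\|^2+2\|b\|^2$, bound the resulting chi-squared functionals of $\mathbf{x}$ and $\mathbf{y}$ (Propositions~\ref{prop: Dx_dis} and~\ref{prop: chi_xy}), and take a union bound at level $\delta/T$ across frames. The deterministic per-frame inequality you write down agrees with the paper's Eqn.~\eqref{eqn: upper_g} after substituting $D_r(\mathbf{x})-D_f(\mathbf{y})=-\frac{\dot{\sigma}(t)}{\sigma(t)^3}(\|\mathbf{x}\|^2-\|\mathbf{y}\|^2)$.

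The gap is in your third step, and it is one your proposed fix does not close. You correctly identify that $\|\mathbf{x}\|^2(\|\mathbf{x}\|^2-\|\mathbf{y}\|^2)^2$ threatens a $d\varphi^2$ term absent from the stated bound, but centering $\|\mathbf{x}\|^2-\|\mathbf{y}\|^2$ around its mean $-\sigma(t)^2\varphi$ cannot remove it: centering kills fluctuation cross-terms, not the deterministic mean-squared contribution $\sigma(t)^4\varphi^2$ to $(\|\mathbf{x}\|^2-\|\mathbf{y}\|^2)^2$ (indeed $\mathbb{E}\big[(\|\mathbf{x}\|^2-\|\mathbf{y}\|^2)^2\big]=\sigma(t)^4(\varphi^2+4d+4\varphi)$). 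Multiplied by $\|\mathbf{x}\|^2\approx\sigma(t)^2 d$, this gives $\sigma(t)^6 d\varphi^2$, so the second additive piece of your deterministic bound is of order $\frac{\dot{\sigma}(t)^2}{C^4\sigma(t)^4}\,d\varphi^2$, which for large $\varphi$ strictly dominates the advertised $\frac{\varphi d}{C^4\sigma(t)^2}$. In fact, the paper's own proof avoids any $\varphi^2$ only because it substitutes Proposition~\ref{prop: Dx_dis}'s bound on $|D_r(\mathbf{x})-D_f(\mathbf{y})|$ to the \emph{first} power into the term $\frac{2|D_f-D_r|^2\|\mathbf{x}\|^2}{C^4\sigma(t)^4}$, i.e.\ drops the square in Eqn.~\eqref{eqn: first}; your version correctly keeps the square, so carried through honestly it does not reproduce the stated bound. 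The obstacle you flag is therefore genuine and cannot be removed by tighter bookkeeping along this decomposition.
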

Theorem \ref{thm: bound of NSG} reveals that the bound of the squared distance between NSG features of real and fake data will be smaller if the distribution shift term $\varphi = {\|\boldsymbol{\mu}\|^2}/{\sigma(t)^2}$ is closer to zero for a given $\delta$. This formalizes the intuition that small distribution shifts produce small geometric distortions in NSG space, while significant deviations in synthetic content lead to large separations from real data. Under the Gaussian kernel, this implies that the real data have a larger $k(\mathbf{G}(\mathbf{x}), \mathbf{G}(\mathbf{y}))$ than the fake data since the distribution shift term $\varphi=0$ for real data. Therefore, when substituted into Eqn. (\ref{eqn: MMD_NSG}), the MMD between NSG features of real videos is smaller than that between real and generated videos.

\section{Experiments}
\label{sec: experiments}

\textbf{Datasets.} We evaluate our methods on the GenVideo benchmark \cite{chen2024demamba}, a large-scale dataset for AI-generated video detection that includes diverse real-world videos and synthetic content from multiple generative models. We use Kinetics-400 \cite{kay2017kinetics} as the real video source, SEINE \cite{chen2023seine} or Pika \cite{wang2024pika} as the AI-generated videos for training. The test set comprises MSR-VTT \cite{xu2016msr} and 10 diverse AI-generated datasets from different generation paradigms. More details are in Appendix \ref{sec:details on dataset}.

\textbf{Evaluation Metrics.} We evaluate the performance of video detection on Recall, Accuracy, F1-score \cite{chinchor1993muc} and AUROC \cite{huang2005using} metrics. More details are provided in Appendix~\ref{sec:details on metric}. We use \textbf{bold} numbers to indicate the best results and \underline{underlined} numbers to denote the second-best results in tables.

\textbf{Baselines}. We compare our NSG-VD with following baselines: TALL~\cite{xu2023tall}, NPR~\cite{tan2024rethinking}, STIL~\cite{gu2021spatiotemporal}, and Demamba~\cite{chen2024demamba}. 
These baselines are implemented based on the codebase provided by Demamba~\cite{chen2024demamba}.

\subsection{Comparisons on Standard Evaluation}

We start by comparing our NSG-VD with baselines using $10,000$ real videos from Kinetics-400 and $10,000$ generated videos from Pika (Table \ref{tab: standard Pika}) and SENIE (Table \ref{tab: standard SENIE}) for training, respectively.

\textbf{Results on Trained with Kinetics-400 and Pika.}  From Table \ref{tab: standard Pika}, existing methods exhibit critical limitations. For instance, Demamba struggles with generative paradigms like HotShot ($40.60\%$ Recall) and Sora ($48.21\%$ Recall), while NPR shows unstable performance with Accuracy ranging from $57.20\%$ to $98.20\%$. TALL fails on synthetic outliers (\eg, $25.00\%$ Recall on Sora) and STIL collapses completely on critical cases (\eg, $1.40\%$ Recall on HotShot and $1.79\%$ Recall on Sora), revealing limitations of their inherent dependencies on generator-specific artifacts.

In contrast, our NSG-VD achieves state-of-the-art performance across all metrics, significantly outperforming baselines despite not being pre-trained on large-scale videos. Remarkably, NSG-VD demonstrates exceptional reliability on challenging closed-source generators like Sora ($78.57\%$ Recall vs. $48.21\%$ for Demamba) and emerging paradigms like HotShot ($92.50\%$ Recall vs. $40.60\%$ for Demamba), and maintains reliability across other diverse domains (\eg, MorphStudio, MoonValley). Notably, our NSG-VD achieves $16.00\% \uparrow$ average Recall and $10.75\% \uparrow$ F1-score over Demamba, and $55.05\% \uparrow$ F1-score over STIL. These results confirm its generalization across both open-source and closed-source generated models, highlighting the advantages of physics-driven modeling.  

\begin{table}[t]
    \vspace{-23pt}
    \caption{Comparisons with baselines on \textit{a standard evaluation} (\%), where we train all models with $10, 000$ real and generated videos from Kinetics-400 and Pika, respectively.}
    % \vspace{-5pt}
    \label{tab: standard Pika}
    \begin{threeparttable}
    \resizebox{1.0\linewidth}{!}{
    \begin{tabular}{c|c|cccccccccc|c}
    \toprule
       \multirow{2}{*}{Method} & \multirow{2}{*}{Metric} & Model & Morph & Moon & \multirow{2}{*}{HotShot} & \multirow{2}{*}{Show1} & \multirow{2}{*}{Gen2} & \multirow{2}{*}{Crafter} & \multirow{2}{*}{Lavie} & \multirow{2}{*}{Sora} & Wild & \multirow{2}{*}{ Avg.} \\
       &  & Scope & Studio & Valley &  & & & & & & Scrape &  \\ \midrule
        \multirow{4}{*}{DeMamba} & Recall & 87.00 & 93.60 & 98.80 & 40.60 & 48.40 & 98.00 & 88.40 & 59.00 & 48.21 & 58.20 & \cellcolor{blue!8}\underline{72.02} \\
         & Accuracy & 91.70 & 95.00 & 97.60 & 68.50 & 72.40 & 97.20 & 92.40 & 77.70 & 72.32 & 77.30 & \cellcolor{blue!8}\underline{84.21} \\
         & F1 & 91.29 & 94.93 & 97.63 & 56.31 & 63.68 & 97.22 & 92.08 & 72.57 & 63.53 & 71.94 & \cellcolor{blue!8}\underline{80.12} \\
         & AUROC & 98.04 & 98.82 & 99.68 & 87.84 & 90.12 & 99.46 & 97.81 & 91.32 & 88.36 & 87.38 & \cellcolor{blue!8}93.88 \\
        \midrule
        \multirow{4}{*}{NPR} & Recall & 61.20 & 80.00 & 98.00 & 16.00 & 33.00 & 91.20 & 80.60 & 34.60 & 35.71 & 43.20 & \cellcolor{blue!8}57.35 \\
         & Accuracy & 79.80 & 89.20 & 98.20 & 57.20 & 65.70 & 94.80 & 89.50 & 66.50 & 67.86 & 70.80 & \cellcolor{blue!8}77.96 \\
         & F1 & 75.18 & 88.11 & 98.20 & 27.21 & 49.03 & 94.61 & 88.47 & 50.81 & 52.63 & 59.67 & \cellcolor{blue!8}68.39 \\
         & AUROC & 93.05 & 97.18 & 99.66 & 82.97 & 90.50 & 99.13 & 97.87 & 87.54 & 90.47 & 91.84 & \cellcolor{blue!8}93.02 \\
        \midrule
        \multirow{4}{*}{TALL} & Recall & 51.20 & 65.20 & 93.40 & 32.00 & 61.60 & 94.80 & 81.80 & 49.20 & 25.00 & 53.60 & \cellcolor{blue!8}60.78 \\
         & Accuracy & 75.10 & 82.10 & 96.20 & 65.50 & 80.30 & 96.90 & 90.40 & 74.10 & 61.61 & 76.30 & \cellcolor{blue!8}79.85 \\
         & F1 & 67.28 & 78.46 & 96.09 & 48.12 & 75.77 & 96.83 & 89.50 & 65.51 & 39.44 & 69.34 & \cellcolor{blue!8}72.63 \\
         & AUROC & 95.82 & 97.14 & 99.73 & 92.55 & 97.36 & 99.79 & 99.09 & 94.84 & 86.67 & 93.75 & \cellcolor{blue!8}\underline{95.67} \\
        \midrule
        \multirow{4}{*}{STIL} & Recall & 73.80 & 70.80 & 43.40 & 1.40 & 2.00 & 45.00 & 13.20 & 7.20 & 1.79 & 11.60 & \cellcolor{blue!8}27.02 \\
         & Accuracy & 86.90 & 85.40 & 71.70 & 50.70 & 51.00 & 72.50 & 56.60 & 53.60 & 50.89 & 55.80 & \cellcolor{blue!8}63.51 \\
         & F1 & 84.93 & 82.90 & 60.53 & 2.76 & 3.92 & 62.07 & 23.32 & 13.43 & 3.51 & 20.79 & \cellcolor{blue!8}35.82 \\
         & AUROC & 96.43 & 97.77 & 99.34 & 86.66 & 90.56 & 98.88 & 97.04 & 88.16 & 92.57 & 87.52 & \cellcolor{blue!8}93.49 \\
        \midrule
        \multirow{4}{*}{\shortstack{NSG-VD\\(Ours)}} & Recall & 68.33 & 98.33 & 100.00 & 92.50 & 87.50 & 80.00 & 98.33 & 94.17 & 78.57 & 82.50 & \cellcolor{pink!30}\textbf{88.02} \\
         & Accuracy & 81.67 & 98.33 & 96.67 & 91.67 & 90.83 & 88.33 & 95.83 & 94.17 & 88.39 & 88.75 & \cellcolor{pink!30}\textbf{91.46} \\
         & F1 & 78.85 & 98.33 & 96.77 & 91.74 & 90.52 & 87.27 & 95.93 & 94.17 & 87.13 & 88.00 & \cellcolor{pink!30}\textbf{90.87} \\
         & AUROC & 92.26 & 98.66 & 98.15 & 94.45 & 96.38 & 94.83 & 98.16 & 97.41 & 96.40 & 94.73 & \cellcolor{pink!30}\textbf{96.14} \\
    \bottomrule
    \end{tabular}
    }
    \end{threeparttable}
    \vspace{-11pt}
\end{table}

\begin{table}[t]
    \caption{Comparisons with baselines on \textit{a standard evaluation} (\%), where we train all models with $10, 000$ real and generated videos from Kinetics-400 and SEINE, respectively.}
    % \vspace{-7.5pt}
    \label{tab: standard SENIE}
    \begin{threeparttable}
    % \LARGE
    \resizebox{1.0\linewidth}{!}{
    \begin{tabular}{c|c|cccccccccc|c}
     \toprule
       \multirow{2}{*}{Method} & \multirow{2}{*}{Metric} & Model & Morph & Moon & \multirow{2}{*}{HotShot} & \multirow{2}{*}{Show1} & \multirow{2}{*}{Gen2} & \multirow{2}{*}{Crafter} & \multirow{2}{*}{Lavie} & \multirow{2}{*}{Sora} & Wild & \multirow{2}{*}{ Avg.} \\
       &  & Scope & Studio & Valley &  & & & & & & Scrape &  \\ \midrule
        \multirow{4}{*}{DeMamba} & Recall & 47.40 & 87.80 & 88.20 & 77.40 & 75.00 & 85.60 & 91.60 & 68.60 & 42.86 & 48.00 & \cellcolor{blue!8}\underline{71.25} \\
         & Accuracy & 72.80 & 93.00 & 93.20 & 87.80 & 86.60 & 91.90 & 94.90 & 83.40 & 68.75 & 73.10 & \cellcolor{blue!8}\underline{84.54} \\
         & F1 & 63.54 & 92.62 & 92.84 & 86.38 & 84.84 & 91.36 & 94.73 & 80.52 & 57.83 & 64.09 & \cellcolor{blue!8}\underline{80.87} \\
         & AUROC & 88.29 & 98.39 & 98.76 & 97.84 & 96.89 & 98.76 & 99.35 & 96.87 & 80.93 & 88.11 & \cellcolor{blue!8}94.42 \\
        \midrule
        \multirow{4}{*}{NPR} & Recall & 46.40 & 76.40 & 69.80 & 63.80 & 56.00 & 75.00 & 83.80 & 58.80 & 35.71 & 27.40 & \cellcolor{blue!8}59.31 \\
         & Accuracy & 71.40 & 86.40 & 83.10 & 80.10 & 76.20 & 85.70 & 90.10 & 77.60 & 66.96 & 61.90 & \cellcolor{blue!8}77.95 \\
         & F1 & 61.87 & 84.89 & 80.51 & 76.22 & 70.18 & 83.99 & 89.43 & 72.41 & 51.95 & 41.83 & \cellcolor{blue!8}71.33 \\
         & AUROC & 85.73 & 96.01 & 93.79 & 91.44 & 89.96 & 95.13 & 96.87 & 89.46 & 84.15 & 76.66 & \cellcolor{blue!8}89.92 \\
        \midrule
        \multirow{4}{*}{TALL} & Recall & 58.60 & 75.00 & 79.40 & 60.20 & 62.00 & 77.80 & 88.20 & 43.80 & 33.93 & 35.80 & \cellcolor{blue!8}61.47 \\
         & Accuracy & 78.80 & 87.00 & 89.20 & 79.60 & 80.50 & 88.40 & 93.60 & 71.40 & 66.07 & 67.40 & \cellcolor{blue!8}80.20 \\
         & F1 & 73.43 & 85.23 & 88.03 & 74.69 & 76.07 & 87.02 & 93.23 & 60.50 & 50.00 & 52.34 & \cellcolor{blue!8}74.05 \\
         & AUROC & 97.10 & 98.12 & 98.63 & 96.37 & 96.45 & 97.76 & 99.38 & 94.80 & 83.35 & 89.45 & \cellcolor{blue!8}\underline{95.14} \\
        \midrule
        \multirow{4}{*}{STIL} & Recall & 28.60 & 57.40 & 78.40 & 46.80 & 18.80 & 66.40 & 69.00 & 24.80 & 14.29 & 19.00 & \cellcolor{blue!8}42.35 \\
         & Accuracy & 64.20 & 78.60 & 89.10 & 73.30 & 59.30 & 83.10 & 84.40 & 62.30 & 57.14 & 59.40 & \cellcolor{blue!8}71.08 \\
         & F1 & 44.41 & 72.84 & 87.79 & 63.67 & 31.60 & 79.71 & 81.56 & 39.68 & 25.00 & 31.88 & \cellcolor{blue!8}55.81 \\
         & AUROC & 95.53 & 97.91 & 99.40 & 96.49 & 92.79 & 98.06 & 98.86 & 91.00 & 92.79 & 86.58 & \cellcolor{blue!8}94.94 \\
        \midrule
        \multirow{4}{*}{\shortstack{NSG-VD\\(Ours)}} & Recall & 91.67 & 100.00 & 100.00 & 100.00 & 100.00 & 98.33 & 100.00 & 97.50 & 94.64 & 89.17 & \cellcolor{pink!30}\textbf{97.13} \\
         & Accuracy & 82.50 & 88.33 & 89.58 & 84.58 & 86.25 & 87.08 & 86.67 & 87.92 & 89.29 & 78.33 & \cellcolor{pink!30}\textbf{86.05} \\
         & F1 & 83.97 & 89.55 & 90.57 & 86.64 & 87.91 & 88.39 & 88.24 & 88.97 & 89.83 & 80.45 & \cellcolor{pink!30}\textbf{87.45} \\
         & AUROC & 90.67 & 97.62 & 98.38 & 95.88 & 96.69 & 97.87 & 97.64 & 95.09 & 96.14 & 88.65 & \cellcolor{pink!30}\textbf{95.46} \\
    \bottomrule
    \end{tabular}
    }
    \end{threeparttable}
\vspace{-11pt}
\end{table}

\textbf{Results on Trained with Kinetics-400 and SENIE.} As shown in Table \ref{tab: standard SENIE}, our NSG-VD achieves superior detection performance across all metrics compared to baselines. 
Notably, it attains near-perfect Recall ($\geq 98.33\%$) on models like MoonValley, HotShot and Show1, while maintaining balanced performance across diverse domains (\eg, ModelScope, WildScrape). 
These results are consistent with the results on Pika in Table \ref{tab: standard Pika}, further demonstrating the effectiveness of our proposed method. In contrast, existing baselines exhibit pronounced limitations under this setting. Demamba’s performance is more constrained ($\leq 85.60\%$ Recall on most models), and NPR’s F1-score varies widely ($41.83\% \sim 89.43\%$). TALL shows instability on models like Sora ($33.93\%$ Recall), while STIL fails entirely on critical cases (\eg, $19.00\%$ Recall on WildScrape). These failures highlight the fragility of artifact-based approaches in capturing subtle spatiotemporal inconsistencies.

Quantitatively, NSG-VD surpasses Demamba by $25.88\% \uparrow$ in average Recall ($97.13\%$ vs. $71.25\%$) and NPR by $16.12\% \uparrow$ in average F1-score ($87.45\%$ vs. $71.33\%$). On closed-source models like Sora, it achieves $94.64\%$ Recall—nearly twice Demamba's ($42.86\%$) and sextuple STIL’s ($14.29\%$). This improvement highlights NSG-VD’s sensitivity to synthetic anomalies, especially in near-photorealistic videos (\eg, Sora), where subtle spatiotemporal inconsistencies are amplified by the NSG but not effectively captured by baselines, indicating reliable detection across diverse generation paradigms.

\subsection{Comparisons on Challenging Data-Imbalanced Scenarios}

In real-world scenarios, natural videos are often abundant and accessible, while collecting sufficient AI-generated videos remains challenging due to rapidly evolving generation techniques. To thoroughly assess reliability under these conditions, we train all models using $10,000$ Kinetics-400 real videos and only $1,000$ SENIE-generated videos.  
As shown in Table \ref{tab: unbalanced Pika}, all baselines exhibit significant limitations. Demamba fails catastrophically on challenging generators like Sora ($33.93\%$ Recall) and WildScrape ($43.20\%$ Recall), while NPR exhibits fluctuations in Accuracy  ($55.36\% \sim 83.00\%$). TALL fails completely on emerging paradigms like WildScrape ($18.20\%$ Recall) and Lavie ($22.60\%$ Recall), and STIL shows highly variable performance, \eg, $25.00\% \sim 70.00\%$ Recall. Such instability indicates over-reliance on synthetic data volume or sensitivity to superficial artifacts.  

In contrast, NSG-VD achieves strong generalization across 10 diverse generations. Notably, NSG-VD attains superior performance on critical test cases: $82.14\%$ Recall on Sora (vs. $10.71\% \sim 33.93\%$ for baselines) and $81.67\%$ Recall on WildScrape (vs. $12.20\% \sim 43.20\%$). Critically, NSG-VD achieves $29.12\% \uparrow$ higher average Recall than Demamba and $38.08\% \uparrow$ higher F1-score than TALL. These results confirm NSG-VD's reliable generalization from limited synthetic data without compromising discriminative power, demonstrating that adherence to universal physical principles outperforms domain-specific feature reliance even when synthetic training data is severely constrained.

\begin{table}[t]
\vspace{-16pt}
    \caption{Comparisons with baselines under \textit{data-imbalanced scenarios} (\%), where we train all models with $10, 000$ real and $1,000$ generated videos from Kinetics-400 and SEINE, respectively.}
    % \vspace{-7.5pt}
    \label{tab: unbalanced Pika}
    \begin{threeparttable}
    % \LARGE
    \resizebox{1.0\linewidth}{!}{
    \begin{tabular}{c|c|cccccccccc|c}
\toprule
       \multirow{2}{*}{Method} & \multirow{2}{*}{Metric} & Model & Morph & Moon & \multirow{2}{*}{HotShot} & \multirow{2}{*}{Show1} & \multirow{2}{*}{Gen2} & \multirow{2}{*}{Crafter} & \multirow{2}{*}{Lavie} & \multirow{2}{*}{Sora} & Wild & \multirow{2}{*}{ Avg.} \\
       &  & Scope & Studio & Valley &  & & & & & & Scrape &  \\ \midrule
        \multirow{4}{*}{DeMamba} & Recall & 56.80 & 80.40 & 82.60 & 65.60 & 63.80 & 78.20 & 83.00 & 53.40 & 33.93 & 43.20 & \cellcolor{blue!8}\underline{64.09} \\
         & Accuracy & 78.10 & 89.90 & 91.00 & 82.50 & 81.60 & 88.80 & 91.20 & 76.40 & 65.18 & 71.30 & \cellcolor{blue!8}\underline{81.60} \\
         & F1 & 72.17 & 88.84 & 90.17 & 78.94 & 77.62 & 87.47 & 90.41 & 69.35 & 49.35 & 60.08 & \cellcolor{blue!8}\underline{76.44} \\
         & AUROC & 93.01 & 98.17 & 98.90 & 96.42 & 95.36 & 98.38 & 98.74 & 95.50 & 86.51 & 87.49 & \cellcolor{blue!8}\underline{94.85} \\
        \midrule
        \multirow{4}{*}{NPR} & Recall & 25.40 & 52.20 & 42.40 & 26.00 & 21.40 & 48.20 & 66.60 & 22.00 & 10.71 & 12.20 & \cellcolor{blue!8}32.71 \\
         & Accuracy & 62.40 & 75.80 & 70.90 & 62.70 & 60.40 & 73.80 & 83.00 & 60.70 & 55.36 & 55.80 & \cellcolor{blue!8}66.09 \\
         & F1 & 40.32 & 68.32 & 59.30 & 41.07 & 35.08 & 64.78 & 79.67 & 35.89 & 19.35 & 21.63 & \cellcolor{blue!8}46.54 \\
         & AUROC & 83.64 & 94.85 & 92.44 & 86.68 & 83.77 & 94.33 & 95.77 & 84.34 & 84.60 & 70.52 & \cellcolor{blue!8}87.10 \\
        \midrule
        \multirow{4}{*}{TALL} & Recall & 28.20 & 45.20 & 41.20 & 26.20 & 33.80 & 60.20 & 60.20 & 22.60 & 25.00 & 18.20 & \cellcolor{blue!8}36.08 \\
         & Accuracy & 64.00 & 72.50 & 70.50 & 63.00 & 66.80 & 80.00 & 80.00 & 61.20 & 62.50 & 59.00 & \cellcolor{blue!8}67.95 \\
         & F1 & 43.93 & 62.17 & 58.27 & 41.46 & 50.45 & 75.06 & 75.06 & 36.81 & 40.00 & 30.74 & \cellcolor{blue!8}51.40 \\
         & AUROC & 93.34 & 94.56 & 94.25 & 91.64 & 91.63 & 94.99 & 97.60 & 91.46 & 84.92 & 85.20 & \cellcolor{blue!8}91.96 \\
        \midrule
        \multirow{4}{*}{STIL} & Recall & 25.80 & 64.80 & 68.40 & 46.20 & 29.20 & 67.20 & 70.00 & 44.40 & 26.79 & 25.00 & \cellcolor{blue!8}46.78 \\
         & Accuracy & 62.70 & 82.20 & 84.00 & 72.90 & 64.40 & 83.40 & 84.80 & 72.00 & 63.39 & 62.30 & \cellcolor{blue!8}73.21 \\
         & F1 & 40.89 & 78.45 & 81.04 & 63.03 & 45.06 & 80.19 & 82.16 & 61.33 & 42.25 & 39.87 & \cellcolor{blue!8}61.43 \\
         & AUROC & 85.14 & 95.74 & 96.87 & 89.46 & 83.22 & 96.09 & 96.23 & 90.36 & 89.89 & 78.99 & \cellcolor{blue!8}90.20 \\
        \midrule
        \multirow{4}{*}{\shortstack{NSG-VD\\(Ours)}} & Recall & 85.83 & 99.17 & 100.00 & 99.17 & 97.50 & 95.83 & 99.17 & 91.67 & 82.14 & 81.67 & \cellcolor{pink!30}\textbf{93.21} \\
         & Accuracy & 84.58 & 92.50 & 93.75 & 89.58 & 89.58 & 90.83 & 92.50 & 90.00 & 86.61 & 81.67 & \cellcolor{pink!30}\textbf{89.16} \\
         & F1 & 84.77 & 92.97 & 94.12 & 90.49 & 90.35 & 91.27 & 92.97 & 90.16 & 85.98 & 81.67 & \cellcolor{pink!30}\textbf{89.48} \\
         & AUROC & 90.76 & 98.18 & 98.18 & 95.03 & 95.48 & 96.97 & 96.53 & 95.11 & 95.73 & 87.13 & \cellcolor{pink!30}\textbf{94.91} \\
    \bottomrule
    \end{tabular}
    }
    \end{threeparttable}
\vspace{-10pt}
\end{table}

\subsection{Impact of Spatial Gradients and Temporal Derivatives for NSG-VD}

\begin{wraptable}[7]{r}{0.445\textwidth}
\vspace{-1.45em}
    % \hspace{-0.13in}
    \caption{Impact of spatial gradients and temporal derivatives on average metrics (\%).}
    \vspace{-1.1em}
    \label{tab: Impact of Spatial and Temporal}
    \begin{center}
    \begin{threeparttable}
    \LARGE
    \renewcommand{\arraystretch}{1.45}
    \resizebox{\linewidth}{!}{
    \begin{tabular}{l|cccc}
    \toprule
        Method & Recall & Accuracy & F1 & AUROC  \\ \midrule
        Spatial Gradients &  87.99 & 82.84 & 83.40 & 91.85  \\
        Temporal Derivatives &  60.35 & 71.09 & 66.97 & 78.95  \\
        \rowcolor{pink!30} NSG-VD (Ours) & \textbf{88.02} & \textbf{91.46} & \textbf{90.87} &  \textbf{96.14} \\ \bottomrule
    \end{tabular}
    }
    \end{threeparttable}
    \end{center}
    \vspace{-0.3em}  
\end{wraptable}

To investigate the impact of the spatial gradients $\nabla_{\mathbf{x}} \log p(\mathbf{x}, t)$ and temporal derivatives $\partial_t \log p(\mathbf{x}, t)$ for our NSG-VD, we evaluate these components as independent detection statistics for AI-generated video detection. To this end, we train these separate models 
with $10,000$ real videos from Kinetics-400 and generated videos from Pika. 
From Table \ref{tab: Impact of Spatial and Temporal}, the spatial gradient achieves moderate performance (\eg, \(87.99\% \) Recall, \(83.40\% \) F1-score), suggesting its ability to capture spatial anomalies, which may arise from its sensitivity to localized variations in texture or geometry. The temporal derivative, however, shows limited detection power (\eg, \(60.35\% \) Recall, \(66.97\% \) F1-score), likely due to its sensitivity to transient noise in dynamic modeling. In contrast, our NSG-VD integrating both components achieves significantly enhanced performance (\eg, \(88.02\% \) Recall, \(90.87\%\) F1-score). This demonstrates that the interplay between spatial gradients and temporal derivatives formalized via physical conservation principles is critical for video detection.  

\subsection{Impact of Decision Threshold for NSG-VD}
\begin{wrapfigure}[10]{r}{0.44\textwidth}
\centering
\vspace{-14pt}
\includegraphics[width=0.99\linewidth]{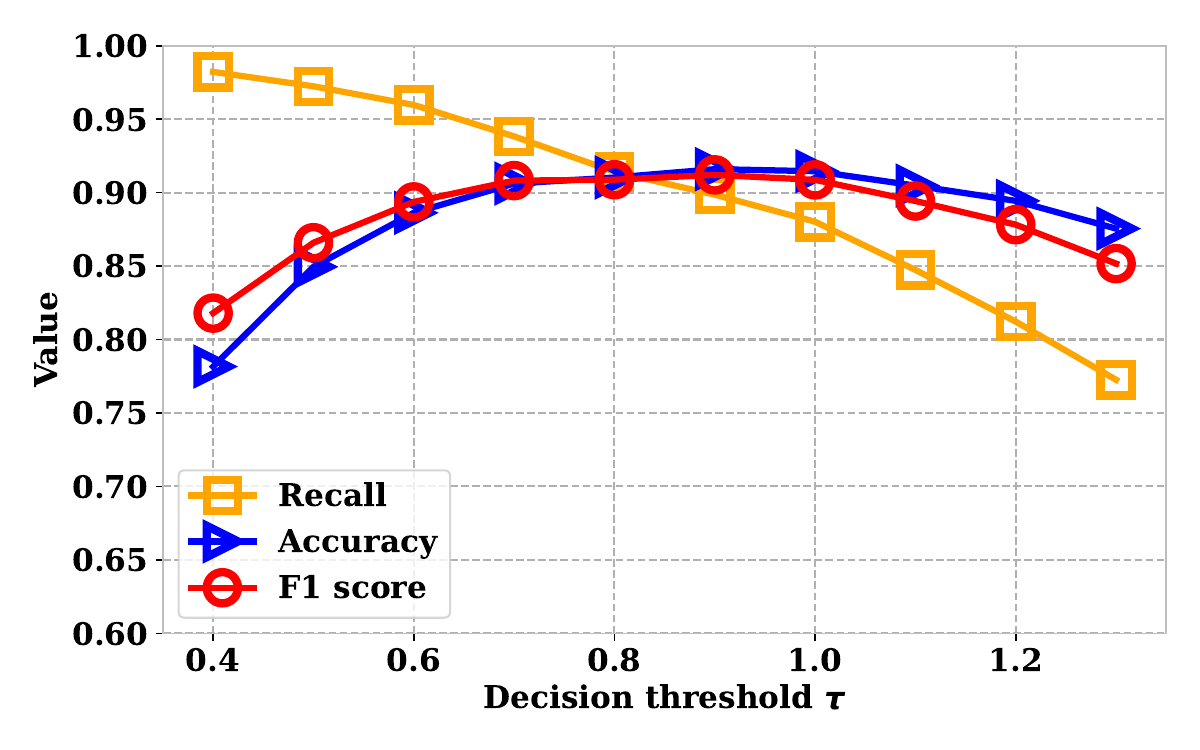}
\vspace{-17pt}
    % \vspace{-0.7em}
    \caption{Impact of decision threshold.
    }
    \label{fig: Threshold}
\end{wrapfigure}
We evaluate the decision threshold $\tau$ in Eqn. (\ref{eqn: decision}) for NSG-VD by testing $\tau \in [0.4, 1.3]$ under the same settings as Table \ref{tab: standard Pika}. As shown in Figure \ref{fig: Threshold}, our NSG-VD maintains remarkably stable performance across a wide range of $\tau$ values without requiring fine-grained tuning. Specifically, NSG-VD consistently shows high detection performance as $\tau\in[0.7, 1.1]$ for average Recall, Accuracy and F1-Score across diverse generators. These results indicate that NSG features create a clear separation between real and fake distributions. We set $\tau=1.0$ as the default throughout all settings.

\section{Conclusion}
In this paper, we propose a physics-driven AI-generated video detection paradigm by modeling spatiotemporal dynamics through the Normalized Spatiotemporal Gradient (NSG), a novel statistic based on probability flow conservation principles. Leveraging pre-trained diffusion models, we propose an NSG-based video detection method (NSG-VD). Theoretical analyses and extensive experiments validate the superiority of our NSG-VD in detecting advanced generated videos.

\section*{Acknowledgements}
This work was partially supported by the Joint Funds of the National Natural Science Foundation of China (Grant No.U24A20327), RGC Young Collaborative Research Grant No. C2005-24Y, RGC General Research Fund No. 12200725, and NSFC General Program No. 62376235.

% \newpage

{
    % \bibliographystyle{abbrv} % use natbib
    % \small
    \bibliographystyle{unsrt}
    \bibliography{ref}
}

%%%%%%%%%%%%%%%%%%%%%%%%%%%%%%%%%%%%%%%%%%%%%%%%%%%%%%%%%%%%

\newpage
\appendix

\begin{leftline}
	{
		\LARGE{\textsc{Appendix}}
	}
\end{leftline}

\etocdepthtag.toc{mtappendix}
\etocsettagdepth{mtchapter}{none}
\etocsettagdepth{mtappendix}{subsection}

{
    \hypersetup{linkcolor=black}
        \footnotesize\tableofcontents
}

\newpage

\section{Theoretical Analysis}
\label{sec:proofs}

\subsection{Basic Theorems and Corollaries Related to Statistics}

We start to provide some basic theoretical results, laying the foundation for establishing the bounds of the statistics in Appendix \ref{sec: Upper Bounds for Gradients} and \ref{sec: proof bound}.
\begin{thm}\label{lemma: non chi-squared}
Let $X \sim \chi^2(d, \varphi)$ follow a \textbf{noncentral} chi-squared distribution with $d$ degrees of freedom and noncentrality parameter $\varphi$. For any $t > 0$, the following tail bounds hold: 
\begin{align*}
    P\left\{X - (d + \varphi) \geq 2\sqrt{(d + 2\varphi)t} + 2t\right\} \leq e^{-t}, \\
    P\left\{X - (d + \varphi) \leq -2\sqrt{(d + 2\varphi)t}\right\} \leq e^{-t}.
\end{align*}
\end{thm}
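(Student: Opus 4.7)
The plan is to prove both inequalities via the Cram\'er--Chernoff method, treating $X - (d + \varphi)$ as a sub-gamma random variable whose parameters absorb the noncentrality. The starting point is the closed-form moment generating function of the noncentral chi-squared, obtained from the representation $X = \|\mathbf{Z} + \boldsymbol{\mu}\|^2$ with $\mathbf{Z} \sim \mathcal{N}(\mathbf{0}, \mathbf{I}_d)$ and $\|\boldsymbol{\mu}\|^2 = \varphi$:
\[
M_X(s) = (1 - 2s)^{-d/2} \exp\!\left(\frac{\varphi s}{1 - 2s}\right), \qquad s < 1/2.
\]
A short calculation using the power series of $(1-2s)^{-1}$ gives $M_X$ directly, or one may invoke it as standard.

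Next I would center the cumulant generating function at the mean $d + \varphi$ and control it uniformly. Using the identity $\frac{\varphi s}{1 - 2s} - \varphi s = \frac{2\varphi s^2}{1 - 2s}$ together with the classical Laurent--Massart inequality $-\tfrac{1}{2}\log(1 - 2s) - s \leq \frac{s^2}{1 - 2s}$ on $(0, 1/2)$, I would obtain
\[
\psi(s) := \log M_X(s) - s(d + \varphi) \leq \frac{(d + 2\varphi)\, s^2}{1 - 2s}.
\]
This exhibits $X - (d + \varphi)$ as sub-gamma in the Boucheron--Lugosi--Massart sense with variance factor $\nu = 2(d + 2\varphi)$ and scale $c = 2$. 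Applying Markov's inequality to $e^{sX}$ and choosing the standard optimizer $s = u/(\nu + cu) \in (0, 1/2)$ yields, after substituting $u = 2\sqrt{(d + 2\varphi)t} + 2t$, the upper-tail bound. For the lower tail I would repeat the argument with $s < 0$: setting $s = -r$ for $r>0$, the denominator becomes $1 + 2r \geq 1$, so the analogous control collapses to $\psi(-r) \leq (d + 2\varphi)\, r^2$, making $X - (d + \varphi)$ sub-Gaussian on the downside with variance proxy $2(d + 2\varphi)$. Standard Chernoff optimization then produces the $2\sqrt{(d + 2\varphi)t}$ deviation without the linear-in-$t$ correction, matching the stated claim.

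The main obstacle is purely bookkeeping: verifying the elementary inequality $-\tfrac{1}{2}\log(1 - 2s) - s \leq \frac{s^2}{1-2s}$ (routine by comparing the series $\sum_{k\geq 2} (2s)^k/(2k)$ against $\sum_{k\geq 2} (2s)^k/4$) and confirming that the noncentrality term $\frac{2\varphi s^2}{1 - 2s}$ absorbs exactly into the variance factor so that the sharp constant $d + 2\varphi$ appears inside the square root, rather than a looser expression such as $d + c\varphi$ with $c > 2$. The overall argument is the natural noncentral lift of the Laurent--Massart proof, with the role of $\varphi$ being to inflate the effective dimension from $d$ to $d + 2\varphi$ both in the mean and in the variance proxy.
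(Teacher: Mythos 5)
Your proposal follows exactly the same route as the paper's proof: the closed-form MGF, centering the CGF at $d+\varphi$, the Laurent--Massart inequality $-\tfrac{1}{2}\log(1-2s)-s \le \tfrac{s^2}{1-2s}$ on $(0,1/2)$ to get the sub-gamma bound $\psi(s)\le \tfrac{(d+2\varphi)s^2}{1-2s}$, and a sub-Gaussian bound for $s<0$, followed by standard Chernoff optimization (the paper offloads the optimization to a Birg\'e--Massart-style lemma, but it is the same computation you describe).

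One small gap worth flagging: for the lower tail you write that ``the analogous control collapses to $\psi(-r)\le (d+2\varphi)r^2$'' because the denominator $1+2r\ge 1$. That reasoning correctly handles the noncentrality term $\tfrac{2\varphi s^2}{1-2s}\le 2\varphi s^2$ when $s<0$, but it does not dispose of the $d$-part: the inequality $-s-\tfrac{1}{2}\log(1-2s)\le \tfrac{s^2}{1-2s}$ is in fact \emph{false} on $(-1/2,0)$ (the auxiliary function $-s-\tfrac{1}{2}\log(1-2s)-\tfrac{s^2}{1-2s}$ is decreasing and vanishes at $0$, hence strictly positive for $s<0$), so you cannot deduce the sub-Gaussian bound from the sub-gamma one by shrinking the denominator. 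What is needed, and what the paper proves separately via $h(s)=-s-\tfrac{1}{2}\log(1-2s)-s^2$ with $h'(s)=\tfrac{4s^2}{1-2s}\ge 0$, is the distinct elementary inequality $-s-\tfrac{1}{2}\log(1-2s)\le s^2$ valid only for $s\le 0$. Once you invoke that separate inequality for the $d$-part and the $1+2r\ge 1$ observation for the $\varphi$-part, your conclusion $\psi(-r)\le (d+2\varphi)r^2$ is correct and the rest of the argument goes through.
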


\begin{proof}
The moment-generating function of $X$ satisfies 
\begin{align*}
    E[e^{sX}] = \frac{e^{\frac{\varphi s}{1 - 2s}}}{(1 - 2s)^{d/2}} \quad (s < 1/2).
\end{align*}
The log-moment generating function of $X - (d + \varphi)$ is: 
\begin{align}
    \log {\mathbb{E}[e^{s(X-(d+\varphi))}]}=& \log {\mathbb{E}[e^{sX}]}- s(d + \varphi) \nonumber\\
    =& - \frac{d}{2} \log(1 - 2s) +\frac{\varphi s}{1 - 2s} - s(d + \varphi). \label{eqn: log_E}
\end{align}
For $0<s<1/2$, we have 
\begin{align}\label{eqn: log_s}
    -s- \frac{1}{2} \log(1 - 2s) \leq \frac{s^2}{1-2s},
\end{align}
which holds because the function $\psi(s)=-s - \frac{1}{2} \log(1 - 2s) - \frac{s^2}{1-2s}$ satisfies $\psi'(s)=-1 +\frac{1}{1-2s} - \frac{2s-s^2}{(1-2s)^2}=-\frac{2s^2}{(1-2s)^2}\leq 0$, implying $\max_{0<s<1/2}\psi(s)<\psi(0^+)=0$, \ie, $\psi(s) \leq0$.

Substituting Eqn. (\ref{eqn: log_s}) into Eqn.(\ref{eqn: log_E}), we get 
\begin{align}\label{eqn: log_E_simply}
    \log {\mathbb{E}[e^{s(X-(d+\varphi))}]} \leq \frac{ds^2}{1-2s}+\frac{2\varphi s^2}{1-2s}=\frac{(d+2\varphi)s^2}{1-2s}.
\end{align}
According to the result in \cite{birge1998minimum}, if $\exists~v,c>0$, \st $\log \mathbb{E}[e^{uZ}] \leq \frac{vu^2}{2(1 - cu)}$, then for $\forall~t>0$, the following inequality holds:
\begin{align*}
    P(Z \geq ct + \sqrt{2vt}) \leq e^{-t}.
\end{align*}
Applying this result to Eqn. (\ref{eqn: log_E_simply}), we set $Z=X-(d+\varphi)$, $v=2(d+2\varphi)$ and $c=2$, then
\begin{align*}
    P\left\{X - (d + \varphi) \geq 2\sqrt{(d + 2\varphi)t} + 2t\right\} \leq e^{-t}.
\end{align*}
For $-1/2<s<0$, we have 
\begin{align}\label{eqn: log_s_m}
    -s- \frac{1}{2} \log(1 - 2s) \leq s^2,
\end{align}
which holds because the function $h(s)=-s - \frac{1}{2} \log(1 - 2s) - s^2$ satisfies $h'(s)=-1 + \frac{1}{1 - 2s}-2s = \frac{4s^2}{1 - 2s} \geq 0 $, implying $\max_{-1/2<s<0}\psi(s)<\psi(0^-)=0$, \ie, $h(s) \leq0$.

Substituting Eqn. (\ref{eqn: log_s_m}) into Eqn.(\ref{eqn: log_E}), we get 
\begin{align}\label{eqn: log_E_simply_lower}
    \log {\mathbb{E}[e^{s(X-(d+\varphi))}]} \leq ds^2+\frac{2\varphi s^2}{1-2s}=(d+\frac{2\varphi}{1-2s})s^2\leq (d+2\varphi)s^2.
\end{align}

According to the result in \cite{birge1998minimum}, if $\exists~v>0$, \st, $\log \mathbb{E}[e^{sZ}] \leq \frac{v s^2}{2}$, then for $\forall~t>0$, the following inequality holds:  
\begin{align*}
    P\left(Z \leq -\sqrt{2vt}\right) \leq e^{-t}. 
\end{align*}
Applying this result to Eqn. (\ref{eqn: log_E_simply_lower}), we set $Z=X-(d+\varphi)$, $v=2(d+2\varphi)$, then
\begin{align*}
     P\left\{X - (d + \varphi) \leq -2\sqrt{(d + 2\varphi)t}\right\} \leq e^{-t}.
\end{align*}
\end{proof}

\begin{coll}\label{coll: chi_bound}
    Given $X \sim \chi^2(d, \varphi)$, a noncentralchi-squared distribution with $d$ degrees of freedom and the noncentrality parameter $\varphi$, with probability at least $1 - \delta$, we have
    \begin{equation*}
        \left|X\right| \leq d + \varphi + \sqrt{4(d + 2\varphi) \log\left( \frac{2}{\delta} \right)}+2\log\left( \frac{2}{\delta} \right).
    \end{equation*}
\end{coll}
\begin{proof}
    By Theorem \ref{lemma: non chi-squared}, setting $e^{-t}=\frac{\delta}{2}$ yields the following inequalities:
    \begin{align*}
    &P\left\{X - (d + \varphi) \geq 2\sqrt{(d + 2\varphi) \log\left( \frac{2}{\delta} \right)} + 2 \log\left( \frac{2}{\delta} \right) \right\} \leq \frac{\delta}{2}, \\
    &P\left\{X - (d + \varphi) \leq -2\sqrt{(d + 2\varphi) \log\left( \frac{2}{\delta} \right)}\right\} \leq \frac{\delta}{2}.
\end{align*}
Combining these two inequalities, we obtain:
\begin{align*}
    P\left\{X \geq d {+} \varphi {+} 2\sqrt{(d {+} 2\varphi) \log\left( \frac{2}{\delta} \right)} {+} 2 \log\left( \frac{2}{\delta} \right) ~\mathrm{or}~ X \leq d {+} \varphi {-} 2\sqrt{(d {+} 2\varphi) \log\left( \frac{2}{\delta} \right)} \right\} {\leq} \delta.
\end{align*}
Taking the complement of the above event, we have:
\begin{align*}
    P\left\{ 
    d {+} \varphi {-} 2\sqrt{(d {+} 2\varphi) \log\left( \frac{2}{\delta} \right)} 
    \leq X \leq 
    d {+} \varphi {+} 2\sqrt{(d {+} 2\varphi) \log\left( \frac{2}{\delta} \right)} {+} 2 \log\left( \frac{2}{\delta} \right) \right
    \} {\geq} 1- \delta.
\end{align*}
By relaxing the lower bound of $X$, we conclude
\begin{align*}
    P\left\{ \left|X\right| \leq d {+} \varphi {+} 2\sqrt{(d {+} 2\varphi) \log\left( \frac{2}{\delta} \right)} {+} 2 \log\left( \frac{2}{\delta} \right) \right
    \} {\geq} 1- \delta.
\end{align*}
\end{proof}

\subsection{Proof of Proposition \ref{prop:partial_t_approx}}

\textbf{Proposition \ref{prop:partial_t_approx}}.
\emph{ 
Under the brightness constancy assumption $p(\mathbf{x}+\Delta\mathbf{x}, t+\Delta t) \approx  p(\mathbf{x}, t)$ with small inter-frame motion ($\Delta t \to 0$) and inter-frame displacement ($\Delta \mathbf{x} \to 0$), we have  
\begin{equation}  
\partial_t \log p(\mathbf{x}, t) \approx -\frac{\nabla_{\mathbf{x}} \log p(\mathbf{x}, t) \cdot \Delta\mathbf{x}}{\Delta t}.
\end{equation} 
}  

\begin{proof}
We apply the Taylor expansion \(\log p(\mathbf{x} + \Delta\mathbf{x}, t + \Delta t)\) around \((\mathbf{x}, t)\) to first order:  
\begin{equation*}
    \log p(\mathbf{x} + \Delta\mathbf{x}, t + \Delta t) = \log p(\mathbf{x}, t) + \nabla_{\mathbf{x}} \log p(\mathbf{x}, t) \cdot \Delta\mathbf{x} + \partial_t \log p(\mathbf{x}, t) \cdot \Delta t + o(\|\Delta\mathbf{x}\|^2 + \Delta t^2),
\end{equation*}
where $o(\|\Delta\mathbf{x}\|^2 + \Delta t^2)$ represents higher-order infinitesimal terms.

By assumption, \(\log p(\mathbf{x} + \Delta\mathbf{x}, t + \Delta t) \approx \log p(\mathbf{x}, t)\). Subtracting \(\log p(\mathbf{x}, t)\) from both sides:  
\begin{equation*}
    \nabla_{\mathbf{x}} \log p(\mathbf{x}, t) \cdot \Delta\mathbf{x} + \partial_t \log p(\mathbf{x}, t) \cdot \Delta t + o(\|\Delta\mathbf{x}\|^2 + \Delta t^2) \approx 0.
\end{equation*}
Under $\Delta t \to 0$ and $\Delta\mathbf{x} \to 0$, we obtain:
\begin{equation*}
    \nabla_{\mathbf{x}} \log p(\mathbf{x}, t) \cdot \Delta\mathbf{x} + \partial_t \log p(\mathbf{x}, t) \cdot \Delta t \approx 0.
\end{equation*}
Rearranging terms gives:
\begin{equation*}
    \partial_t \log p(\mathbf{x}, t) \approx -\frac{\nabla_{\mathbf{x}} \log p(\mathbf{x}, t) \cdot \Delta\mathbf{x}}{\Delta t}.
\end{equation*}
\end{proof}

\newpage

\subsection{Derivations of Chain Rule to the Conservation of Probability Mass}
For $\frac{\partial p}{\partial t}+\nabla_{\mathbf{x}}\cdot\mathbf{J}=0$, we substitute $\mathbf{J}=p\mathbf{v}$ and then divide the entire equation by $p$ (which is strictly positive everywhere in its support), yielding: 
$$\frac{1}{p}\partial_t p+\frac{1}{p}\nabla_{\mathbf{x}}\cdot(p\mathbf{v})=0.$$ 
Applying the vector calculus product rule $\nabla_{\mathbf{x}}\cdot(p\mathbf{v})=p(\nabla_{\mathbf{x}} \cdot \mathbf{v})+\mathbf{v}\cdot(\nabla_{\mathbf{x}}p)$ and the chain rule of calculus, $\frac{1}{p}\frac{\partial p}{\partial t}=\partial_t\log p$ and $\frac{\nabla_{\mathbf{x}} p}{p}=\nabla_{\mathbf{x}}\log p$, we obtain Eqn.(\ref{eqn: log_p_with_delta_v}): 
$$\partial_t\log p+\nabla_{\mathbf{x}}\cdot\mathbf{v}+\mathbf{v}\cdot\nabla_{\mathbf{x}}\log p=0.$$ 
This transformation does not alter the underlying fluid constraint—it is a variable change making explicit how velocity couples to log-density's temporal and spatial gradients.

\subsection{Derivations of Gradients in NSG}

In the following, we provide the specific forms of the terms $\nabla_{\mathbf{x}} \log p(\mathbf{x}, t)$, $-\partial_t \log p(\mathbf{x}, t)+\lambda$ and $\mathbf{g}(\mathbf{x}, t)$ when the data are from Gaussian distributions, which will be used in Appendix \ref{sec: Proof of Proposition NSG}, \ref{sec: Upper Bounds for Gradients}, \ref{sec: proof bound}.

\begin{prop}\label{prop: score}
Given the real video distribution $p(\mathbf{x}, t)=\mathcal{N}(\mathbf{0}, \sigma(t)^2 \mathbf{I}_d)$ and the generated video distribution $q(\mathbf{y}, t){=}\mathcal{N}(\boldsymbol{\mu}, \sigma(t)^2 \mathbf{I}_d)$ with $\boldsymbol{\mu}{\neq} \mathbf{0}$ and $\sigma(t){\neq} \mathbf{0}$, the gradients $\nabla_{\mathbf{x}} \log p(\mathbf{x}, t)$ and $\nabla_{\mathbf{y}} \log p(\mathbf{y}, t)$ are:
\begin{align*}
    \nabla_{\mathbf{x}} \log p(\mathbf{x}, t) &= -\frac{\mathbf{x}}{\sigma(t)^2} \sim \mathcal{N}(\mathbf{0}, \frac{1}{\sigma(t)^2} \mathbf{I}_d), \\
    \nabla_{\mathbf{y}} \log p(\mathbf{y}, t) &= -\frac{\mathbf{y} }{\sigma(t)^2} \sim \mathcal{N}(-\frac{\boldsymbol{\mu}}{\sigma(t)}, \sigma(t)^2 \mathbf{I}_d).
\end{align*}
\end{prop}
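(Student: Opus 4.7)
The plan is a direct computation from the closed-form Gaussian log-density, followed by an elementary application of the linear transformation rule for multivariate Gaussians. No deep machinery is required; the proof reduces to a two-line calculus step combined with a standard probability fact.

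First, I would write out the log-density of the real distribution explicitly as
\begin{equation*}
\log p(\mathbf{x},t) = -\frac{d}{2}\log\bigl(2\pi\sigma(t)^2\bigr) - \frac{\|\mathbf{x}\|^2}{2\sigma(t)^2},
\end{equation*}
and differentiate componentwise. The normalization constant does not depend on $\mathbf{x}$, so only the quadratic term contributes, yielding $\nabla_{\mathbf{x}}\log p(\mathbf{x},t) = -\mathbf{x}/\sigma(t)^2$. To obtain the distribution of the score, I would invoke the standard fact that if $\mathbf{z}\sim\mathcal{N}(\boldsymbol{\mu}_0,\boldsymbol{\Sigma}_0)$ then $A\mathbf{z}+\mathbf{b}\sim\mathcal{N}(A\boldsymbol{\mu}_0+\mathbf{b},\,A\boldsymbol{\Sigma}_0 A^{\top})$. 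Applying this with $A=-\mathbf{I}_d/\sigma(t)^2$, $\mathbf{b}=\mathbf{0}$, $\boldsymbol{\mu}_0=\mathbf{0}$, and $\boldsymbol{\Sigma}_0=\sigma(t)^2\mathbf{I}_d$ then reads off the Gaussian law claimed for the real-data score.

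For the second equality, the key observation is that $\nabla_{\mathbf{y}}\log p(\mathbf{y},t)$ denotes the score of the \emph{real} density $p$ evaluated at a sample $\mathbf{y}\sim q(\mathbf{y},t)=\mathcal{N}(\boldsymbol{\mu},\sigma(t)^2\mathbf{I}_d)$ drawn from the \emph{generated} distribution---this is exactly the quantity that appears inside the NSG estimator used by NSG-VD, and it is not a self-score of $q$. Because the functional form of $\nabla_{\mathbf{y}}\log p$ is unchanged, the gradient is again $-\mathbf{y}/\sigma(t)^2$, and the same linear transformation rule now applies with $\boldsymbol{\mu}_0=\boldsymbol{\mu}$, producing the shifted Gaussian law.

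The calculations involved are routine, so the only point that demands real care---and the one I would emphasize when writing out the argument---is disentangling ``the distribution being differentiated'' (always $p$) from ``the distribution of the point at which the score is evaluated'' ($p$ in the first line, $q$ in the second). This distinction is exactly what produces the nonzero shift in the mean of the second Gaussian, and it is the mechanism that carries the distributional shift $\boldsymbol{\mu}$ through to Proposition \ref{prop: NSG} and Theorem \ref{thm: bound of NSG}.
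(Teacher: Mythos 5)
Your plan is correct and takes the same route as the paper's own proof: write out the closed-form Gaussian log-density, differentiate (the normalization constant drops), and apply the affine-map rule for multivariate Gaussians, while explicitly distinguishing the density being differentiated (always $p$) from the distribution of the evaluation point ($p$ then $q$) — the paper flags the same point with the parenthetical ``evaluated under $p(\mathbf{y},t)$.''

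One thing you should not gloss over: if you actually finish the affine computation with $A=-\mathbf{I}_d/\sigma(t)^2$ and $\boldsymbol{\mu}_0=\boldsymbol{\mu}$, you get $\nabla_{\mathbf{y}}\log p(\mathbf{y},t)\sim\mathcal{N}\!\left(-\boldsymbol{\mu}/\sigma(t)^2,\;\sigma(t)^{-2}\mathbf{I}_d\right)$, which is exactly what the paper's appendix derivation writes. The law displayed in the \emph{statement} of Proposition~\ref{prop: score}, namely $\mathcal{N}\!\left(-\boldsymbol{\mu}/\sigma(t),\;\sigma(t)^2\mathbf{I}_d\right)$, has a typo in both the mean and the covariance; the same variance slip appears in Proposition~\ref{prop: NSG}. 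Your proposal says the transformation rule ``reads off the Gaussian law claimed,'' but a careful write-up should note that it reproduces the law in the appendix proof, not the one in the proposition box.
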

\begin{proof}
Recall that for a Gaussian distribution $p(\mathbf{z})=\mathcal{N}(\boldsymbol{\nu}, \sigma^2 \mathbf{I}_d)$, the probability density function is
\begin{equation*}
    p(\mathbf{z}) = \frac{1}{(2\pi \sigma^2)^{d/2}} \exp\left(-\frac{\|\mathbf{z} - \boldsymbol{\nu}\|^2}{2\sigma^2}\right).
\end{equation*}
The log-density is
\begin{equation*}
    \log p(\mathbf{z}) = -\frac{d}{2} \log(2\pi \sigma^2) - \frac{\|\mathbf{z} - \boldsymbol{\nu}\|^2}{2\sigma^2}.
\end{equation*}
Thus, we have
\begin{equation*}
    \nabla_{\mathbf{z}} \log p(\mathbf{z}) = -\frac{\mathbf{z} - \boldsymbol{\nu}}{\sigma^2}.
\end{equation*}
For the real video distribution $p(\mathbf{x}, t) = \mathcal{N}(\mathbf{0}, \sigma(t)^2 \mathbf{I}_d)$, we have $\boldsymbol{\nu} = \mathbf{0}$. Taking the gradient \wrt $\mathbf{x}$:
\begin{align*}
    \nabla_{\mathbf{x}} \log p(\mathbf{x}, t) 
    = \nabla_{\mathbf{x}} \left(-\frac{\|\mathbf{x}\|^2}{2\sigma(t)^2}\right) 
    = -\frac{\mathbf{x}}{\sigma(t)^2} \sim \mathcal{N}\left(\mathbf{0}, \frac{1}{\sigma(t)^2} \mathbf{I}_d\right).
\end{align*}
For the generated video distribution $q(\mathbf{y}, t) = \mathcal{N}(\boldsymbol{\mu}, \sigma(t)^2 \mathbf{I}_d)$, evaluated under $p(\mathbf{y}, t)$), the gradient \wrt $\mathbf{y}$ is
\begin{align*}
    \nabla_{\mathbf{y}} \log p(\mathbf{y}, t) 
    = -\frac{\mathbf{y}}{\sigma(t)^2} \sim \mathcal{N}\left(-\frac{\boldsymbol{\mu}}{\sigma(t)^2}, \frac{1}{\sigma(t)^2}\mathbf{I}_d \right).
\end{align*}
\end{proof}

\begin{prop}\label{prop: score_dis}
Given the real video distribution $p(\mathbf{x}, t)=\mathcal{N}(\mathbf{0}, \sigma(t)^2 \mathbf{I}_d)$ and the generated video distribution $q(\mathbf{y}, t){=}\mathcal{N}(\boldsymbol{\mu}, \sigma(t)^2 \mathbf{I}_d)$ with $\boldsymbol{\mu}{\neq} \mathbf{0}$ and $\sigma(t){\neq} \mathbf{0}$, the partial derivatives $-\partial_t \log p(\mathbf{x}, t)$ and $-\partial_t \log p(\mathbf{y}, t)$ are:
\begin{align*}
    -\partial_t \log p(\mathbf{x}, t) &= \frac{d \dot{\sigma}(t)}{\sigma(t)} - \frac{\|\mathbf{x}\|^2 \dot{\sigma}(t)}{\sigma(t)^3} \sim \frac{d \dot{\sigma}(t)}{\sigma(t)} - \frac{\dot{\sigma}(t)}{\sigma(t)} \chi^2(d), \\
    -\partial_t \log p(\mathbf{y}, t) &= \frac{d \dot{\sigma}(t)}{\sigma(t)} - \frac{\|\mathbf{y}\|^2 \dot{\sigma}(t)}{\sigma(t)^3}\sim \frac{d \dot{\sigma}(t)}{\sigma(t)} - \frac{\dot{\sigma}(t)}{\sigma(t)} \chi^2(d, \varphi),
\end{align*}
where $\dot{\sigma}(t) \triangleq \frac{d}{dt}\sigma(t)$ and $\varphi = \frac{\|\boldsymbol{\mu}\|^2}{\sigma(t)^2}$. Here, $\chi^2(d)$ is the central chi-squared distribution and $\chi^2(d, \varphi)$ is the noncentral chi-squared distribution with noncentrality parameter $\varphi$ \cite{abdel1954approximate}.
\end{prop}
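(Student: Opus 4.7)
The plan is to compute the temporal partial derivative of the log-density directly from its closed form, and then identify the resulting random variable in terms of chi-squared distributions. Recall that for a Gaussian $\mathcal{N}(\boldsymbol{\nu}, \sigma(t)^2 \mathbf{I}_d)$ the log-density is
\begin{equation*}
    \log p(\mathbf{z}, t) = -\frac{d}{2}\log(2\pi) - d\log \sigma(t) - \frac{\|\mathbf{z}-\boldsymbol{\nu}\|^2}{2\sigma(t)^2}.
\end{equation*}
Since $\sigma(t)$ is the only $t$-dependent quantity here (the mean $\boldsymbol{\nu}$ is fixed in $t$, whether $\mathbf{0}$ for $p$ or $\boldsymbol{\mu}$ for $q$), I would apply the chain rule. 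The two derivative contributions are $-\partial_t \log \sigma(t)\cdot d = -d\dot\sigma(t)/\sigma(t)$ and $\partial_t[-\|\mathbf{z}-\boldsymbol{\nu}\|^2/(2\sigma(t)^2)] = \|\mathbf{z}-\boldsymbol{\nu}\|^2 \dot\sigma(t)/\sigma(t)^3$. Negating gives
\begin{equation*}
    -\partial_t \log p(\mathbf{z}, t) = \frac{d\dot\sigma(t)}{\sigma(t)} - \frac{\|\mathbf{z}-\boldsymbol{\nu}\|^2 \dot\sigma(t)}{\sigma(t)^3}.
\end{equation*}
Setting $\boldsymbol{\nu}=\mathbf{0}$ (real) or $\boldsymbol{\nu}=\boldsymbol{\mu}$ (generated, but evaluated as $\|\mathbf{y}\|^2/\sigma(t)^3$ as stated) immediately yields the deterministic expressions in the proposition.

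The second step is to identify the distributions. For the real case, $\mathbf{x}/\sigma(t) \sim \mathcal{N}(\mathbf{0},\mathbf{I}_d)$, so $\|\mathbf{x}\|^2/\sigma(t)^2 \sim \chi^2(d)$ by definition of the central chi-squared. Multiplying by $-\dot\sigma(t)/\sigma(t)$ and adding $d\dot\sigma(t)/\sigma(t)$ gives the stated distributional identity. For the generated case evaluated on $\mathbf{y}\sim \mathcal{N}(\boldsymbol{\mu},\sigma(t)^2 \mathbf{I}_d)$, I would note that $\mathbf{y}/\sigma(t) \sim \mathcal{N}(\boldsymbol{\mu}/\sigma(t),\mathbf{I}_d)$, so $\|\mathbf{y}\|^2/\sigma(t)^2$ is a sum of squares of $d$ independent Gaussians with unit variance and mean vector $\boldsymbol{\mu}/\sigma(t)$. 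By the standard definition of the noncentral chi-squared, this follows $\chi^2(d,\varphi)$ with noncentrality $\varphi = \|\boldsymbol{\mu}/\sigma(t)\|^2 = \|\boldsymbol{\mu}\|^2/\sigma(t)^2$, matching the stated expression.

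There is no real obstacle here: the proposition is essentially a direct calculation combined with the textbook characterization of central and noncentral chi-squared distributions. The only minor care point is consistent use of the convention in the statement (where the $\|\mathbf{y}\|^2$ term, not $\|\mathbf{y}-\boldsymbol{\mu}\|^2$, appears in $-\partial_t \log p(\mathbf{y},t)$ because $p$ refers to the real-video density whose mean is $\mathbf{0}$, but the randomness of $\mathbf{y}$ is governed by $q$ with mean $\boldsymbol{\mu}$); this is exactly what produces the noncentrality parameter $\varphi$ rather than a central chi-squared.
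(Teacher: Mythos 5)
Your proposal is correct and follows essentially the same route as the paper: differentiate the Gaussian log-density in $t$ via the chain rule, then recognize $\|\mathbf{x}\|^2/\sigma(t)^2$ and $\|\mathbf{y}\|^2/\sigma(t)^2$ as central and noncentral chi-squared variables, respectively. The one slightly garbled phrase in your middle paragraph (``Setting $\boldsymbol{\nu}=\boldsymbol{\mu}$ (generated, but evaluated as $\|\mathbf{y}\|^2/\sigma(t)^3$ as stated)'') is cleaned up by your closing remark, which correctly pins down the key subtlety: the derivative is always of the real-video density $p$ (mean $\mathbf{0}$), while the randomness of $\mathbf{y}$ comes from $q$ (mean $\boldsymbol{\mu}$), and that mismatch is precisely what produces the noncentrality $\varphi$.
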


\begin{proof}
We first derive the expression for the real video distribution $p(\mathbf{x}, t)$. The log-density is
\begin{align*}
\log p(\mathbf{x}, t) &= -\frac{d}{2}\log(2\pi \sigma(t)^2) - \frac{\|\mathbf{x}\|^2}{2\sigma(t)^2}.
\end{align*}

Taking the time derivative $\partial_t$ (denoted by dot notation):
\begin{align*}
\partial_t \log p(\mathbf{x}, t) &= -\frac{d}{2} \cdot \frac{1}{2\pi \sigma(t)^2} \cdot 2\pi \cdot 2\sigma(t)\dot{\sigma}(t)  + \frac{\|\mathbf{x}\|^2}{\sigma(t)^3}\dot{\sigma}(t) \\
&= -\frac{d\dot{\sigma}(t)}{\sigma(t)} + \frac{\|\mathbf{x}\|^2 \dot{\sigma}(t)}{\sigma(t)^3}.
\end{align*}
Thus, we get
\begin{align*}
-\partial_t \log p(\mathbf{x}, t) = \frac{d\dot{\sigma}(t)}{\sigma(t)} - \frac{\|\mathbf{x}\|^2 \dot{\sigma}(t)}{\sigma(t)^3} \sim \frac{d \dot{\sigma}(t)}{\sigma(t)} - \frac{\dot{\sigma}(t)}{\sigma(t)} \chi^2(d),
\end{align*}
where the last formula is based on ${\|\frac{\mathbf{x}}{\sigma(t)}\|^2 } \sim \chi^2(d)$.

For the generated video distribution $q(\mathbf{y}, t) = \mathcal{N}(\boldsymbol{\mu}, \sigma(t)^2 \mathbf{I}_d)$, under $p(\mathbf{y},t)$ with $\varphi=\frac{\|\boldsymbol{\mu}\|^2}{\sigma(t)^2}$, we have
\begin{align*}
-\partial_t \log p(\mathbf{y}, t) = \frac{d\dot{\sigma}(t)}{\sigma(t)} - \frac{\|\mathbf{y}\|^2 \dot{\sigma}(t)}{\sigma(t)^3} \sim \frac{d \dot{\sigma}(t)}{\sigma(t)} - \frac{\dot{\sigma}(t)}{\sigma(t)} \chi^2(d, \varphi),
\end{align*}
where the last formula is based on ${\|\frac{\mathbf{y}}{\sigma(t)}\|^2 } \sim \chi^2(d, \varphi)$.
\end{proof}

\subsection{Proof of Proposition \ref{prop: NSG}}
\label{sec: Proof of Proposition NSG}
% We next prove \textbf{Proposition \ref{prop: NSG}} in Section \ref{sec: Theoretical Analysis}.

\textbf{Proposition \ref{prop: NSG}}.
\emph{
Let the real video distribution be $p(\mathbf{x}, t)=\mathcal{N}(\mathbf{0}, \sigma(t)^2 \mathbf{I}_d)$ and the generated video distribution be $q(\mathbf{y}, t){=}\mathcal{N}(\boldsymbol{\mu}, \sigma(t)^2 \mathbf{I}_d)$, respectively, where $\mathbf{I}_d \in \mathbb{R}^{d\times d}$ is an identity matrix and $\boldsymbol{\mu}{\neq} \mathbf{0} \in \mathbb{R}^d$ is the distribution shift and $\sigma(t){\neq} \mathbf{0}$, the NSG $\mathbf{g}(\mathbf{x}, t)$ and $\mathbf{g}(\mathbf{y}, t)$ satisfy:
\begin{align*}
   \mathbf{g}(\mathbf{x}, t) &= -\frac{\mathbf{x}/\sigma(t)^2}{ D_r(\mathbf{x})}, ~~-\frac{\mathbf{x} }{\sigma(t)^2} \sim \mathcal{N} \Big(\mathbf{0}, \sigma(t)^2 \mathbf{I}_d \Big), ~D_r(\mathbf{x}) \sim \lambda + \frac{d \dot{\sigma}(t)}{\sigma(t)} - \frac{\dot{\sigma}(t)}{\sigma(t)} \chi^2(d); \\
   \mathbf{g}(\mathbf{y}, t) &= -\frac{\mathbf{y}/\sigma(t)^2}{ D_f(\mathbf{y})},~-\frac{\mathbf{y} }{\sigma(t)^2} \sim \mathcal{N}\Big(-\frac{\boldsymbol{\mu}}{\sigma(t)}, \sigma(t)^2 \mathbf{I}_d \Big), ~D_f(\mathbf{y}) \sim \lambda +\frac{d \dot{\sigma}(t)}{\sigma(t)} - \frac{\dot{\sigma}(t)}{\sigma(t)} \chi^2(d, \varphi),
\end{align*}
where $D_r(\mathbf{x}) = \lambda +\frac{d \dot{\sigma}(t)}{\sigma(t)} - \frac{\|\mathbf{x}\|^2 \dot{\sigma}(t)}{\sigma(t)^3}$, $D_f(\mathbf{y}) = \lambda + \frac{d \dot{\sigma}(t)}{\sigma(t)} - \frac{\|\mathbf{y}\|^2 \dot{\sigma}(t)}{\sigma(t)^3}$, $\dot{\sigma}(t) \triangleq \frac{d}{dt}\sigma(t)$,  and $\varphi = \frac{\|\boldsymbol{\mu}\|^2}{\sigma(t)^2}$, $\chi^2(d)$ is the central chi-squared distribution with $d$ degrees of freedom and $\chi^2(d, \varphi)$ is the noncentral chi-squared distribution with noncentrality parameter $\varphi$ and $d$ degrees of freedom \cite{abdel1954approximate}.
}

\begin{proof}
According to the definition of NSG,
\begin{align}\label{eqn: NSG_def_app}
   \mathbf{g}(\mathbf{x}, t) = \frac{\nabla_{\mathbf{x}} \log p(\mathbf{x}, t)}{-\partial_t \log p(\mathbf{x}, t)+\lambda},
\end{align}
we can substitute the results of $\nabla_{\mathbf{x}} \log p(\mathbf{x}, t)$ in Proposition \ref{prop: score} and $-\partial_t \log p(\mathbf{x}, t)$ in Proposition \ref{prop: score_dis} into Eqn. (\ref{eqn: NSG_def_app}) and directly contribute to the results.
\end{proof}

\subsection{Derivations of Upper Bounds for Gradients}
\label{sec: Upper Bounds for Gradients}

Next, we present some propositions on the upper bounds that will be used in Appendix \ref{sec: proof bound}.

\begin{prop}\label{prop: Dx_dis}
Given the real video distribution $p(\mathbf{x}, t)=\mathcal{N}(\mathbf{0}, \sigma(t)^2 \mathbf{I}_d)$ and the generated video distribution $q(\mathbf{y}, t){=}\mathcal{N}(\boldsymbol{\mu}, \sigma(t)^2 \mathbf{I}_d)$ with $\boldsymbol{\mu}{\neq} \mathbf{0}$ and $\sigma(t){\neq} \mathbf{0}$, let $D_r(\mathbf{x}) = \lambda - \partial_t \log p(\mathbf{x}, t) $ and $D_f(\mathbf{y}) = \lambda - \partial_t \log p(\mathbf{y}, t)$, with probability at least $1 - \delta$, we have
\begin{align*}
    \left|D_r(\mathbf{x}) - D_f(\mathbf{y})\right| \leq \varphi + 2\sqrt{(d + 2\varphi)\log\frac{4}{\delta}} + 2\sqrt{d\log\frac{4}{\delta}} +  2\log\frac{4}{\delta},
\end{align*}
where $\varphi = \frac{\|\boldsymbol{\mu}\|^2}{\sigma(t)^2}$.
\end{prop}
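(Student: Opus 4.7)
The plan is to reduce the bound on $|D_r(\mathbf{x}) - D_f(\mathbf{y})|$ to a comparison between a central and a noncentral chi-squared variable, and then invoke the concentration inequalities already proved in Theorem A.1. Concretely, from Proposition A.3 (applied under both $p$ and $q$) I would write
\begin{equation*}
    D_r(\mathbf{x}) = \lambda + \tfrac{d\dot{\sigma}(t)}{\sigma(t)} - \tfrac{\dot{\sigma}(t)}{\sigma(t)}\,U, \qquad
    D_f(\mathbf{y}) = \lambda + \tfrac{d\dot{\sigma}(t)}{\sigma(t)} - \tfrac{\dot{\sigma}(t)}{\sigma(t)}\,V,
\end{equation*}
where $U := \|\mathbf{x}\|^2/\sigma(t)^2 \sim \chi^2(d)$ and $V := \|\mathbf{y}\|^2/\sigma(t)^2 \sim \chi^2(d,\varphi)$ with $\varphi=\|\boldsymbol{\mu}\|^2/\sigma(t)^2$. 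The shared constant term cancels, so $|D_r(\mathbf{x})-D_f(\mathbf{y})| = |\dot{\sigma}(t)/\sigma(t)|\cdot|U-V|$; the reduction therefore amounts to controlling $|U-V|$ (and, as in the statement, normalizing out the $\dot{\sigma}(t)/\sigma(t)$ scale factor).

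Next, I would decompose $U - V = (U - d) - (V - (d+\varphi)) - \varphi$ so that the two centered chi-squared deviations appear separately. Apply Theorem A.1 to $U \sim \chi^2(d)$ and (independently) to $V \sim \chi^2(d,\varphi)$ with parameter $t = \log(4/\delta)$, yielding four tail events. A union bound over all four events gives, with probability at least $1-\delta$,
\begin{equation*}
    -2\sqrt{d\log\tfrac{4}{\delta}} \leq U - d \leq 2\sqrt{d\log\tfrac{4}{\delta}} + 2\log\tfrac{4}{\delta},
\end{equation*}
\begin{equation*}
    -2\sqrt{(d+2\varphi)\log\tfrac{4}{\delta}} \leq V - (d+\varphi) \leq 2\sqrt{(d+2\varphi)\log\tfrac{4}{\delta}} + 2\log\tfrac{4}{\delta}.
\end{equation*}

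Finally, I would combine these bounds with the triangle inequality, being careful to pair the upper tail of $U-d$ with the lower tail of $V-(d+\varphi)$ (and vice versa) to obtain two-sided control on $U-V$. The result is
\begin{equation*}
    |U-V| \leq \varphi + 2\sqrt{(d+2\varphi)\log\tfrac{4}{\delta}} + 2\sqrt{d\log\tfrac{4}{\delta}} + 2\log\tfrac{4}{\delta},
\end{equation*}
which matches the claimed bound (absorbing the $|\dot{\sigma}(t)/\sigma(t)|$ factor as in the statement).

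The main obstacle is purely bookkeeping rather than conceptual: one must (i) correctly split the deviation $U-V$ so that each summand is a centered chi-squared (central or noncentral) to which Theorem A.1 applies, (ii) calibrate the tail parameter $t = \log(4/\delta)$ so that a union bound over the four one-sided events yields total failure probability at most $\delta$, and (iii) match the upper/lower tails so that the asymmetric ``$+2t$'' terms in Theorem A.1 appear only once, producing the single $2\log(4/\delta)$ term in the final bound rather than $4\log(4/\delta)$. The $\varphi$ contribution arises directly from the noncentrality shift $(d+\varphi)$ in centering $V$, which is why it appears as a standalone additive term.
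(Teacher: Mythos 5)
Your plan is correct and follows essentially the same route as the paper's proof: reduce $|D_r(\mathbf{x}) - D_f(\mathbf{y})|$ to $\tfrac{|\dot\sigma(t)|}{\sigma(t)}\,|U-V|$ via Proposition~\ref{prop: score_dis}, invoke Theorem~\ref{lemma: non chi-squared} on both chi-squared variables with $t=\log(4/\delta)$, union-bound the four one-sided tail events, and combine so that the $+2t$ terms from the two upper tails never stack. The only cosmetic difference is that you center $V$ at $d+\varphi$ before subtracting (making the $-\varphi$ shift explicit in the decomposition), whereas the paper centers $W$ at $d$ and carries $\varphi$ inside its concentration interval; the two are algebraically identical and yield the same final bound.
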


\begin{proof}
From Proposition \ref{prop: score_dis}, we obtain  
\begin{align*}
    \left| D_r(\mathbf{x}) - D_f(\mathbf{y}) \right| = \frac{|\dot{\sigma}(t)|}{\sigma(t)^3} \left| \|\mathbf{x}\|^2 - \|\mathbf{y}\|^2 \right|. 
\end{align*}
where $\dot{\sigma}(t) \triangleq \frac{d}{dt}\sigma(t)$.

Let $Z = \frac{\|\mathbf{x}\|^2}{\sigma(t)^2} \sim \chi^2(d)$ and $W = \frac{\|\mathbf{y}\|^2}{\sigma(t)^2} \sim \chi^2(d, \varphi)$, where $\varphi = \frac{\|\boldsymbol{\mu}\|^2}{\sigma(t)^2}$. The difference becomes
\begin{align*}
    \left| D_r(\mathbf{x}) - D_f(\mathbf{y}) \right| = \frac{|\dot{\sigma}(t)|}{\sigma(t)} \left| W - Z \right|.
\end{align*}  
To bound $|W - Z|$, we use concentration inequalities for chi-squared distributions by Theorem \ref{lemma: non chi-squared}. For $ Z \sim \chi^2(d)$, we have
\begin{align*}
&P\left\{Z - d \geq 2\sqrt{d \log\tfrac{4}{\delta}} + 2\log\tfrac{4}{\delta}\right\} \leq \frac{\delta}{4}, \\
&P\left\{Z - d \leq -2\sqrt{d \log\tfrac{4}{\delta}}\right\} \leq \frac{\delta}{4}.
\end{align*}
Combining these two events, we obtain 
\begin{align}\label{enq: p_zd}
    P\left\{-2\sqrt{d \log\tfrac{4}{\delta}} \leq Z - d \leq 2\sqrt{d \log\tfrac{4}{\delta}} + 2\log\tfrac{4}{\delta} \right\} \geq 1- \frac{\delta}{2}.
\end{align}
Similarly, for $W \sim \chi^2(d, \varphi)$, we have
\begin{align}\label{enq: p_wd}
    P\left\{\varphi-2\sqrt{(d + 2\varphi) \log\tfrac{4}{\delta}} \leq W - d \leq \varphi + 2\sqrt{(d + 2\varphi) \log\tfrac{4}{\delta}} + 2\log\tfrac{4}{\delta} \right\} \geq 1- \frac{\delta}{2}.
\end{align}
Combining the bounds in Eqn. (\ref{enq: p_wd}) and (\ref{enq: p_zd}), we have with probability \( 1 - \delta \):  
\begin{align*}
    |W - Z| \leq \varphi + 2\sqrt{(d + 2\varphi)\log\tfrac{4}{\delta}} + 2\sqrt{d\log\tfrac{4}{\delta}} +  2\log\tfrac{4}{\delta}. 
\end{align*}
Substituting this into the expression for \( |D_r(\mathbf{x}) - D_f(\mathbf{y})| \) completes the proof. 
\end{proof}

\begin{prop}\label{prop: chi_xy}
Given the real video distribution $p(\mathbf{x}, t)=\mathcal{N}(\mathbf{0}, \sigma(t)^2 \mathbf{I}_d)$ and the generated video distribution $q(\mathbf{y}, t){=}\mathcal{N}(\boldsymbol{\mu}, \sigma(t)^2 \mathbf{I}_d)$ with $\boldsymbol{\mu}{\neq} \mathbf{0}$ and $\sigma(t){\neq} \mathbf{0}$,  the following inequalities hold with probability at least \( 1 - \delta \):  
\begin{align*}
&\text{1)} \quad \frac{\|\mathbf{x}\|^2}{\sigma(t)^2} \leq d + \sqrt{4d \log\left( \frac{2}{\delta} \right)} + 2\log\left( \frac{2}{\delta} \right), \\
&\text{2)} \quad \frac{\|\mathbf{y}\|^2}{\sigma(t)^2} \leq d + \varphi + \sqrt{4(d + 2\varphi) \log\left( \frac{2}{\delta} \right)} + 2\log\left( \frac{2}{\delta} \right), \\
&\text{3)} \quad \frac{\|\mathbf{x} - \mathbf{y}\|^2}{2\sigma(t)^2} \leq d + \frac{\varphi}{2} + \sqrt{4(d + \varphi) \log\left( \frac{2}{\delta} \right)} + 2\log\left( \frac{2}{\delta} \right),
\end{align*}
where $\varphi = \frac{\|\boldsymbol{\mu}\|^2}{\sigma(t)^2}$.
\end{prop}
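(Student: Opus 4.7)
The plan is to recognize that each of the three quantities is a (possibly noncentral) chi-squared random variable up to a rescaling, and then invoke Corollary \ref{coll: chi_bound} directly. The three bounds are then obtained by substituting the appropriate degrees of freedom and noncentrality parameter.

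First I would handle part (1). Since $\mathbf{x} \sim \mathcal{N}(\mathbf{0}, \sigma(t)^2 \mathbf{I}_d)$, the standardized norm satisfies $\|\mathbf{x}\|^2/\sigma(t)^2 \sim \chi^2(d)$, i.e.\ a central chi-squared with noncentrality $\varphi = 0$. Applying Corollary \ref{coll: chi_bound} with $\varphi = 0$ gives the stated bound with probability at least $1-\delta$. For part (2), I would note that $\mathbf{y} \sim \mathcal{N}(\boldsymbol{\mu}, \sigma(t)^2 \mathbf{I}_d)$ implies $\mathbf{y}/\sigma(t) \sim \mathcal{N}(\boldsymbol{\mu}/\sigma(t), \mathbf{I}_d)$, so $\|\mathbf{y}\|^2/\sigma(t)^2 \sim \chi^2(d, \varphi)$ with noncentrality $\varphi = \|\boldsymbol{\mu}\|^2/\sigma(t)^2$. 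Plugging this directly into Corollary \ref{coll: chi_bound} yields the second inequality.

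Part (3) requires one extra observation: assuming $\mathbf{x}$ and $\mathbf{y}$ are independent, the difference satisfies $\mathbf{x} - \mathbf{y} \sim \mathcal{N}(-\boldsymbol{\mu}, 2\sigma(t)^2 \mathbf{I}_d)$. Standardizing by the correct variance, $(\mathbf{x}-\mathbf{y})/\sqrt{2}\sigma(t) \sim \mathcal{N}(-\boldsymbol{\mu}/\sqrt{2}\sigma(t), \mathbf{I}_d)$, and hence $\|\mathbf{x}-\mathbf{y}\|^2/(2\sigma(t)^2) \sim \chi^2(d, \varphi/2)$, where the noncentrality parameter is $\|\boldsymbol{\mu}\|^2/(2\sigma(t)^2) = \varphi/2$. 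Substituting $\varphi \mapsto \varphi/2$ in Corollary \ref{coll: chi_bound} gives $d + \varphi/2 + \sqrt{4(d + \varphi)\log(2/\delta)} + 2\log(2/\delta)$, which matches the claim.

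Each bound holds with probability at least $1-\delta$ individually. Since the proposition states them simultaneously, one could either apply a union bound (at the cost of replacing $\delta$ by $\delta/3$) or treat each as a pointwise guarantee. The only mild subtlety is the $\varphi/2$ noncentrality in part (3), which is easy to miss if one forgets the factor of $2$ in the variance of $\mathbf{x} - \mathbf{y}$; beyond that, the proof is essentially mechanical given Corollary \ref{coll: chi_bound}.
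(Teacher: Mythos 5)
Your proposal matches the paper's proof exactly: each part standardizes the relevant quadratic form to a central or noncentral chi-squared distribution (with the noncentrality parameter $\varphi/2$ arising for part (3) after accounting for the variance $2\sigma(t)^2$ of $\mathbf{x}-\mathbf{y}$), and then directly invokes Corollary \ref{coll: chi_bound}. Your remarks about the implicit independence assumption in part (3) and the union-bound caveat are sensible observations, though the paper — like your default reading — treats the three bounds as individual pointwise guarantees rather than a simultaneous event.
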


\begin{proof}
1) Since $\mathbf{x} \sim \mathcal{N}(\mathbf{0}, \sigma(t)^2 \mathbf{I}_d)$, $\frac{\|\mathbf{x}\|^2}{\sigma(t)^2}$ follows a central chi-squared distribution $\chi^2(d)$. By the concentration inequality for central chi-squared distributions (Corollary \ref{coll: chi_bound}), with probability $1 - \delta$:  
\begin{align*}
    \frac{\|\mathbf{x}\|^2}{\sigma(t)^2} \leq d + \sqrt{4d \log\left( \frac{2}{\delta} \right)} + 2\log\left( \frac{2}{\delta} \right).
\end{align*}

2) For \( \mathbf{y} \sim \mathcal{N}(\boldsymbol{\mu}, \sigma(t)^2 \mathbf{I}_d) \), $\frac{\|\mathbf{y}\|^2}{\sigma(t)^2}$ follows a noncentral chi-squared distribution $\chi^2(d, \varphi)$, where $\varphi = \frac{\|\boldsymbol{\mu}\|^2}{\sigma(t)^2}$. By Corollary \ref{coll: chi_bound}, with probability $1 - \delta$, we have  
\begin{align*}
    \frac{\|\mathbf{y}\|^2}{\sigma(t)^2} \leq d + \varphi + \sqrt{4(d + 2\varphi) \log\left( \frac{2}{\delta} \right)} + 2\log\left( \frac{2}{\delta} \right).
\end{align*}

3) Since \( \mathbf{x} \sim \mathcal{N}(\mathbf{0}, \sigma(t)^2 \mathbf{I}_d) \) and \( \mathbf{y} \sim \mathcal{N}(\boldsymbol{\mu}, \sigma(t)^2 \mathbf{I}_d) \), their difference \( \mathbf{z} \) satisfies:  
\begin{align*}
    \mathbf{x} - \mathbf{y} \sim \mathcal{N}(-\boldsymbol{\mu}, 2\sigma(t)^2 \mathbf{I}_d).
\end{align*}
Thus, $\frac{\|\mathbf{x} - \mathbf{y}\|^2}{2\sigma(t)^2}$ follows a noncentral chi-squared distribution $\chi^2(d, \varphi/2)$, where 
% the noncentrality parameter is 
$\varphi/2 = \frac{\|\boldsymbol{\mu}\|^2}{2\sigma(t)^2}$.

By Corollary \ref{coll: chi_bound}, with probability $1 - \delta$, we have
\begin{align*}
    \frac{\|\mathbf{x} - \mathbf{y}\|^2}{2\sigma(t)^2} \leq d + \frac{\varphi}{2} + \sqrt{4\left(d + \varphi\right) \log\left( \frac{2}{\delta} \right)} + 2\log\left( \frac{2}{\delta} \right).
\end{align*} 

\end{proof}

% \newpage

\subsection{Proof of Theorem \ref{thm: bound of NSG}}
\label{sec: proof bound}

Given a video $\mathbf{x} \in \mathbb{R}^{T \times d}$, its NSG Feature is $\mathbf{G}(\mathbf{x})=\{\mathbf{g}(\mathbf{x},t)\}_{t=1}^T$, where $\mathbf{g}(\mathbf{x},t)$ is defined as:
\begin{equation}
    \mathbf{g}(\mathbf{x}, t) = \frac{\nabla_{\mathbf{x}} \log p(\mathbf{x}, t)}{-\partial_t \log p(\mathbf{x}, t)+\lambda},
\end{equation} 
Here, $\lambda >0$ and $p(\mathbf{x}, t)$ is the probability density of the real video parameterized by time $t$.

Note that Theorem \ref{thm: bound of NSG} share a common lower bound $C$ on both $D_r(\mathbf{x}) {=} \lambda {-} \partial_t \log p(\mathbf{x}, t)$ and $D_f(\mathbf{y}) {=} \lambda {-} \partial_t \log p(\mathbf{y}, t)$. We first derive the conditions for $D_r(\mathbf{x}){>}C{>}0$ and $D_f(\mathbf{y}){>}C{>}0$ to hold in Proposition \ref{prop: assumption}. The same analytical approach can be extended to examine other cases.

\begin{prop}\label{prop: assumption}
Let the real video distribution be  $\mathbf{x} \sim \mathcal{N}(\mathbf{0}, \sigma(t)^2 \mathbf{I}_d)$ and the generated video distribution be $\mathbf{y} \sim \mathcal{N}(\boldsymbol{\mu}, \sigma(t)^2 \mathbf{I}_d) $, respectively, where $\mathbf{I}_d \in \mathbb{R}^{d\times d}$ is an identity matrix and $\boldsymbol{\mu} \neq \mathbf{0} \in \mathbb{R}^d$ is the distribution shift, we have $D_r(\mathbf{x})>C>0$ and $D_f(\mathbf{y})>C>0$ with probability at least $1 - \delta$, provided $C$ and $\lambda$ meet the following conditions:  

1) \quad \text{Case 1} ($\frac{\dot{\sigma}(t)}{\sigma(t)} >0$): 
\begin{align*}
    &C=\lambda - \frac{\dot{\sigma}(t)}{\sigma(t)} \cdot \varphi+ \frac{2\dot{\sigma}(t)}{\sigma(t)}\sqrt{d \log\left( \frac{2}{\delta} \right)} + \frac{2\dot{\sigma}(t)}{\sigma(t)}\log\left( \frac{2}{\delta} \right),\\
    &\lambda > \frac{\dot{\sigma}(t)}{\sigma(t)}(d + \varphi)- \frac{2\dot{\sigma}(t)}{\sigma(t)}\sqrt{d \log\left( \frac{2}{\delta} \right)} - \frac{2\dot{\sigma}(t)}{\sigma(t)}\log\left( \frac{2}{\delta} \right).    
\end{align*}
where $\varphi = \frac{\|\boldsymbol{\mu}\|^2}{\sigma(t)^2}$.

2) \quad \text{Case 2} ( $\frac{\dot{\sigma}(t)}{\sigma(t)} <0$):
\begin{align*}
    &C=\lambda - \frac{2\dot{\sigma}(t)}{\sigma(t)}\sqrt{d \log\left( \frac{2}{\delta} \right)},\\
    &\lambda > \frac{2\dot{\sigma}(t)}{\sigma(t)}\sqrt{d \log\left( \frac{2}{\delta} \right)}.    
\end{align*}
\end{prop}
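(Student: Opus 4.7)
\textbf{Proof Proposal for Proposition \ref{prop: assumption}.}

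The plan is to reduce the problem to tail bounds on chi-squared random variables. First, I would invoke Proposition \ref{prop: score_dis} to rewrite
\begin{equation*}
    D_r(\mathbf{x}) = \lambda + \frac{\dot{\sigma}(t)}{\sigma(t)}\bigl(d - Z\bigr), \qquad D_f(\mathbf{y}) = \lambda + \frac{\dot{\sigma}(t)}{\sigma(t)}\bigl(d - W\bigr),
\end{equation*}
where $Z = \|\mathbf{x}\|^2/\sigma(t)^2 \sim \chi^2(d)$ and $W = \|\mathbf{y}\|^2/\sigma(t)^2 \sim \chi^2(d,\varphi)$ with $\varphi = \|\boldsymbol{\mu}\|^2/\sigma(t)^2$. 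This way, lower bounds on $D_r$ and $D_f$ reduce to controlling the random fluctuations $d - Z$ and $d - W$ via the concentration inequalities already proved in Theorem \ref{lemma: non chi-squared} (or packaged as Corollary \ref{coll: chi_bound}).

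Next, I would split into the two cases according to the sign of $a := \dot{\sigma}(t)/\sigma(t)$, since this sign dictates whether $D_r,D_f$ are small for large or for small values of $Z,W$. In \emph{Case 1} ($a>0$), $D_r$ and $D_f$ achieve their minima when $Z$ and $W$ are at their upper extremes; here I apply the upper-tail bounds from Theorem \ref{lemma: non chi-squared}, namely $Z \leq d + 2\sqrt{d\log(2/\delta)} + 2\log(2/\delta)$ and $W \leq d+\varphi+2\sqrt{(d+2\varphi)\log(2/\delta)} + 2\log(2/\delta)$, each holding with probability at least $1-\delta/2$. In \emph{Case 2} ($a<0$), the role reverses: a negative multiplier makes $D_r,D_f$ smallest when $Z,W$ are at their lower extremes, so I use only the lower-tail bounds $Z \geq d - 2\sqrt{d\log(2/\delta)}$ and $W \geq d + \varphi - 2\sqrt{(d+2\varphi)\log(2/\delta)}$; crucially, $\varphi$ appears with a positive sign in the lower bound of $W$, and since $a<0$ this contribution works in our favor, so the leading obstruction in Case 2 reduces to the central-chi-squared term only, explaining why the stated $C$ in Case 2 is free of $\varphi$.

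Combining the two chi-squared events via a union bound then yields a simultaneous lower bound on $\min\{D_r(\mathbf{x}), D_f(\mathbf{y})\}$ with probability at least $1-\delta$, and this minimum is by construction the claimed $C$. The stated condition on $\lambda$ is then obtained by imposing the strict positivity $C>0$: rearranging the inequality $\lambda - a\bigl[\text{(deviation terms)}\bigr] > 0$ directly produces the lower bound on $\lambda$ written in each case.

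The main obstacle, in my estimation, is the bookkeeping rather than any deep step: tracking which of the two bounds $D_r\geq \cdots$ and $D_f\geq \cdots$ is the binding constraint (it is $D_f$ in Case 1 because of the extra $\varphi$ term, and essentially $D_r$ in Case 2), and then carefully choosing $t = \log(2/\delta)$ in the concentration inequalities so that the union bound closes at level $\delta$. A secondary subtlety is that in Case 2, the condition $\lambda > 2a\sqrt{d\log(2/\delta)}$ is automatic whenever $\lambda>0$ (since the right-hand side is negative), so it is best viewed as a cleanly written sufficient condition guaranteeing $C>0$ rather than a genuine restriction. The analogous casework for a two-sided lower bound $C$ common to both $D_r$ and $D_f$ extends straightforwardly once this structure is laid out.
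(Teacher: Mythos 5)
Your proposal follows the same route as the paper: rewrite $D_r,D_f$ as $\lambda$ plus a scaled chi-squared fluctuation using Proposition \ref{prop: score_dis}, invoke the tail bounds from Theorem \ref{lemma: non chi-squared} at level $t=\log(2/\delta)$, split by the sign of $\dot{\sigma}(t)/\sigma(t)$ to decide which tail controls the minimum of $D_r,D_f$, and close with a union bound. The decomposition, the choice of tails, and the union-bound structure are exactly what the paper's proof does, so the high-level plan is sound and matches.

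The one step that is too quick is your Case~2 justification that ``$\varphi$ works in our favor, so the leading obstruction reduces to the central chi-squared only.'' The noncentrality $\varphi$ enters the lower-tail bound for $W$ in two competing ways: it shifts the mean upward, giving the favorable $+\varphi$ in $W\geq d+\varphi-2\sqrt{(d+2\varphi)\log(2/\delta)}$, but it also inflates the deviation term from $\sqrt{d}$ to $\sqrt{d+2\varphi}$. After multiplying by $a=\dot{\sigma}/\sigma<0$, the first effect raises the $D_f$ lower bound while the second lowers it, and the sign of the net effect is
\begin{equation*}
(-a)\Bigl[\varphi-2\sqrt{\log(2/\delta)}\bigl(\sqrt{d+2\varphi}-\sqrt{d}\bigr)\Bigr],
\end{equation*}
which can go either way depending on $\varphi$, $d$, and $\delta$. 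So it is not automatic that the central bound on $D_r$ is the binding one in Case~2; a complete proof must actually compare the two lower bounds (or verify that the stated $C$ dominates both), rather than dismissing the $\varphi$-dependent one. With that comparison carried out carefully (and the signs of the resulting inequalities tracked to the end), your outline coincides with the paper's argument.
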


\begin{proof}
Let $W = \frac{\|\mathbf{y}\|^2}{\sigma(t)^2} \sim \chi^2(d, \varphi)$, where $\varphi = \frac{\|\boldsymbol{\mu}\|^2}{\sigma(t)^2}$. From Theorem \ref{lemma: non chi-squared}, we have
\begin{align}
&P\left\{W - (d + \varphi) \geq 2\sqrt{(d + 2\varphi) \log\left( \frac{2}{\delta} \right)} + 2 \log\left( \frac{2}{\delta} \right) \right\} \leq \frac{\delta}{2}, \label{eqn: lower_W}\\
&P\left\{W - (d + \varphi) \leq -2\sqrt{(d + 2\varphi) \log\left( \frac{2}{\delta} \right)}\right\} \leq \frac{\delta}{2}. \label{eqn: upper_W}
\end{align}

1) \quad \textbf{Case 1} ($\frac{\dot{\sigma}(t)}{\sigma(t)} >0$):

Substituting $D_f(\mathbf{y}) = \lambda - \frac{\dot{\sigma}(t)}{\sigma(t)}(W-d)$ into the bound Eqn. (\ref{eqn: lower_W}):
\begin{align*}
&P\left\{D_f(\mathbf{y}) \leq \lambda - \frac{\dot{\sigma}(t)}{\sigma(t)} \cdot \varphi + \frac{2\dot{\sigma}(t)}{\sigma(t)}\sqrt{(d + 2\varphi) \log\left( \frac{2}{\delta} \right)} + \frac{2\dot{\sigma}(t)}{\sigma(t)}\log\left( \frac{2}{\delta} \right) \right\} \leq \frac{\delta}{2}.
\end{align*}
Thus, the following inequalities hold with probability at least $1 - \delta/2$:
\begin{align*}
    D_f(\mathbf{y}) &\geq \lambda - \frac{\dot{\sigma}(t)}{\sigma(t)} \cdot \varphi + \frac{2\dot{\sigma}(t)}{\sigma(t)}\sqrt{(d + 2\varphi) \log\left( \frac{2}{\delta} \right)} + \frac{2\dot{\sigma}(t)}{\sigma(t)}\log\left( \frac{2}{\delta} \right), \\
    D_r(\mathbf{x}) &\geq \lambda + \frac{2\dot{\sigma}(t)}{\sigma(t)}\sqrt{d \log\left( \frac{2}{\delta} \right)} + \frac{2\dot{\sigma}(t)}{\sigma(t)}\log\left( \frac{2}{\delta} \right).
\end{align*}
To ensure $D_r(\mathbf{x})>C>0$ and $D_f(\mathbf{y})>C>0$, we can select $C$ and $\lambda$ as:
\begin{align*}
    &C=\lambda - \frac{\dot{\sigma}(t)}{\sigma(t)} \cdot \varphi+ \frac{2\dot{\sigma}(t)}{\sigma(t)}\sqrt{d \log\left( \frac{2}{\delta} \right)} + \frac{2\dot{\sigma}(t)}{\sigma(t)}\log\left( \frac{2}{\delta} \right),\\
    &\lambda > \frac{\dot{\sigma}(t)}{\sigma(t)}(d + \varphi)- \frac{2\dot{\sigma}(t)}{\sigma(t)}\sqrt{d \log\left( \frac{2}{\delta} \right)} - \frac{2\dot{\sigma}(t)}{\sigma(t)}\log\left( \frac{2}{\delta} \right).    
\end{align*}

2) \quad \textbf{Case 2} ($\frac{\dot{\sigma}(t)}{\sigma(t)} <0$):

Substituting $D_f(\mathbf{y}) = \lambda - \frac{\dot{\sigma}(t)}{\sigma(t)}(W-d)$ into the bound Eqn. (\ref{eqn: upper_W}):
\begin{align*}
&P\left\{D_f(\mathbf{y}) \leq \lambda  - \frac{\dot{\sigma}(t)}{\sigma(t)} \cdot \varphi - \frac{2\dot{\sigma}(t)}{\sigma(t)}\sqrt{(d + 2\varphi) \log\left( \frac{2}{\delta} \right)}  \right\} \leq \frac{\delta}{2}.
\end{align*}
Thus, the following inequalities hold with probability at least $1 - \delta/2$:
\begin{align*}
    D_f(\mathbf{y}) &\geq \lambda - \frac{\dot{\sigma}(t)}{\sigma(t)} \cdot \varphi - \frac{2\dot{\sigma}(t)}{\sigma(t)}\sqrt{(d + 2\varphi) \log\left( \frac{2}{\delta} \right)}, \\
    D_r(\mathbf{x}) &\geq \lambda - \frac{2\dot{\sigma}(t)}{\sigma(t)}\sqrt{d \log\left( \frac{2}{\delta} \right)}.
\end{align*}
To ensure $D_r(\mathbf{x})>C>0$ and $D_f(\mathbf{y})>C>0$, we can select $C$ and $\lambda$ as:
\begin{align*}
    &C=\lambda - \frac{2\dot{\sigma}(t)}{\sigma(t)}\sqrt{d \log\left( \frac{2}{\delta} \right)},\\
    &\lambda > \frac{2\dot{\sigma}(t)}{\sigma(t)}\sqrt{d \log\left( \frac{2}{\delta} \right)}.    
\end{align*}
\end{proof}

% \newpage
Building upon the established Propositions \ref{prop: NSG},  \ref{prop: Dx_dis}, \ref{prop: chi_xy} and \ref{prop: assumption}, we next prove Theorem \ref{thm: bound of NSG}.

\textbf{Theorem \ref{thm: bound of NSG}}.
\emph{
    Let the real video distribution be  $\mathbf{x} {\sim} \mathcal{N}(\mathbf{0}, \sigma(t)^2 \mathbf{I}_d)$ and the generated video distribution be $\mathbf{y} {\sim} \mathcal{N}(\boldsymbol{\mu}, \sigma(t)^2 \mathbf{I}_d) $, respectively, where $\mathbf{I}_d {\in} \mathbb{R}^{d\times d}$ is an identity matrix and $\boldsymbol{\mu} {\neq} \mathbf{0} {\in} \mathbb{R}^d$ is the distribution shift. Given $\mathbf{G}(\mathbf{x}){=}\{\mathbf{g}(\mathbf{x},t)\}_{t=1}^T$,  denote $\varphi {=} {\|\boldsymbol{\mu}\|^2}/{\sigma(t)^2}$ and assume $\left|-\partial_t \log p(\mathbf{x}, t)+\lambda \right|\geq C>0$ and $\left|-\partial_t \log p(\mathbf{y}, t)+\lambda \right|\geq C>0$, with probability at least $1 - \delta$, we have
\begin{align*}
    \left\| \mathbf{G}(\mathbf{x}) {-} \mathbf{G}(\mathbf{y}) \right\|^2 {\leq} \mathcal{O} \left(\frac{T}{C^4 \sigma(t)^2} \! \left[\varphi d +d^2 + \varphi + \log \frac{T}{\delta} \! \cdot \! (\varphi + d) + \log^2 \frac{T}{\delta}\right]\right).
\end{align*}
}

\begin{proof}
Based on the definition of $\mathbf{G}(\mathbf{x})$, we have
\begin{align}
    \left\| \mathbf{G}(\mathbf{x}) - \mathbf{G}(\mathbf{y}) \right\|^2 = \sum_{t=1}^T \left\| \mathbf{g}(\mathbf{x}, t) - \mathbf{g}(\mathbf{y}, t) \right\|^2.
\end{align}
Next, we focus on the bound of $\mathbf{g}(\mathbf{x}, t) - \mathbf{g}(\mathbf{y}, t)$.

Let $D_r(\mathbf{x}) = \lambda - \partial_t \log p(\mathbf{x}, t) $ and $D_f(\mathbf{y}) = \lambda - \partial_t \log p(\mathbf{y}, t)$, by Propositions \ref{prop: score} and \ref{prop: score_dis}, we have 
\begin{align}
    &\left\| \mathbf{g}(\mathbf{x}, t) - \mathbf{g}(\mathbf{y}, t) \right\|^2 \nonumber\\
    =&\left\| \frac{\mathbf{x}/\sigma(t)^2}{ D_r(\mathbf{x})} - \frac{\mathbf{y}/\sigma(t)^2}{ D_f(\mathbf{y})} \right\|^2 \nonumber\\
    =&\left\| \frac{\mathbf{x}/\sigma(t)^2}{ D_r(\mathbf{x})} - \frac{\mathbf{x}/\sigma(t)^2}{ D_f(\mathbf{y})} + \frac{\mathbf{x}/\sigma(t)^2}{ D_f(\mathbf{y})} - \frac{\mathbf{y}/\sigma(t)^2}{ D_f(\mathbf{y})} \right\|^2 \nonumber\\
    \leq&  2\left\| \frac{\mathbf{x}/\sigma(t)^2}{ D_r(\mathbf{x})} - \frac{\mathbf{x}/\sigma(t)^2}{ D_f(\mathbf{y})}\right\|^2 + 2\left\|\frac{\mathbf{x}/\sigma(t)^2}{ D_f(\mathbf{y})} - \frac{\mathbf{y}/\sigma(t)^2}{ D_f(\mathbf{y})} \right\|^2 \nonumber\\
    =& 2 \left(\frac{1}{ D_r(\mathbf{x})} - \frac{1}{ D_f(\mathbf{y})} \right)^2 \cdot  \frac{\left\|\mathbf{x}\right\|^2}{\sigma(t)^4} 
    + 
    2\left(\frac{1}{D_f(\mathbf{y})} \right)^2 \cdot\frac{\left\|\mathbf{x}-\mathbf{y}\right\|^2}{ \sigma(t)^4} \nonumber\\
    =& 2 \left(\frac{D_f(\mathbf{y}) - D_r(\mathbf{x})}{ D_r(\mathbf{x}) D_f(\mathbf{y})} \right)^2 \cdot  \frac{\left\|\mathbf{x}\right\|^2}{\sigma(t)^4} 
    + 
    2\left(\frac{1}{D_f(\mathbf{y})} \right)^2 \cdot\frac{\left\|\mathbf{x}-\mathbf{y}\right\|^2}{ \sigma(t)^4} \nonumber\\
    \leq& 2 \frac{\left|D_f(\mathbf{y}) - D_r(\mathbf{x})\right|^2}{ C^4}  \cdot  \frac{\left\|\mathbf{x}\right\|^2}{\sigma(t)^4} 
    + 
    \frac{2}{C^2}  \cdot\frac{\left\|\mathbf{x}-\mathbf{y}\right\|^2}{ \sigma(t)^4}. \label{eqn: upper_g}
\end{align}

For the first term in Eqn. (\ref{eqn: upper_g}), according to Proposition \ref{prop: Dx_dis}, with probability at least $1 - 2\delta/3$, we have
\begin{align}\label{eqn: first}
    & 2 \frac{\left|D_f(\mathbf{y}) - D_r(\mathbf{x})\right|^2}{ C^4}  \cdot  \frac{\left\|\mathbf{x}\right\|^2}{\sigma(t)^4} 
    \nonumber\\
    \leq& \frac{2}{C^4 \sigma(t)^2} \left ( \varphi + 2\sqrt{(d + 2\varphi)\log\tfrac{12}{\delta}} + 2\sqrt{d\log\tfrac{12}{\delta}} +  2\log\tfrac{12}{\delta} \right) \cdot \left( d + \sqrt{4d \log\left( \tfrac{6}{\delta} \right)} + 2\log\left( \tfrac{6}{\delta} \right) \right) \nonumber\\
    \leq& \frac{2}{C^4 \sigma(t)^2} \left ( \varphi + 4\sqrt{(d + 2\varphi)\log\tfrac{12}{\delta}} +  2\log\tfrac{12}{\delta} \right) \cdot \left( d + \sqrt{4d \log\left( \tfrac{6}{\delta} \right)} + 2\log\left( \tfrac{6}{\delta} \right) \right).
\end{align}

For the second term in Eqn. (\ref{eqn: upper_g}), applying Proposition \ref{prop: chi_xy}, with probability at least $1 - \delta/3$, we have
\begin{align}\label{eqn: second}
    \frac{2}{C^2}  \cdot\frac{\left\|\mathbf{x}-\mathbf{y}\right\|^2}{ \sigma(t)^4} \leq \frac{4}{C^2 \sigma(t)^2} \left( d + \frac{\varphi}{2} + \sqrt{4(d + \varphi) \log\left( \frac{6}{\delta} \right)} + 2\log\left( \frac{6}{\delta} \right) \right).
\end{align}

For simplicity, let $L = \log\tfrac{12}{\delta}$. Since $\log\tfrac{6}{\delta} = L - \log 2 <L$, we have
\begin{align}
    2 \frac{\left|D_f(\mathbf{y}) - D_r(\mathbf{x})\right|^2}{ C^4}  \cdot  \frac{\left\|\mathbf{x}\right\|^2}{\sigma(t)^4} 
     &\leq \frac{2}{C^4 \sigma(t)^2} \left( \varphi + 4\sqrt{(d + 2\varphi)L} + 2L \right) \cdot \left( d + 2\sqrt{dL} + 2L \right)\nonumber\\
    &\leq \frac{2}{C^4 \sigma(t)^2} \left( \varphi + 2(d + 2\varphi) + L + 2L \right) \cdot \left( d + d+L + 2L \right) \nonumber\\
    &=\frac{2}{C^4 \sigma(t)^2} (5\varphi + 2d +3L) (2d+3L) \nonumber\\
    &=\frac{2}{C^4 \sigma(t)^2} \left(10\varphi d +4d^2 + 3L \cdot (5\varphi + 4d) + 9L^2\right). \label{eqn: first_L}
\end{align}
\begin{align}
\frac{2}{C^2}  \cdot\frac{\left\|\mathbf{x}-\mathbf{y}\right\|^2}{ \sigma(t)^4} &\leq \frac{4}{C^2 \sigma(t)^2} \left( d + \frac{\varphi}{2} + 2\sqrt{(d + \varphi)L} + 2L \right) \nonumber\\
&\leq \frac{4}{C^2 \sigma(t)^2} \left( d + \frac{\varphi}{2} + (d + \varphi)L + 2L \right) \nonumber\\
&=\frac{4}{C^2 \sigma(t)^2} \left( d + \frac{\varphi}{2} + L(d + \varphi + 2) \right). \label{eqn: second_L}
\end{align}

Combining Eqn. (\ref{eqn: first_L}) and (\ref{eqn: second_L}) and (\ref{eqn: upper_g}), and substituting $L=\log \frac{12}{\delta}$, with probability at least $1 - \delta$, we have 
\begin{align*}
    &\left\| \mathbf{g}(\mathbf{x}, t) - \mathbf{g}(\mathbf{y}, t) \right\|^2 \\
    \leq& \frac{2}{C^4 \sigma(t)^2} \left(10\varphi d +4d^2 + 3L \cdot (5\varphi + 4d) + 9L^2\right) + \frac{4}{C^2 \sigma(t)^2} \left( d + \frac{\varphi}{2} + L(d + \varphi + 2) \right).
\end{align*}

For further simplicity, note that $\frac{1}{C^2} \leq \frac{1}{C^4}$, we obtain
\begin{align}
    &\left\| \mathbf{g}(\mathbf{x}, t) - \mathbf{g}(\mathbf{y}, t) \right\|^2 \nonumber\\
    \leq& \frac{2}{C^4 \sigma(t)^2} \left(10\varphi d +4d^2 + 2d+ \varphi + L \cdot (17\varphi + 14d + 4) + 9L^2\right) \nonumber\\
    =&\frac{2}{C^4 \sigma(t)^2} \left(10\varphi d +4d^2 + 2d+ \varphi + \log \frac{12}{\delta} \cdot (17\varphi + 14d + 4) + 9 \log^2 \frac{12}{\delta}\right). \label{eqn: g_upper}
\end{align}

Summing over time steps $t=1,\ldots, T$, and replacing $\delta$ in Eqn. (\ref{eqn: g_upper}) into $\delta/T$ , we get:
\begin{align*}
    \left\| \mathbf{G}(\mathbf{x}) {-} \mathbf{G}(\mathbf{y}) \right\|^2 {\leq} \frac{2T}{C^4 \sigma(t)^2} \! \left(10\varphi d +4d^2 + 2d+ \varphi + \log \frac{12T}{\delta} \! \cdot \! (17\varphi + 14d + 4) + 9 \log^2 \frac{12T}{\delta}\right).
\end{align*}
Thus, we obtain the final results
\begin{align*}
    \left\| \mathbf{G}(\mathbf{x}) {-} \mathbf{G}(\mathbf{y}) \right\|^2 {\leq} \mathcal{O} \left(\frac{T}{C^4 \sigma(t)^2} \! \left[\varphi d +d^2 + \varphi + \log \frac{T}{\delta} \! \cdot \! (\varphi + d) + \log^2 \frac{T}{\delta}\right]\right).
\end{align*}
\end{proof}

\newpage
\section{More Related Work}

\textbf{Maximum Mean Discrepancy (MMD).}
Maximum Mean Discrepancy (MMD) is an effective metric for two-sample testing to assess whether two samples originate from the same distribution \cite{gerber2023kernel,gretton2012kernel,kalinke2023nystrom,liu2020learning,han2025trustworthy,muller1997integral,chi2021tohan}.
Originally introduced by M$\ddot{\mathrm{u}}$ller et al. \cite{muller1997integral} as an instance of an integral probability metric, MMD admits several sample-based estimators. Particularly, Gretton et al. \cite{gretton2012kernel} introduce the U-statistic estimator, which is unbiased for the squared MMD and achieves near-minimal variance among all unbiased alternatives. Further, Tolstikhin et al. \cite{tolstikhin2016minimax} derive lower bounds on the estimation error of MMD under finite samples when employing a radial universal kernel. 

Building upon the traditional formulation of MMD, recent advancements incorporate learnable kernels to enhance its discriminative capability. Liu et al. \cite{liu2020learning} develop a data-splitting strategy for kernel optimization and selection, effectively addressing the kernel adaptation challenges for complex-data scenarios. Kim et al. \cite{kim2022minimax} propose an adaptive two-sample test designed for comparing two H$\ddot{\mathrm{o}}$lder densities supported on the $d$-dimensional unit ball. In addition, Zhang et al. \cite{zhang2024detecting} introduce MMD-MP, a multi-population aware optimization framework to further improve the stability of kernel-based MMD training. At present, MMD has been extensively applied to distributional measurement and discrepancy detection tasks across both textual and visual modalities \cite{zhangs2023EPSAD,zhang2024detecting,song2025detecting}.

\section{More Details for Experiment Settings}
\label{sec: more details for exp}

\subsection{More Details on Datasets}
\label{sec:details on dataset}
\textbf{GenVideo}~\cite{chen2024demamba} is a large-scale benchmark for AI-generated video detection, comprising 1.22 million real videos and 1.05 million AI-generated videos. The real video collection aggregates content from three established datasets: MSR-VTT (web video clips)~\cite{xu2016msr}, Kinetics-400 (human action videos)~\cite{kay2017kinetics}, and Youku-mPLUG (diverse online videos captured from Youku.com)~\cite{xu2023youku}. The AI-generated portion features videos produced by 19 distinct generation models, including both open-source implementations (ZeroScope~\cite{zeroscope}, I2VGen-XL~\cite{zhang2023i2vgen}, SVD~\cite{blattmann2023stable}, VideoCrafter~\cite{chen2024videocrafter2}, DynamiCrafter~\cite{xing2024dynamicrafter}, Stable Diffusion(SD)~\cite{zhang2024pia}, SEINE~\cite{chen2023seine}, Latte~\cite{ma2024latte}, OpenSora~\cite{zheng2024open}, ModelScope~\cite{wang2023modelscope}, HotShot~\cite{hotshotxl}, Show-1~\cite{zhang2023show1}, Gen2~\cite{esser2023structure}, Crafter~\cite{chen2023videocrafter1}, Lavie~\cite{wang2024lavie}) and commercial closed-source systems (Pika, Sora, MoonValley, MorphStudio). This dataset  spans multiple generation paradigms, including text-to-video and image-to-video synthesis. Throughout all experiments, we filter videos with less than 8 frames and only uniformly sample 8 frames for each video during training and testing.

\subsection{More Details on Evaluation Metrics}
\label{sec:details on metric}
Video generation detection is inherently a binary classification task. Here, we introduce four fundamental evaluation metrics of binary classification:~\textbf {True Positive} (TP) means correctly predicted positive instances (ground truth is positive, prediction is positive).\textbf{True Negative} (TN) means correctly predicted negative instances (ground truth is negative, prediction is negative). \textbf{False Positive} (FP) means incorrectly predicted positive instances (ground truth is negative, prediction is positive). \textbf{False Negative} (FN) means incorrectly predicted negative instances (ground truth is positive, prediction is negative). 

\textbf{AUROC} denotes the Area Under the Receiver Operating Characteristic Curve~\cite{huang2005using,jimenez2012insights}, which is a widely used statistic for assessing the discriminatory capacity of distribution models. Formally, 
\begin{align*}
    AUROC = \int TP(t) FP(t)dt,
\end{align*}
where $TP(t) = TP(t)/(TP(t)+FN(t))$ is the true positive rate and $FP(t) = FP(t)/(FP(t)+TN(t))$ is false positive rate with a threshed $t$.

\textbf{Accuracy} (ACC) measures the model's overall correctness in classification by calculating the ratio of correctly predicted instances (both true positive and true negative) to the total instances.
\begin{align*}
    \text{Accuracy} = \frac{TP + TN}{TP + TN + FP + FN}.
\end{align*}

\textbf{Recall}~\cite{chinchor1993muc} evaluates the model's ability to identify all relevant instances of a class, measuring the proportion of true positives among all actual positive instances. It emphasizes minimizing false negatives, ensuring comprehensive coverage of positive cases.
\begin{align*}
\text{Recall} = \frac{TP}{TP + FN}.
\end{align*}
\textbf{Precision}~\cite{chinchor1993muc} quantifies the model's capability to avoid false positives by calculating the proportion of true positives among all predicted positive instances. It ensures reliability in positive predictions.
\begin{align*}
\text{Precision} = \frac{TP}{TP + FP}.
\end{align*}
\textbf{F1-score}~\cite{chinchor1993muc} balances Precision and Recall using their harmonic mean, providing a robust metric for scenarios with imbalanced class distributions. It penalizes extreme biases toward either precision or recall.
\begin{align*}
\text{F1-score} = 2 \times \frac{\text{Precision} \times \text{Recall}}{\text{Precision} + \text{Recall}}.
\end{align*}

% \newpage

\subsection{Implementation Details on NSG-VD}
\label{sec: Details on Our Method}

In our NSG-VD, we employ the pre-trained diffusion model $s_\theta$ of Guided Diffusion using the 256 × 256 unconditional checkpoint from the guided-diffusion library \footnote{\url{https:// github.com/openai/guided-diffusion}} following  \cite{dhariwal2021diffusion}. For a given video $\mathbf{x}$ at $t$-th frame, we  compute its score feature $\nabla_{\mathbf{x}} \log p(\mathbf{x}, t) $ by diffusing $\mathbf{x}_t$ at diffusion timestep $5/1,000$ and passing it through $s_\theta$. For the deep kernel $\phi_{\mathbf{G}}$, we employ a single-layer of Swin transformer \cite{liu2021swin}, mapping input features of dimension $8 \times 224 \times224$ to a $300$-dimensional output.

We conduct our experiments on a server with 1× NVIDIA RTX 3090 GPU using Python 3.10.17 and Pytorch 2.7.0. We use Adam optimizer \citep{kingma2014adam} to optimize the kernel parameters $\omega$ with batchsize 24, learning rate $0.0001$, weight decay $0.1$, $\sigma_\phi=0.1$ and $\sigma_{\Phi}=100$. For the testing, we set the decision threshold $\tau=1$ in Eqn. (\ref{eqn: decision}). The overall algorithms for training and testing are in Alg. \ref{alg: NSG_AD_train} and \ref{alg: NSG_AD testing}.

\subsection{Pseudo Code of NSG-VD}
\label{sec: Pseudo_code}

\begin{figure}[h]
\vspace{-1.5em}
\begin{minipage}[h]{0.47\linewidth}
    \centering
    \footnotesize
    \begin{algorithm}[H]\small
    \caption{Training deep kernel of MMD}
    \label{alg: NSG_AD_train}
 \begin{algorithmic}
    \STATE \textbf{Input:} Real and generated videos $S_\mmP^{tr}$, $S_\mmQ^{tr}$; \\
    $\omega \gets \omega_0$; $\lambda \gets 10^{-10}$; learning rate $\eta$;
    
    \FOR{$r = 1,2,\dots,r_{max}$}
    
    \STATE $k_\omega \gets$ kernel function using Eqn. (\ref{eqn:deep_kernel});
    
    \STATE $M(\omega) \gets \widehat{\mathrm{MPP}}_u(S^{tr}_\mmP, S^{tr}_\mmQ; k_\omega)$;
    
    \STATE $V_\lambda(\omega) \gets \hat{\sigma}^{2}(S^{tr}_\mmP, S^{tr}_\mmQ; k_\omega)$ using Eqn. (\ref{eqn:objMMDopt});
    
    \STATE $\hat J_\lambda(\omega) \gets {M(\omega)}/\sqrt{V_\lambda(\omega)}$;
    
    \STATE $\omega \gets \omega + \eta\nabla_{\textnormal{Adam}} \hat J_\lambda(\omega)$;
    
    \ENDFOR
    
    \STATE \textbf{Output:} $k_\omega^*$
\end{algorithmic}
    \end{algorithm}
\end{minipage}\hspace{0.27cm}
\begin{minipage}[h]{0.49\linewidth}
    \centering
    \footnotesize
    \begin{algorithm}[H]\small
    \caption{Detecting videos with NSG-VD}
    \label{alg: NSG_AD testing}
\begin{algorithmic}
\STATE \textbf{Input:} 
Referenced videos $S^{re}_{\mmP}$, testing videos $S^{te}_{\mmQ}$; decision $f(\cdot)$; deep kernel $k_\omega$; threshold $\tau$;
\FOR{$\bx_i$ in $S^{te}_{\mmQ}$}
\STATE $Q_i {\gets} \widehat{\operatorname{MMD}}_{b}^{2}\left(S^{re}_{\mmP}, \{{\bx_i}\};k_{\omega} \right)$ using Eqn. (\ref{eqn: MMD_NSG});
\STATE $f(\bx_i; S^{re}_{\mmP}, k_\omega, \tau) = \mathbb{I}\left(Q_i > \tau\right)$;
\STATE Obtaining $f(\bx_i)$ using Eqn. (\ref{eqn: decision});
\ENDFOR
\STATE \textbf{Output:}  Predictions $\{ f(\bx_i) \}$ of the testing set
\end{algorithmic}
    \end{algorithm}
\end{minipage}
\vspace{-0.85em}
\end{figure}

\newpage
\section{More Experimental Results}

\subsection{More Results on Standard Evaluation}
\label{sec: Results of random seed}

To demonstrate the statistical robustness and reproducibility of our proposed NSG-VD method, we report standard deviations of four metrics across 10 datasets with three different seeds (Table \ref{tab:Statistical Significance}). The table shows that our NSG-VD achieves consistently high performance with minimal variance (\eg, $0.41\%$ for Recall, $0.87\%$ for Accuracy), indicating strong reliability and repeatability of our methods.

\begin{table}[h]
\centering
\caption{Standard deviations of NSG-VD with three different random seeds (\%), where we train all models with $10, 000$ real and generated videos from Kinetics-400 and  SEINE, respectively.}
\label{tab:Statistical Significance}
    \vspace{-0.5em}
\begin{center}
\begin{threeparttable}
\resizebox{0.75\linewidth}{!}{
\begin{tabular}{c|ccccc}
\toprule
Dataset  & Recall & Accuracy & F1 & AUROC \\
\midrule
ModelScope & 92.78 $\pm$ 0.48 & 84.31 $\pm$ 1.88 & 85.54 $\pm$ 1.54 & 92.49 $\pm$ 3.07 \\
MorphStudio & 99.44 $\pm$ 0.96 & 86.53 $\pm$ 2.64 & 88.10 $\pm$ 2.14 & 97.08 $\pm$ 0.92 \\
MoonValley  & 100.00 $\pm$ 0.00 & 87.64 $\pm$ 0.64 & 89.00 $\pm$ 0.50 & 99.05 $\pm$ 0.30 \\
HotShot  & 100.00 $\pm$ 0.00 & 88.06 $\pm$ 2.30 & 89.35 $\pm$ 1.83 & 98.64 $\pm$ 0.49 \\
Show1 & 100.00 $\pm$ 0.00 & 88.89 $\pm$ 2.55 & 90.03 $\pm$ 2.08 & 96.96 $\pm$ 0.93 \\
Gen2 & 98.61 $\pm$ 1.73 & 87.08 $\pm$ 2.20 & 88.43 $\pm$ 1.86 & 95.27 $\pm$ 2.17 \\
Crafter & 99.72 $\pm$ 0.48 & 88.89 $\pm$ 1.05 & 89.98 $\pm$ 0.86 & 98.07 $\pm$ 0.41 \\
Lavie & 98.61 $\pm$ 0.96 & 88.75 $\pm$ 2.17 & 89.77 $\pm$ 1.85 & 95.32 $\pm$ 2.92 \\
Sora & 94.05 $\pm$ 1.03 & 84.12 $\pm$ 1.97 & 83.85 $\pm$ 1.56 & 93.10 $\pm$ 1.36 \\
WildScrape  & 91.67 $\pm$ 2.21 & 85.41 $\pm$ 2.60 & 86.29 $\pm$ 2.37 & 92.75 $\pm$ 0.31 \\ \midrule
\rowcolor{pink!30}Avg. & 97.49 $\pm$ 0.41 & 86.97 $\pm$ 0.87 & 88.04 $\pm$ 0.74 & 95.87 $\pm$ 0.87 \\
\bottomrule
\end{tabular}
}
\end{threeparttable}
\end{center}
\vspace{-1pt}
\end{table}

\subsection{More Results on Impact of Spatial Gradients and Temporal Derivatives}

To comprehensively analyze how spatial gradients and temporal derivatives contribute to detection performance across diverse generative paradigms, we include detailed results across 10 diverse generative paradigms in Table \ref{tab: detailed sptio-temporal}. The spatial gradients achieve strong performance across most generated models (\eg, $81.67\%$ Recall on ModelScope, $97.50\%$ Recall on MorphStudio), with only minor performance gaps on models like HotShot ($72.23\%$ Recall) and Show1 (74.17\% Recall).  

In contrast, the temporal derivatives show complementary strengths and relatively better detection capabilities on these challenging cases, notably achieving $75.40\%$ Recall on HotShot and $77.60\%$ Recall on Show1, where temporal dynamics (\eg, rapid motion transitions in HotShot dataset) may play a more pronounced role in exposing synthetic anomalies.

Notably, our proposed NSG-VD demonstrates superior reliability by integrating both components, achieving an average F1-score of $90.87\%$, a significant improvement over individual features. This highlights the complementary nature of spatiotemporal dynamics modeling, enabling consistent detection performance even when individual cues exhibit dataset-specific limitations.

\begin{table}[h]
    % \vspace{-11pt}
    \caption{Impact of spatial gradients and temporal derivatives across 10 generated models (\%), where we train all models with $10, 000$ real and generated videos from Kinetics-400 and Pika, respectively.}
    % \vspace{-0.5em}
    \label{tab: detailed sptio-temporal}
    \begin{threeparttable}
    % \LARGE
    \resizebox{1.0\linewidth}{!}{
    \begin{tabular}{c|c|cccccccccc|c}
    \toprule
       \multirow{2}{*}{Method} & \multirow{2}{*}{Metric} & Model & Morph & Moon & \multirow{2}{*}{HotShot} & \multirow{2}{*}{Show1} & \multirow{2}{*}{Gen2} & \multirow{2}{*}{Crafter} & \multirow{2}{*}{Lavie} & \multirow{2}{*}{Sora} & Wild & \multirow{2}{*}{ Avg.} \\
       &  & Scope & Studio & Valley &  & & & & & & Scrape &  \\ \midrule
         \multirow{4}{*}{Spatial Gradients} & Recall & 81.67 & 97.50 & 100.00 & 72.23 & 74.17 & 98.33 & 98.13 & 77.12 & 98.21 & 82.50 & \cellcolor{blue!8}\underline{87.99} \\
         & Accuracy & 77.50 & 89.58 & 87.08 & 75.00 & 76.25 & 87.92 & 88.75 & 76.67 & 88.39 & 81.25 & \cellcolor{blue!8}\underline{82.84} \\
         & F1 & 78.40 & 90.35 & 88.56 & 74.36 & 75.74 & 89.06 & 89.73 & 76.86 & 89.43 & 81.48 & \cellcolor{blue!8}\underline{83.40} \\
         & AUROC & 87.32 & 98.43 & 99.99 & 78.76 & 82.72 & 98.98 & 98.64 & 84.15 & 99.01 & 90.52 & \cellcolor{blue!8}\underline{91.85} \\
        \midrule
        \multirow{4}{*}{Tmporal Derivative} & Recall & 52.60 & 40.20 & 58.40 & 75.40 & 77.60 & 49.00 & 72.40 & 47.20 & 60.71 & 70.00 & \cellcolor{blue!8}60.35 \\
         & Accuracy & 67.40 & 61.20 & 70.30 & 78.80 & 79.90 & 65.60 & 77.30 & 64.70 & 69.64 & 76.10 & \cellcolor{blue!8}71.09 \\
         & F1 & 61.74 & 50.89 & 66.29 & 78.05 & 79.43 & 58.75 & 76.13 & 57.21 & 66.67 & 74.55 & \cellcolor{blue!8}66.97 \\
         & AUROC & 72.97 & 68.64 & 81.46 & 87.09 & 87.80 & 74.48 & 84.24 & 72.97 & 76.28 & 83.62 & \cellcolor{blue!8}78.95 \\
        \midrule
        \multirow{4}{*}{\shortstack{NSG-VD\\(Ours)}} & Recall & 68.33 & 98.33 & 100.00 & 92.50 & 87.50 & 80.00 & 98.33 & 94.17 & 78.57 & 82.50 & \cellcolor{pink!30}\textbf{88.02} \\
         & Accuracy & 81.67 & 98.33 & 96.67 & 91.67 & 90.83 & 88.33 & 95.83 & 94.17 & 88.39 & 88.75 & \cellcolor{pink!30}\textbf{91.46} \\
         & F1 & 78.85 & 98.33 & 96.77 & 91.74 & 90.52 & 87.27 & 95.93 & 94.17 & 87.13 & 88.00 & \cellcolor{pink!30}\textbf{90.87} \\
         & AUROC & 92.26 & 98.66 & 98.15 & 94.45 & 96.38 & 94.83 & 98.16 & 97.41 & 96.40 & 94.73 & \cellcolor{pink!30}\textbf{96.14} \\
    \bottomrule
    \end{tabular}
    }
    \end{threeparttable}
\vspace{-1pt}
\end{table}

\newpage
\section{More Discussions on NSG-VD}

\subsection{Efficiency of NSG-VD}

\begin{figure*}[h]
    \vspace{-5pt}
    \begin{minipage}[h]{0.45\linewidth}
    \centering
    \includegraphics[height=0.7\textwidth]{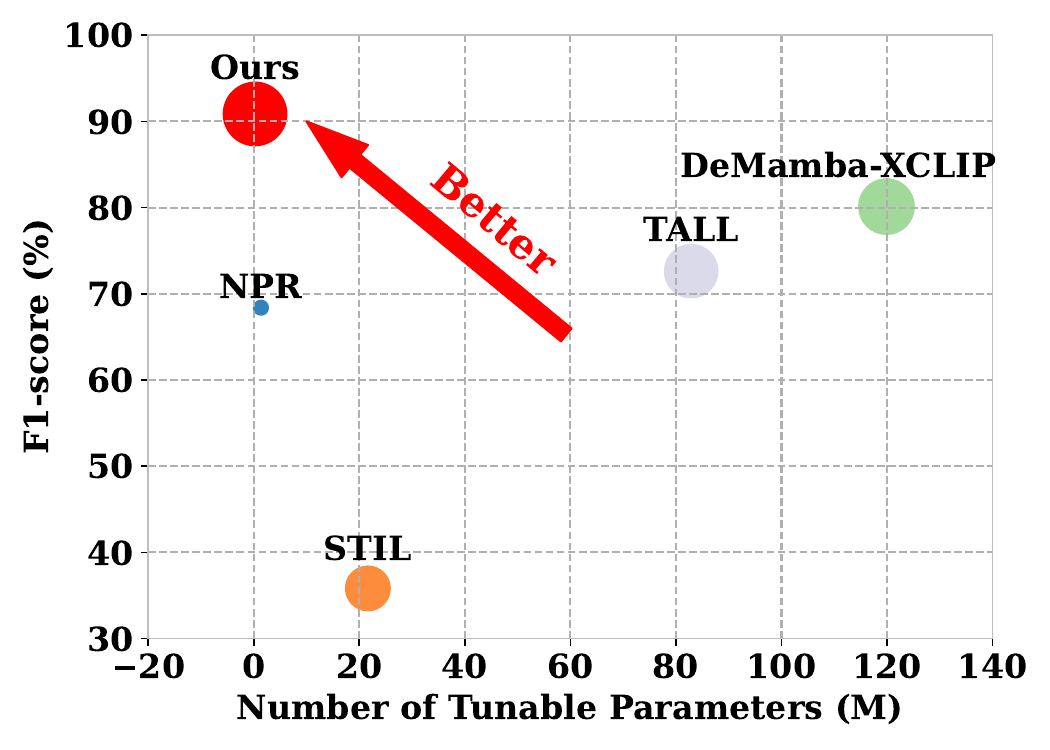}
    \end{minipage}
    \hspace{8pt}
    \begin{minipage}[h]{0.52\linewidth}
    % \centering
    \vspace{-1.0em}
    \begin{center}
    \begin{threeparttable}
    \large
    \renewcommand{\arraystretch}{1.8}
    \resizebox{1\linewidth}{!}{
    \begin{tabular}{l|ccc}
    \toprule
        Method & All Params(M) $\downarrow$ & Tun. Params(M) $\downarrow$ & F1(\%) $\uparrow$   \\
        \midrule
    DeMamba   & 119.89 & 119.89 & 80.12  \\
    TALL      & 82.89  & 82.89  & 72.63    \\
    STIL      & 21.63  & 21.63  &  35.82   \\
    NPR       & \textbf{1.37}   & 1.37   & 68.39  \\
    \rowcolor{pink!30}NSG-VD (Ours) & 527.45 & \textbf{0.25}  & \textbf{90.87} \\
    \bottomrule
    \end{tabular}
    }
    \end{threeparttable}
    \end{center}
    % \vspace{-1.6em} 
    \end{minipage}
    \vspace{-0.5em} 
    \caption{Comparisons with baselines in terms of training costs and performance (\%), where we train all models with $10, 000$ real and generated videos from Kinetics-400 and Pika, respectively.}
    \label{fig:param}
\end{figure*}

\textbf{Performance vs. Training Cost Analysis.} 
We compare the number of trainable parameters and detection performance of NSG-VD against state-of-the-art baselines. As shown in Figure \ref{fig:param}, our NSG-VD achieves a $90.87\%$ F1-score with only $0.25$ M trainable parameters, demonstrating superior parameter efficiency compared to methods like DeMamba ($80.12\%$ F1-score, $119.89$ M parameters) and TALL ($72.63\%$ F1-score, $82.89$ M parameters). This demonstrates that NSG-VD’s physics-driven design enables high accuracy through minimal parameter tuning. 

Existing baselines struggle to balance parameter scale and performance. NPR achieves only $68.39\%$ F1-score despite its minimal trainable parameters ($1.37$ M), while STIL requires $21.63$ M parameters to attain $35.82\%$ F1-score—a suboptimal trade-off compared to NSG-VD’s superior performance with 100× fewer parameters. These results underscore the limitations of conventional artifact-driven frameworks in effective parameter budget utilization, further validating the importance of our physics-guided spatiotemporal modeling paradigm for AI-generated video detection.  

\textbf{Performance vs. Inference Time Analysis.} 
We evaluate the efficiency of our NSG-VD by analyzing both detection performance and computational overhead under the same setting as Table \ref{tab: standard Pika}.  As shown in Table \ref{tab: efficiency}, our NSG-VD achieves superior detection performance (\eg, $97.13\%$ Recall, $87.45\%$ F1-score) with a practical inference latency of $0.3605 s$ per video, which remains viable for non-real-time applications (\eg, judicial video evidence analysis) despite being slower than other baselines. This latency stems from our current implementation of pre-trained diffusion models for gradient estimation—a design choice prioritizing theoretical validation over computational optimization.

Importantly, this current implementation prioritizes accuracy over speed to establish the theoretical and empirical validity of physics-guided spatiotemporal modeling. 
Empirically, we observe that the inference speed can be greatly enhanced with minimal performance degradation by scaling the resolution of pre-trained diffusion models, \eg, reducing resolution to $128\times128$ and $64\times64$ cuts inference time by $67.73\%$ and $91.73\%$, respectively, while retaining over $98.63\%$ and $94.57\%$ of the original AUROC (Table \ref{tab: efficiency}). This trade-off underscores the flexibility of our approach in balancing accuracy and efficiency according to application needs.
Future work may further boost efficiency via diffusion model compression \cite{fang2023structural, wu2024ptq4dit, li2025svdqunat} or efficient architecture design \cite{zhao2024mobilediffusion,zeng2025light}, highlighting NSG-VD's potential for practical deployment as video generation and detection requirements advance.  

\begin{table}[h]
\vspace{-7pt}
    \centering
    \caption{Comparisons with baselines in terms of Inference time and performance, where we train all models with $10, 000$ real and generated videos from Kinetics-400 and SEINE, respectively.}
    \vspace{-0.5em}
    \label{tab: efficiency}
    \begin{center}
    \begin{threeparttable}
    \large
    \renewcommand{\arraystretch}{1.0}
    \resizebox{0.92\linewidth}{!}{
    \begin{tabular}{l|ccccc}
    \toprule
        Method & Recall(\%) $\uparrow$ & Accuracy(\%) $\uparrow$ & F1(\%) $\uparrow$ & AUROC(\%) $\uparrow$ & Infer. Time~(s) $\downarrow$  \\
        \midrule
    DeMamba    & 71.25 & 84.54 & 80.87 & 94.42 & 0.0311  \\
    NPR        & 59.31 & 77.95 & 71.33 & 89.92 & \textbf{0.0036}  \\
    TALL      & 61.47 & 80.20 & 74.05 & 95.14 & 0.0044  \\
    STIL       & 42.35 & 71.08 & 55.81 & 94.94 & 0.0108  \\
    \rowcolor{pink!30}NSG-VD (64x64)  & 78.29 & 83.26 & 81.99 & 90.27 & 0.0298 \\
    \rowcolor{pink!30}NSG-VD (128x128)  & 84.93 & \textbf{86.99} & 86.25 & 94.15 & 0.1163 \\
    \rowcolor{pink!30}NSG-VD (256x256)  & \textbf{97.13} & 86.05 & \textbf{87.45} & \textbf{95.46} & 0.3605 \\
    \bottomrule
    \end{tabular}
    }
    \end{threeparttable}
    \end{center}
    \vspace{-1.2em}  
\end{table}

\newpage
\subsection{Numerical Stability of NSG}

\begin{figure*}[h]
    \centering
    \includegraphics[width=0.6\linewidth]{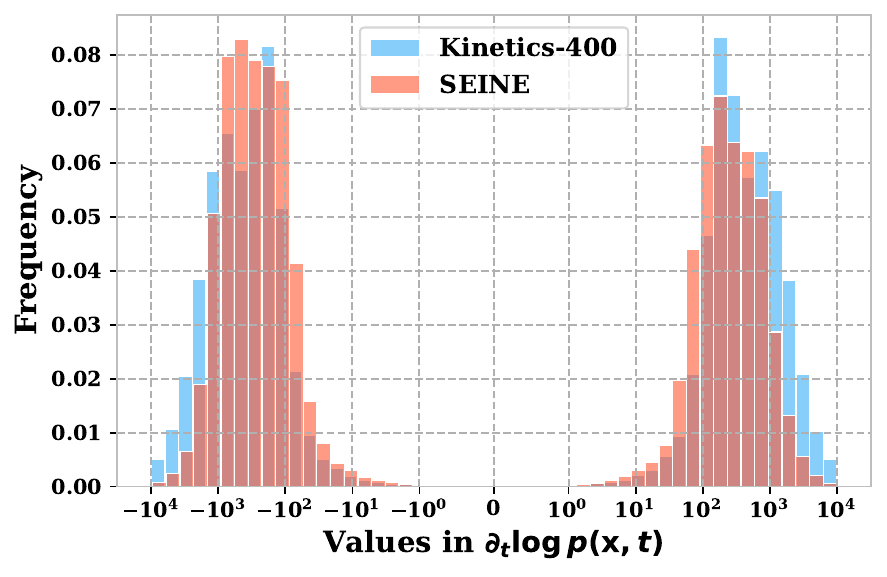}
    \vspace{-0.7em}
    \caption{Distribution of the values of temporal derivatives $\partial_t \log p(\mathbf{x}, t)$ in the NSG statistic across $10, 000$ real and generated videos from Kinetics-400 and SEINE, respectively.
    }
    \label{fig: Numerical Stability}
\end{figure*}

To ensure the numerical stability of the NSG statistic with the temporal derivatives $\partial_t \log p(\mathbf{x}, t)$ in its denominator, we examine the distribution of values in $\partial_t \log p(\mathbf{x}, t)$ across $10, 000$ real and generated videos from Kinetics-400 and SEINE, respectively. From Figure \ref{fig: Numerical Stability}, nearly all values lie outside the critical near-zero range $[-0.1, 0.1]$. This indicates that almost no value in $\partial_t \log p(\mathbf{x}, t)$ approaches zero in practice, effectively mitigating instability risks from division by vanishingly small values. 

The observed distribution aligns with the physical intuition that temporal density changes in real or synthetic videos are inherently non-stationary, resulting in measurable temporal derivatives. Additionally, the regularization term $\lambda > 0$ in the NSG denominator (Eqn. \ref{eqn:NSG}) further safeguards against edge cases where $\partial_t \log p(\mathbf{x}, t)$ might marginally approach zero. These design choices collectively ensure robust numerical stability for NSG across diverse video distributions.

\subsection{Impact of MMD for NSG-VD}
\label{sec: Impact of MMD}

To validate the inherent superiority of the NSG statistic independent of the training objective, we compare our framework trained with both Maximum Mean Discrepancy (NSG-VD) and standard binary cross-entropy loss (NSG-BCE) against baselines using BCE. From Table \ref{tab: Impact of MMD}, NSG-BCE achieves superior performance across all metrics compared to state-of-the-art baselines, even when adopting a conventional training paradigm. For example, it achieves $77.67\%$ average Recall and $82.70\%$ F1-score, significantly outperforming Demamba by $6.42\% \uparrow$ in Recall and $1.83\% \uparrow$ in F1-score, and TALL by $16.20\% \uparrow$ in Recall and $8.65\% \uparrow$ in F1-score. This demonstrates that the NSG statistic’s ability to capture spatiotemporal dynamics remains effective regardless of the training objective.

Notably, NSG-BCE excels in challenging scenarios where other methods struggle. For instance, it achieves $64.29\%$ Recall on Sora (vs. $42.86\%$ for Demamba) and $63.60\%$ Recall on WildScrape (vs. $48.00\%$ for Demamba), highlighting its ability to generalize beyond superficial artifacts. The performance gap widens further in NSG-VD ($97.13\%$ Recall), where MMD explicitly models distributional shifts by the NSG feature in a producing kernel Hilbert space and enables more precise separation between real and synthetic videos. These results confirm that the NSG statistic’s physics-driven design captures fundamental spatiotemporal dynamics, providing an intrinsic advantage over conventional features regardless of the training strategy.

\begin{table}[h]
    % \vspace{-11pt}
    \caption{Impact of MMD in our NSG-VD across 10 generated paradigms (\%), where we train all models with $10, 000$ real and generated videos from Kinetics-400 and SEINE, respectively.}
    % \vspace{-0.5em}
    \label{tab: Impact of MMD}
    \begin{threeparttable}
    % \LARGE
    \resizebox{1.0\linewidth}{!}{
    \begin{tabular}{c|c|cccccccccc|c}
    \toprule
       \multirow{2}{*}{Method} & \multirow{2}{*}{Metric} & Model & Morph & Moon & \multirow{2}{*}{HotShot} & \multirow{2}{*}{Show1} & \multirow{2}{*}{Gen2} & \multirow{2}{*}{Crafter} & \multirow{2}{*}{Lavie} & \multirow{2}{*}{Sora} & Wild & \multirow{2}{*}{ Avg.} \\
       &  & Scope & Studio & Valley &  & & & & & & Scrape &  \\ \midrule
        \multirow{4}{*}{DeMamba} & Recall & 47.40 & 87.80 & 88.20 & 77.40 & 75.00 & 85.60 & 91.60 & 68.60 & 42.86 & 48.00 & \cellcolor{blue!8}71.25 \\
         & Accuracy & 72.80 & 93.00 & 93.20 & 87.80 & 86.60 & 91.90 & 94.90 & 83.40 & 68.75 & 73.10 & \cellcolor{blue!8}\underline{84.54} \\
         & F1 & 63.54 & 92.62 & 92.84 & 86.38 & 84.84 & 91.36 & 94.73 & 80.52 & 57.83 & 64.09 & \cellcolor{blue!8}80.87 \\
         & AUROC & 88.29 & 98.39 & 98.76 & 97.84 & 96.89 & 98.76 & 99.35 & 96.87 & 80.93 & 88.11 & \cellcolor{blue!8}94.42 \\
        \midrule
        \multirow{4}{*}{NPR} & Recall & 46.40 & 76.40 & 69.80 & 63.80 & 56.00 & 75.00 & 83.80 & 58.80 & 35.71 & 27.40 & \cellcolor{blue!8}59.31 \\
         & Accuracy & 71.40 & 86.40 & 83.10 & 80.10 & 76.20 & 85.70 & 90.10 & 77.60 & 66.96 & 61.90 & \cellcolor{blue!8}77.95 \\
         & F1 & 61.87 & 84.89 & 80.51 & 76.22 & 70.18 & 83.99 & 89.43 & 72.41 & 51.95 & 41.83 & \cellcolor{blue!8}71.33 \\
         & AUROC & 85.73 & 96.01 & 93.79 & 91.44 & 89.96 & 95.13 & 96.87 & 89.46 & 84.15 & 76.66 & \cellcolor{blue!8}89.92 \\
        \midrule
        \multirow{4}{*}{TALL} & Recall & 58.60 & 75.00 & 79.40 & 60.20 & 62.00 & 77.80 & 88.20 & 43.80 & 33.93 & 35.80 & \cellcolor{blue!8}61.47 \\
         & Accuracy & 78.80 & 87.00 & 89.20 & 79.60 & 80.50 & 88.40 & 93.60 & 71.40 & 66.07 & 67.40 & \cellcolor{blue!8}80.20 \\
         & F1 & 73.43 & 85.23 & 88.03 & 74.69 & 76.07 & 87.02 & 93.23 & 60.50 & 50.00 & 52.34 & \cellcolor{blue!8}74.05 \\
         & AUROC & 97.10 & 98.12 & 98.63 & 96.37 & 96.45 & 97.76 & 99.38 & 94.80 & 83.35 & 89.45 & \cellcolor{blue!8}\underline{95.14} \\
        \midrule
        \multirow{4}{*}{STIL} & Recall & 28.60 & 57.40 & 78.40 & 46.80 & 18.80 & 66.40 & 69.00 & 24.80 & 14.29 & 19.00 & \cellcolor{blue!8}42.35 \\
         & Accuracy & 64.20 & 78.60 & 89.10 & 73.30 & 59.30 & 83.10 & 84.40 & 62.30 & 57.14 & 59.40 & \cellcolor{blue!8}71.08 \\
         & F1 & 44.41 & 72.84 & 87.79 & 63.67 & 31.60 & 79.71 & 81.56 & 39.68 & 25.00 & 31.88 & \cellcolor{blue!8}55.81 \\
         & AUROC & 95.53 & 97.91 & 99.40 & 96.49 & 92.79 & 98.06 & 98.86 & 91.00 & 92.79 & 86.58 & \cellcolor{blue!8}94.94 \\
        \midrule
        \multirow{4}{*}{\shortstack{NSG-BCE\\(Ours)}} & Recall & 53.40 & 96.40 & 94.80 & 90.60 & 77.60 & 79.40 & 83.20 & 73.40 & 64.29 & 63.60 & \cellcolor{pink!30}\underline{77.67} \\
         & Accuracy & 72.70 & 94.20 & 93.40 & 91.30 & 84.80 & 85.70 & 87.60 & 82.70 & 74.11 & 77.80 & \cellcolor{pink!30}84.43 \\
         & F1 & 66.17 & 94.32 & 93.49 & 91.24 & 83.62 & 84.74 & 87.03 & 80.93 & 71.29 & 74.13 & \cellcolor{pink!30}\underline{82.70} \\
         & AUROC & 84.67 & 98.79 & 97.77 & 96.90 & 92.69 & 93.00 & 93.86 & 91.32 & 83.58 & 87.99 & \cellcolor{pink!30}92.06 \\
        \midrule
        \multirow{4}{*}{\shortstack{NSG-VD\\(Ours)}} & Recall & 91.67 & 100.00 & 100.00 & 100.00 & 100.00 & 98.33 & 100.00 & 97.50 & 94.64 & 89.17 & \cellcolor{pink!30}\textbf{97.13} \\
         & Accuracy & 82.50 & 88.33 & 89.58 & 84.58 & 86.25 & 87.08 & 86.67 & 87.92 & 89.29 & 78.33 & \cellcolor{pink!30}\textbf{86.05} \\
         & F1 & 83.97 & 89.55 & 90.57 & 86.64 & 87.91 & 88.39 & 88.24 & 88.97 & 89.83 & 80.45 & \cellcolor{pink!30}\textbf{87.45} \\
         & AUROC & 90.67 & 97.62 & 98.38 & 95.88 & 96.69 & 97.87 & 97.64 & 95.09 & 96.14 & 88.65 & \cellcolor{pink!30}\textbf{95.46} \\
    \bottomrule
    \end{tabular}
    }
    \end{threeparttable}
\vspace{-1pt}
\end{table}

\newpage
\subsection{Impact of Size of Reference Set for NSG-VD}
We investigate the impact of reference set size by evaluating subsets containing between $10$ and $500$ samples, with other settings remaining consistent with Table \ref{tab: standard Pika}. 
Performance is assessed using comprehensive criteria, including AUROC, Accuracy, F1 Score, and Recall.
Intuitively, a larger reference set enables more accurate estimation of the underlying distribution of real videos, thereby supporting more stable and reliable detection. In contrast, smaller reference sets may introduce substantial sampling and estimation biases.
As shown in Figure \ref{fig: Reference size}, our NSG-VD demonstrates consistently robust performance across varying reference set sizes, with the exception of extreme cases involving very limited samples (\eg, $n=10$). Consequently, we set $n=100$ for all experiments.

\begin{figure*}[h]
    \centering
    \includegraphics[width=0.6\linewidth]{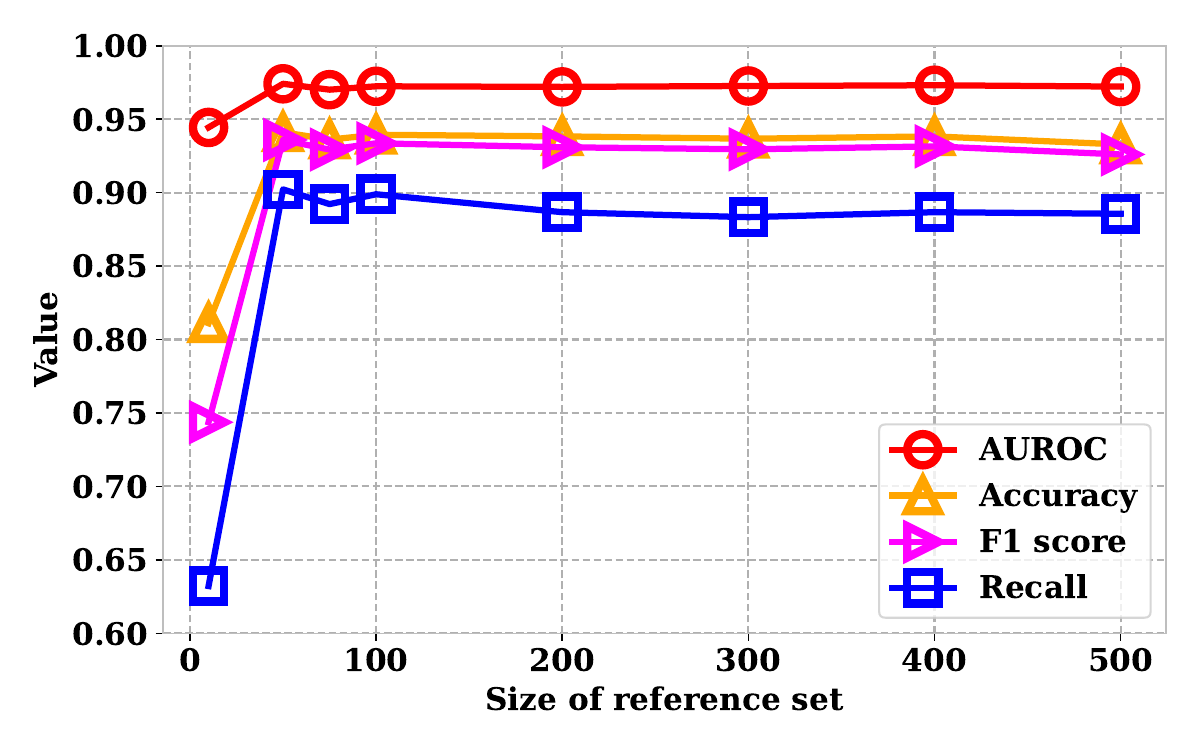}
    \vspace{-0.7em}
    \caption{Impact of reference set size for NSG-VD, where we train all models with $10, 000$ real and generated videos from Kinetics-400 and Pika, respectively.
    }
    \label{fig: Reference size}
\end{figure*}

\newpage

\subsection{Impact of Diversity of Real Videos in the Reference Set}
We conduct additional ablation studies on \textit{real‑domain mixed} reference sets, revealing a key strength of NSG-VD: strong generalization to unseen generated video domains when most real test samples are covered by the reference distribution.
Specifically, we train on Kinetics-400 (real) and SEINE (generated) videos, and test on MSR-VTT (real) and 10 generated videos using reference sets with varying ratios of MSR-VTT and Kinetics-400. From Table \ref{tab:domain coverage}, even a small proportion ($3:7$) yields satisfactory performance ($84.19\%$ of Accuray, $81.12\%$ of F1-Score) compared with baselines, which quickly saturates. This confirms that NSG‑VD needs only modest in‐domain real coverage, while the fake side can remain highly heterogeneous. 

\begin{table}[h]
    \centering
    \caption{Performance under different domain coverage ratios between MSR-VTT and Kinetics-400.}
    \vspace{-0.em}
    \label{tab:domain coverage}
    \small
    \renewcommand{\arraystretch}{1.3}
    \setlength{\tabcolsep}{8pt}
    \begin{threeparttable}
    \resizebox{\linewidth}{!}{
    \begin{tabular}{l|ccccc|cc}
    \toprule
        \makecell[c]{\textbf{Domain Coverage}\\(MSR-VTT : Kinetics-400)} & 
        \makecell[c]{0:10\\(Low)} & 
        \makecell[c]{3:7\\(Medium)} & 
        \makecell[c]{5:5\\(Balanced)} & 
        \makecell[c]{7:3\\(High)} & 
        \makecell[c]{10:0\\(Full)} & 
        DeMamba & 
        TALL \\
    \midrule
        {Average Accuracy (\%)} & 77.82 & 84.19 & 85.57 & \textbf{87.06} & 86.05 & 84.21 & 80.20 \\
        {Average F1-Score (\%)} & 75.68 & 81.12 & 83.36 & 85.41 & \textbf{87.45} & 80.87 & 74.05 \\
    \bottomrule
    \end{tabular}
    }
    \end{threeparttable}
    \vspace{-0.6em}
\end{table}

\subsection{Discussions on Assumption of the Divergence Term}
We assume $\nabla_{\mathbf{x}}\cdot\mathbf{v}$ is subdominant in smoothly varying video distributions for three reasons: \textbf{First}, its direct estimation is ill-posed in high-dimensional video data. Solving $\partial_t\mathbf{x}=\mathbf{v}(\mathbf{x},t)$ is an underdetermined inverse problem, and video noise (\eg, blur or compression) further amplifies estimation errors, making explicit divergence computation unstable and infeasible \cite{horn1981determining,rieutord2014fluid}. \textbf{Second}, many physical flows approximate incompressibility ($\nabla_{\mathbf{x}}\cdot\mathbf{v}\approx 0$), a simplification grounded in fluid dynamics \cite{rieutord2014fluid} and quantum mechanics \cite{bohm2013quantum} that preserves physical interpretability. \textbf{Third}, our NSG remains robust even if $\nabla_{\mathbf{x}}\cdot \mathbf{v}\neq0$, as it captures cumulative spatiotemporal inconsistencies across all terms in Eqn. (\ref{eqn: log_p_with_delta_v}). Experiments confirm the resilience of NSG-VD to deviations from this assumption.

\section{Limitations and Future Directions}
\label{sec: Limitations and Future Directions}

While our proposed NSG-VD method demonstrates strong performance across diverse AI-generated video detection scenarios, several limitations and opportunities for future work remain:

\textbf{Limitations.} First, the current formulation of the NSG statistic relies on simplified physical assumptions (\eg, the incompressible flow approximation in continuity equations), which may fail to capture highly dynamic or discontinuous motion patterns in complex real-world scenarios. Second, the effectiveness of NSG-VD critically depends on the quality of pre-trained diffusion models used for score estimation; domain shifts or limited training data may degrade the reliability of estimated NSG features. Third, while NSG-VD achieves competitive performance, its reliance on diffusion models introduces computational overhead, making it less suitable for large-scale real-time detection tasks. Lastly, while our deep kernel design improves detection performance, its architecture could be further optimized to better adapt to heterogeneous spatiotemporal patterns.

\textbf{Future Directions.} To address these limitations, future work could explore more sophisticated physical models that account for compressible flows or discontinuous motion dynamics \cite{panton2024incompressible}, enhancing the NSG statistic’s adaptability to complex scenarios. Additionally, developing effective domain-specific fine-tuning strategies \cite{xie2023difffit, denker2024deft,zhong2024domain,guo2022deep} could improve the reliability of score estimation under distribution shifts. For real-time deployment, investigating lightweight diffusion model compression techniques (\eg, pruning \cite{fang2023structural, ma2024deepcache}, quantization \cite{wu2024ptq4dit, li2025svdqunat}) would reduce computational costs. Finally, advancing the design of the deep kernel network—such as incorporating attention mechanisms \cite{vaswani2017attention} or hierarchical feature fusion \cite{liu2021swin}—could further optimize the MMD-based detection framework, enabling better 
performance for AI-generated video detection.

\newpage
\section{Broader Impacts}
\label{sec: Broader Impacts}

The development of AI-generated video detection methods like NSG-VD has significant societal, ethical, and technical implications. Our work contributes to mitigating the risks of malicious deepfake content, such as misinformation, identity theft, and political manipulation, by enabling more reliable verification of video authenticity. By leveraging physics-informed principles, NSG-VD provides a reliable framework for detecting synthetic videos that may otherwise evade traditional artifact-based detection methods. This could strengthen trust in digital media, support content moderation efforts, and aid legal or journalistic investigations involving video evidence.

This research aligns with broader efforts to establish trustworthy multimedia ecosystems. By bridging physics principles with machine learning, NSG-VD advances interpretable detection mechanisms—a critical step toward auditing AI-generated content while fostering public awareness of synthetic media risks. We encourage interdisciplinary collaboration among researchers, ethicists, and legislators to ensure such technologies serve as safeguards rather than instruments of control.

\newpage
\section{Visualizations}
\label{sec: Visualizations}

\begin{figure*}[!h]
    \begin{center}
    {\includegraphics[width=0.87\linewidth]{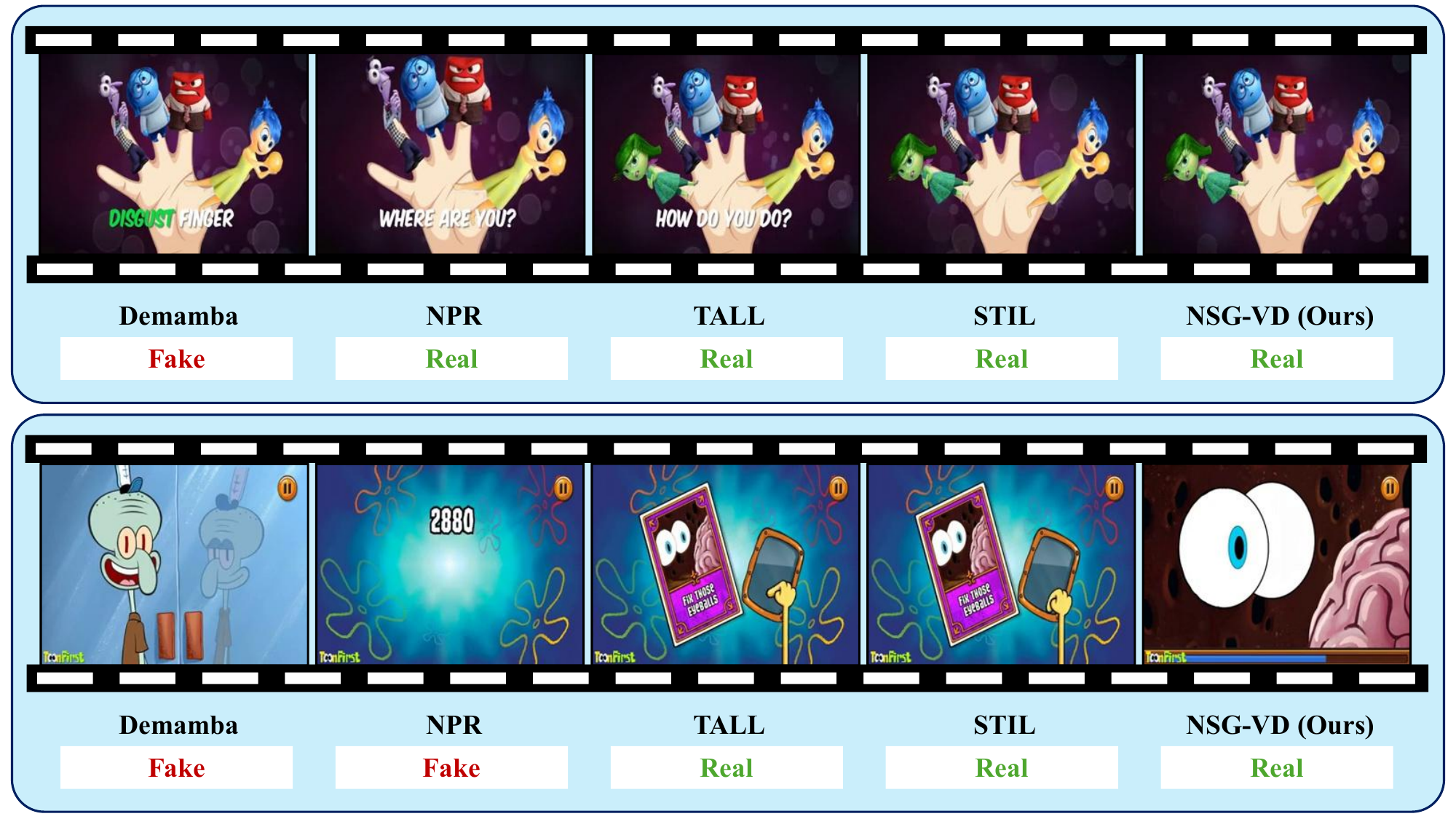}}
    {\includegraphics[width=0.87\linewidth]{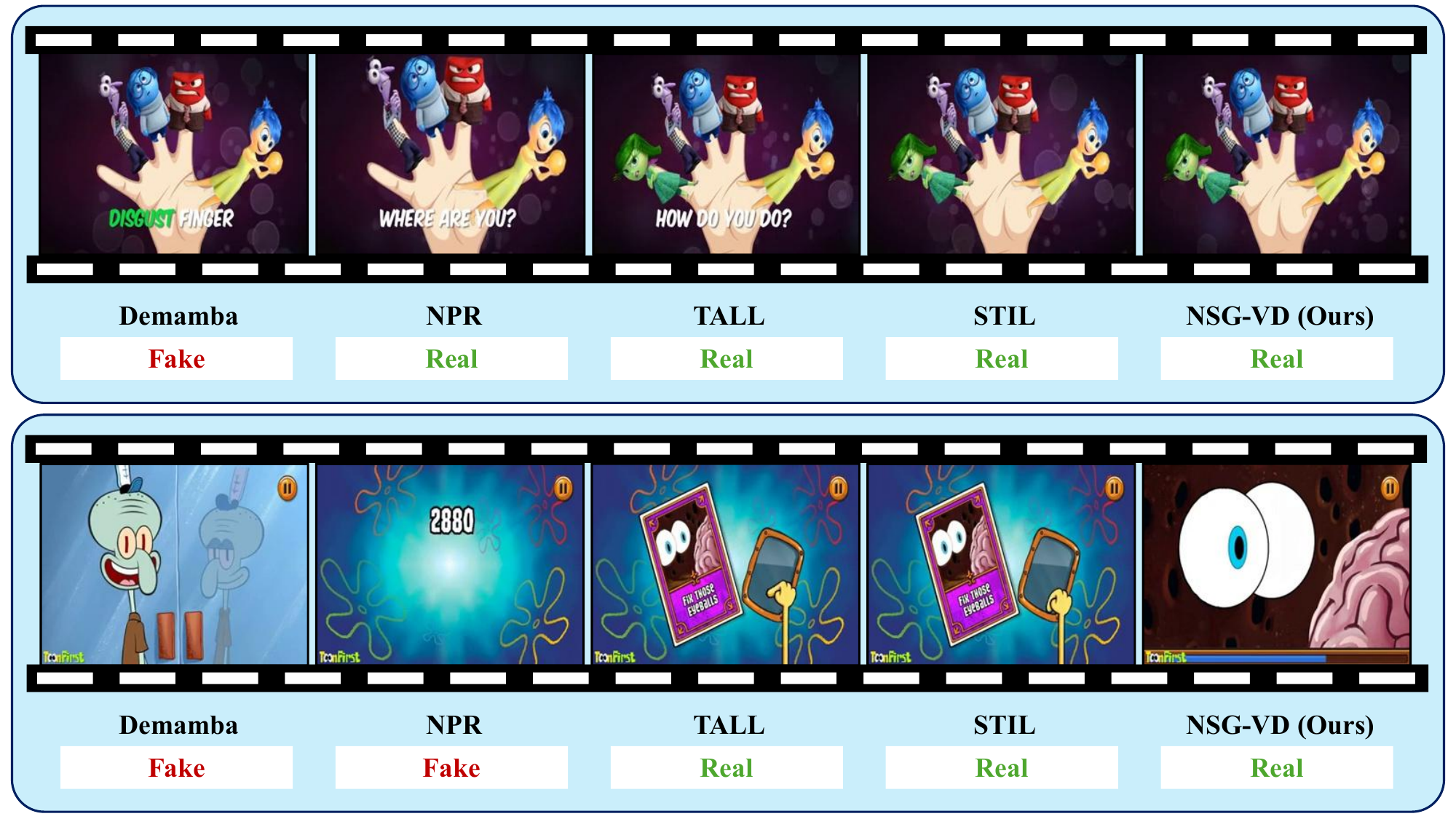}}
    {\includegraphics[width=0.87\linewidth]{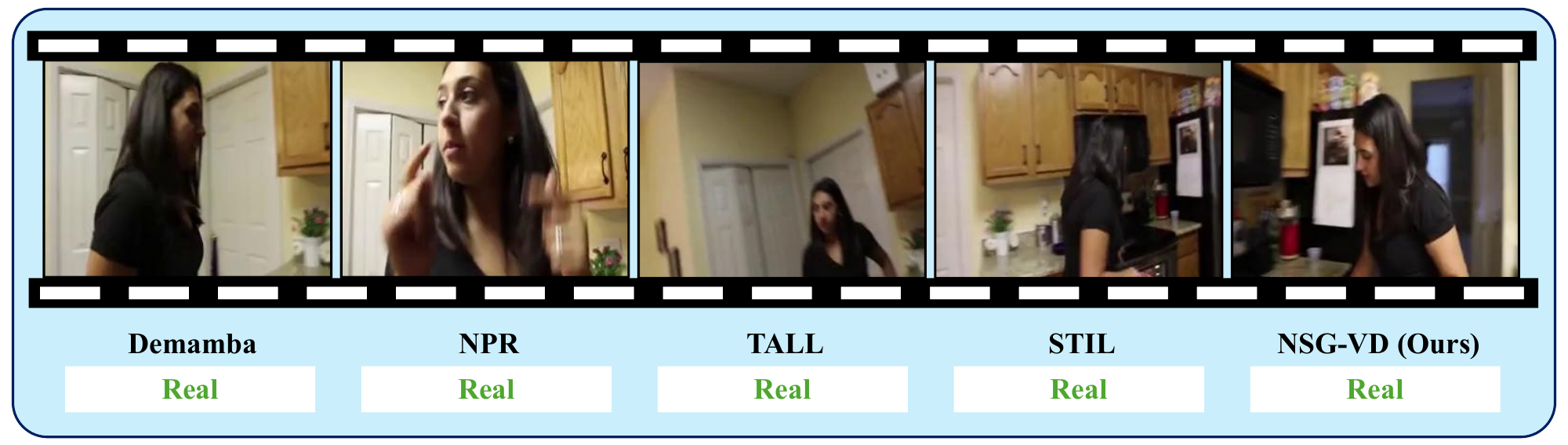}}
    \caption{Results of the detection on \textit{real} videos from the MSR-VTT dataset.}
    \label{fig: visualization_Natural}
    \end{center}
\end{figure*}

\begin{figure*}[!h]
    \begin{center}
    {\includegraphics[width=0.87\linewidth]{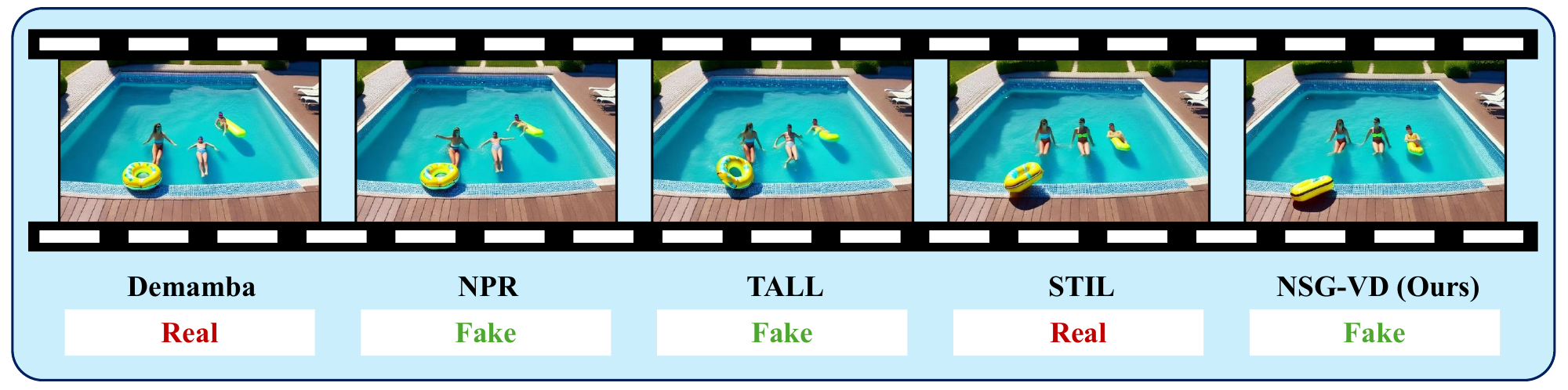}}
    {\includegraphics[width=0.87\linewidth]{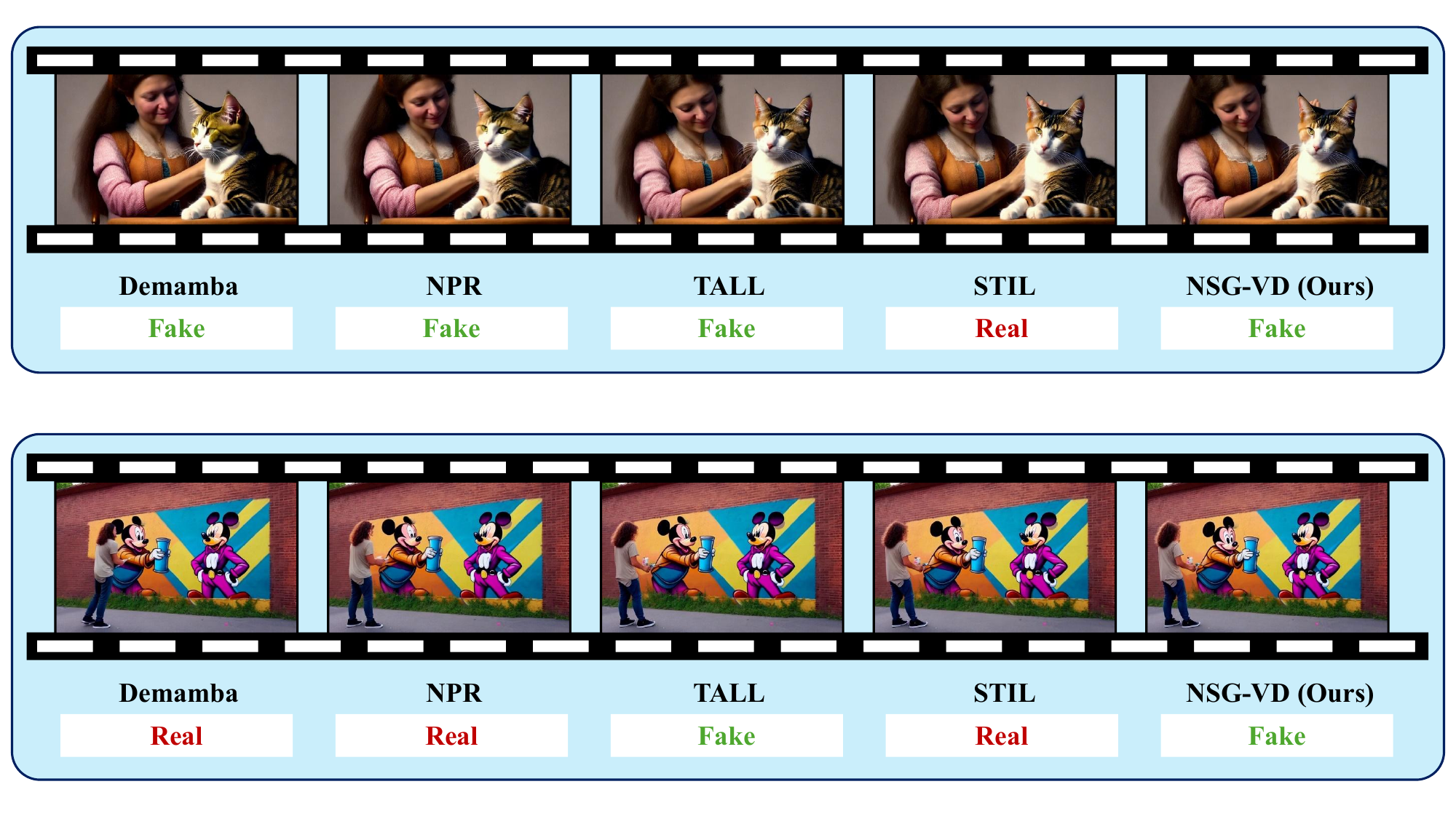}}
    {\includegraphics[width=0.87\linewidth]{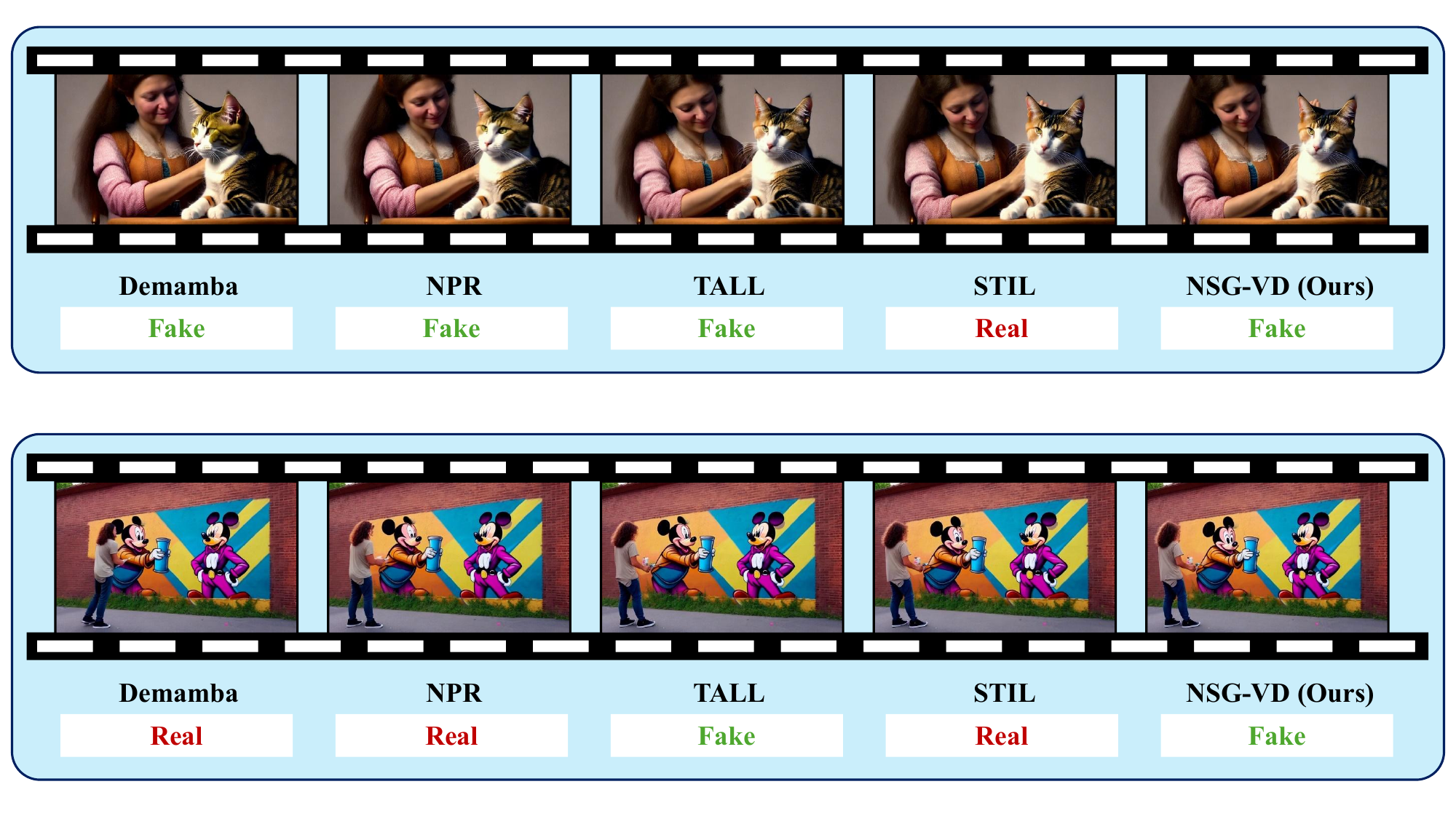}}
    \caption{Results of the detection on \textit{generated} videos from the Crafter dataset.}
    \label{fig: visualization_Crafter}
    \end{center}
\end{figure*}

\begin{figure*}[!h]
    \begin{center}
    {\includegraphics[width=0.95\linewidth]{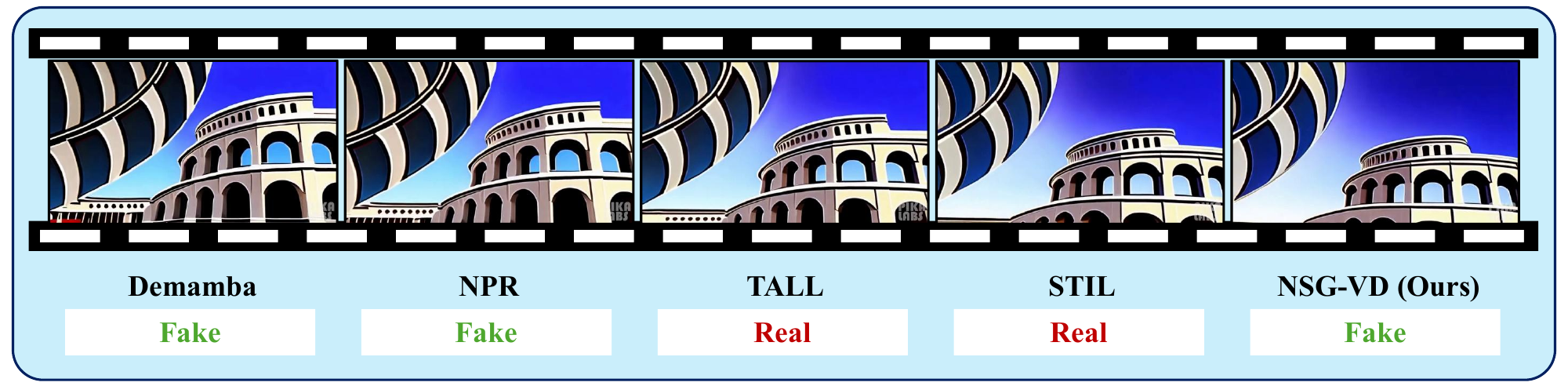}}
    {\includegraphics[width=0.95\linewidth]{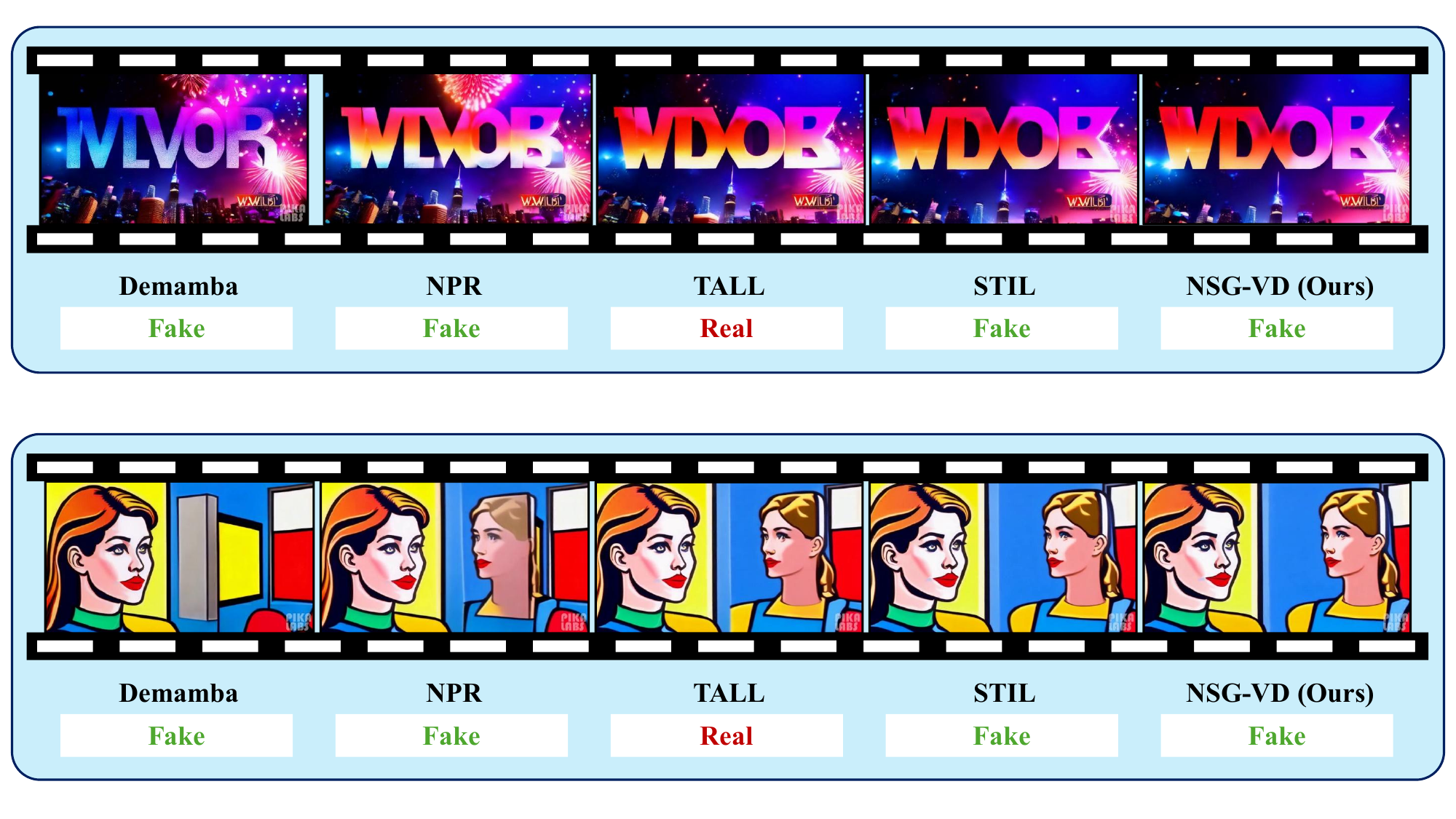}}
    {\includegraphics[width=0.95\linewidth]{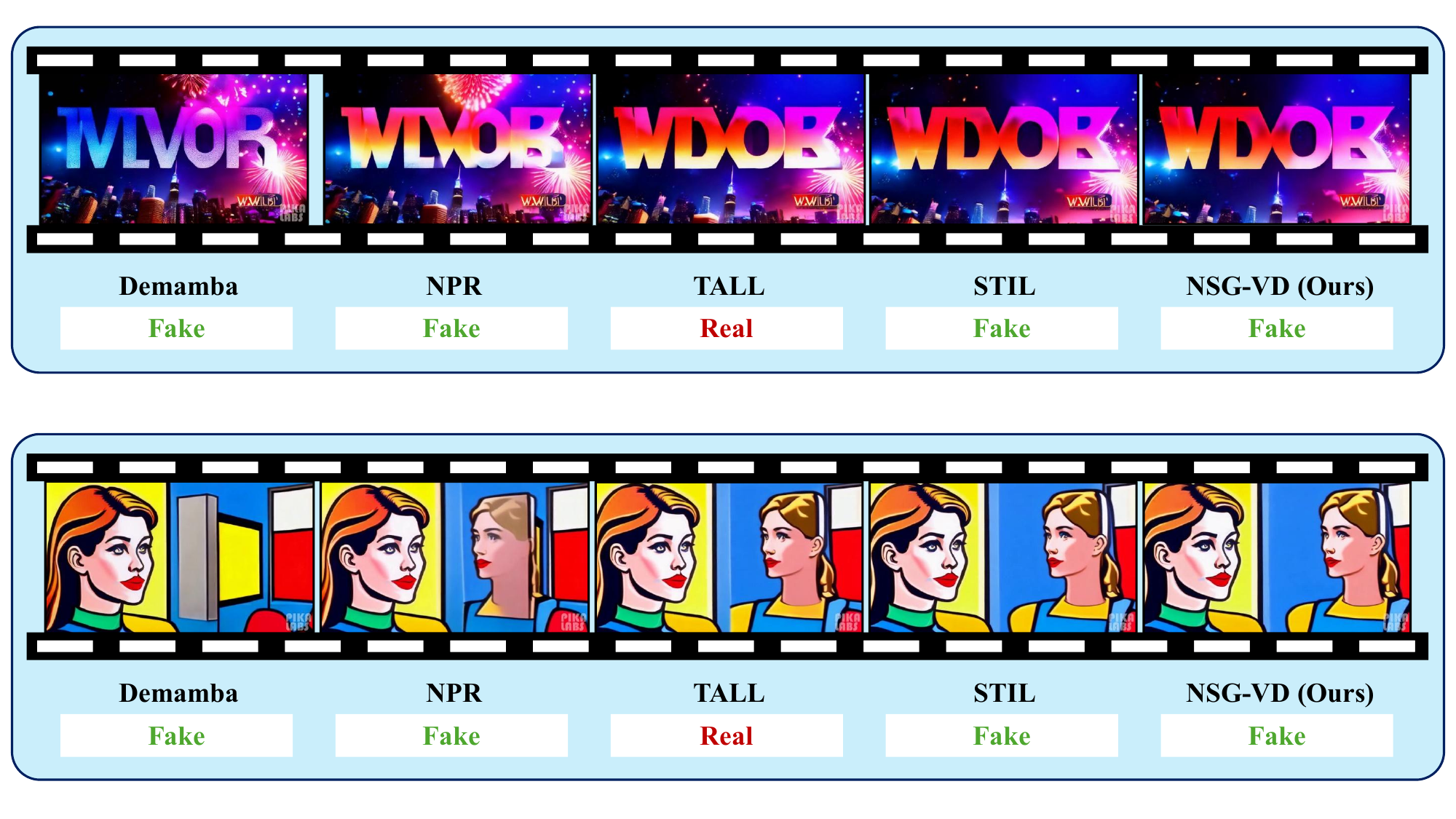}}
    \caption{Results of the detection on \textit{generated} videos from the Gen2 dataset.}
    \label{fig: visualization_Gen2}
    \end{center}
\end{figure*}

\begin{figure*}[!h]
    \begin{center}
    {\includegraphics[width=0.95\linewidth]{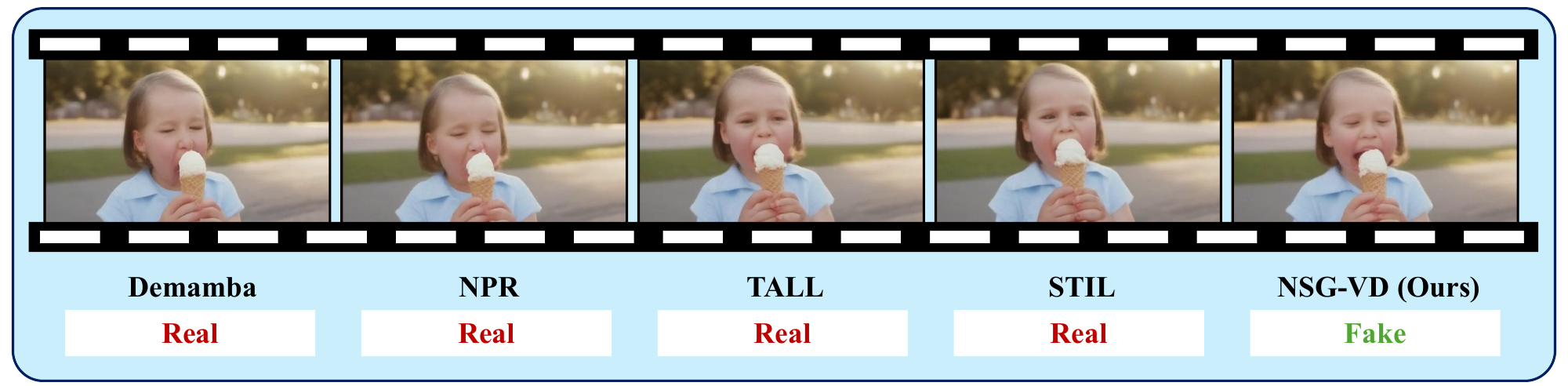}}
    {\includegraphics[width=0.95\linewidth]{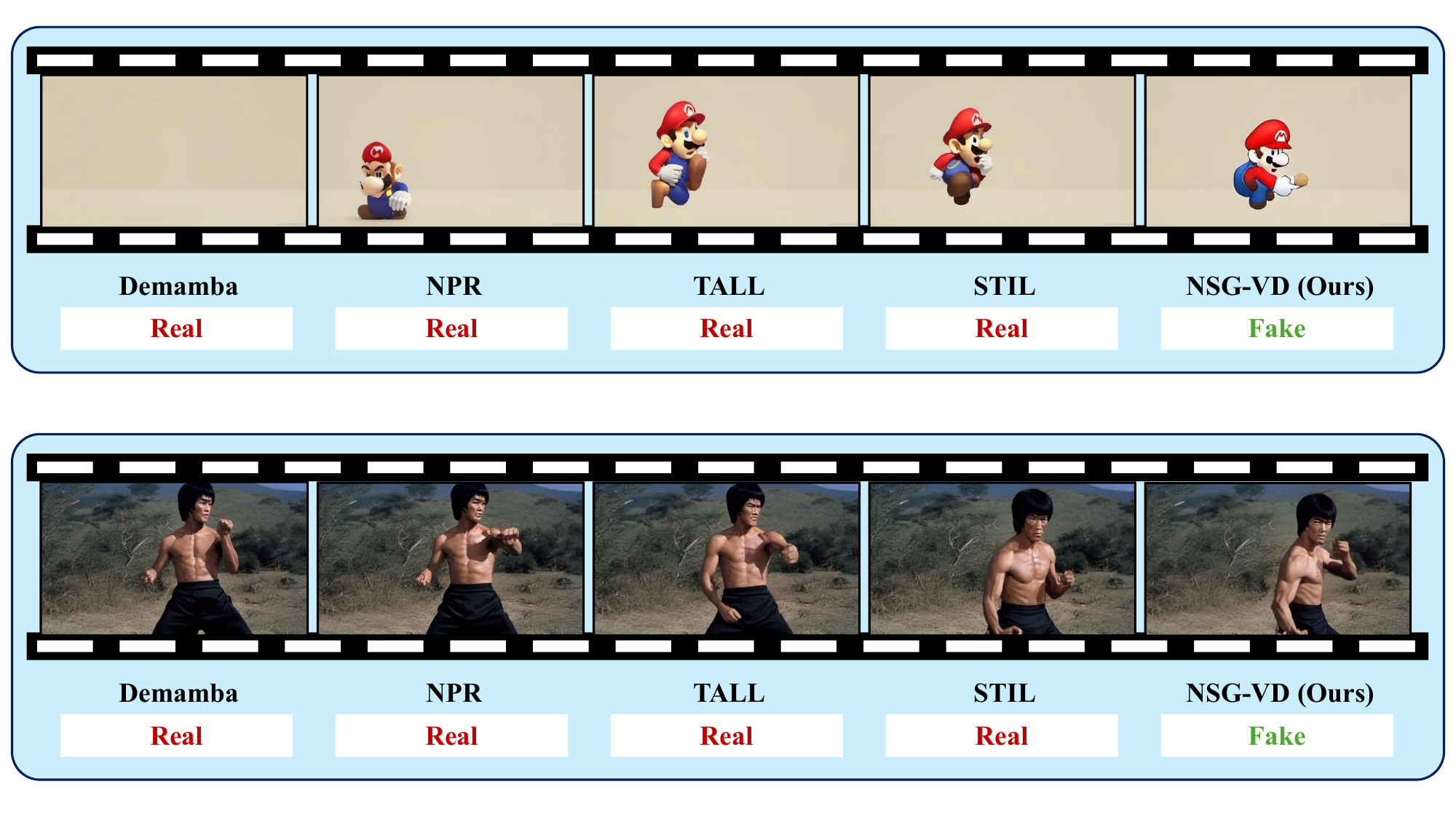}}
    {\includegraphics[width=0.95\linewidth]{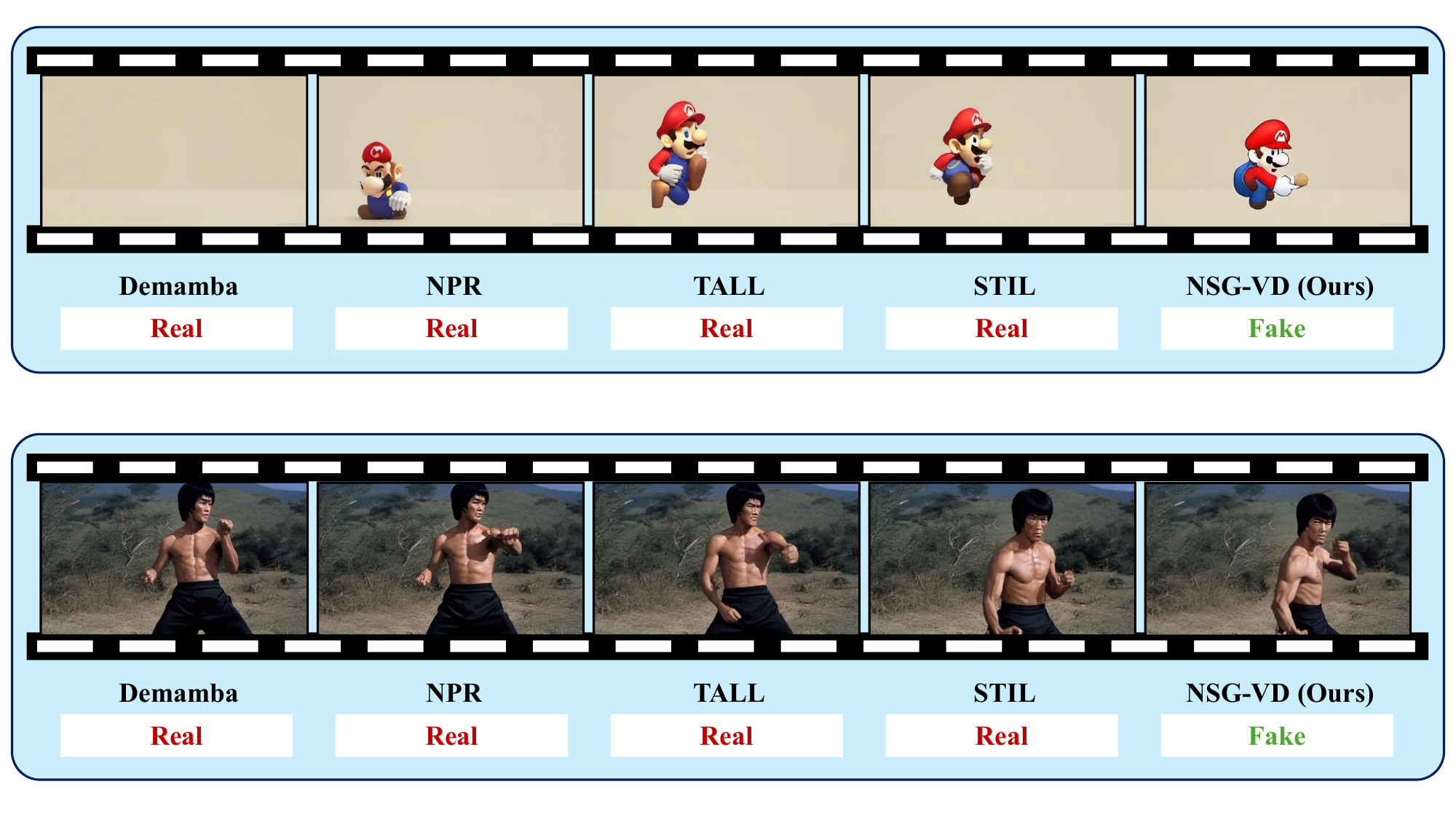}}
    \caption{Results of the detection on \textit{generated} videos from the HotShot dataset.}
    \label{fig: visualization_HotShot}
    \end{center}
\end{figure*}

\begin{figure*}[!h]
    \begin{center}
    {\includegraphics[width=0.95\linewidth]{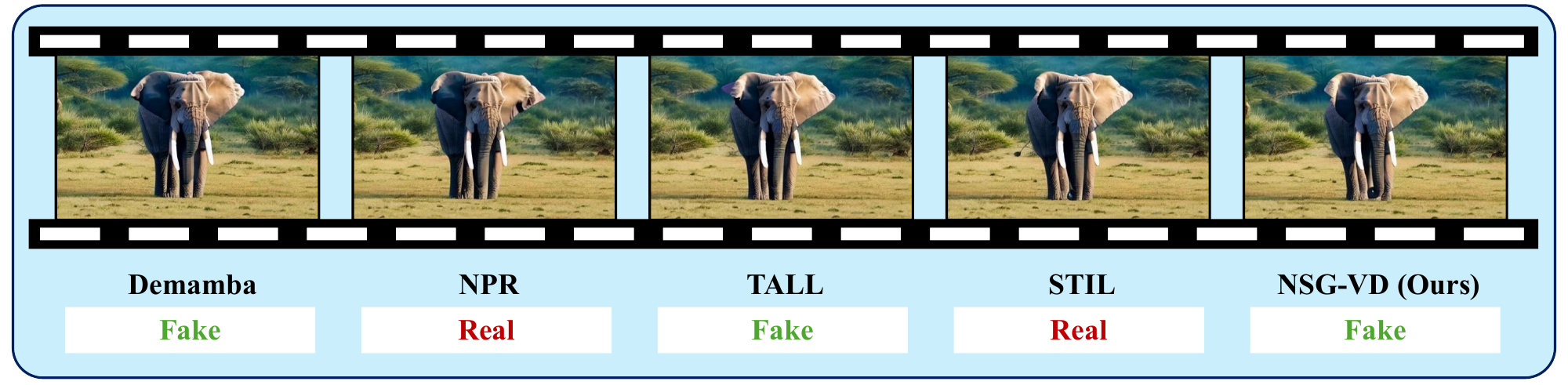}}
    {\includegraphics[width=0.95\linewidth]{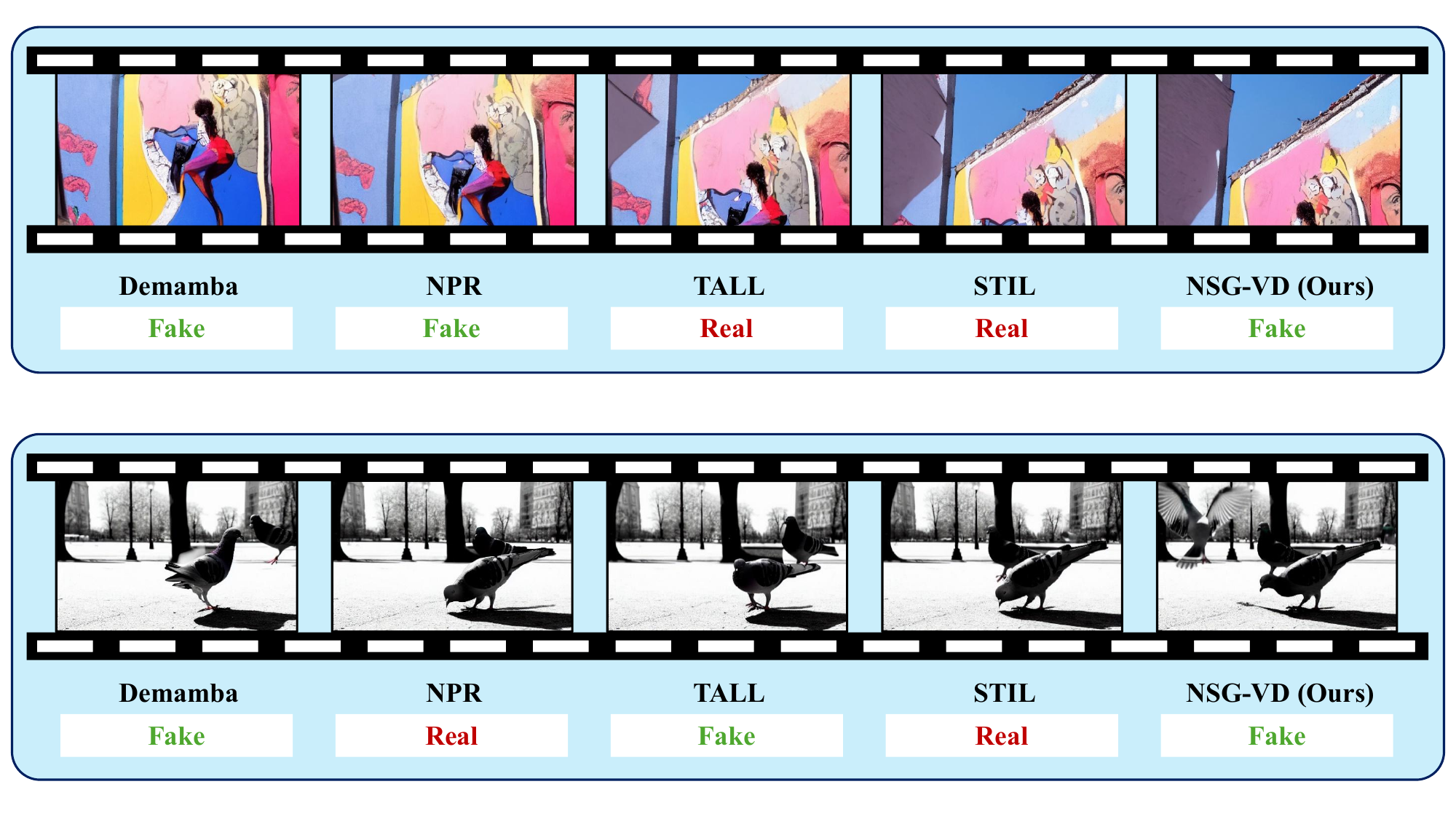}}
    {\includegraphics[width=0.95\linewidth]{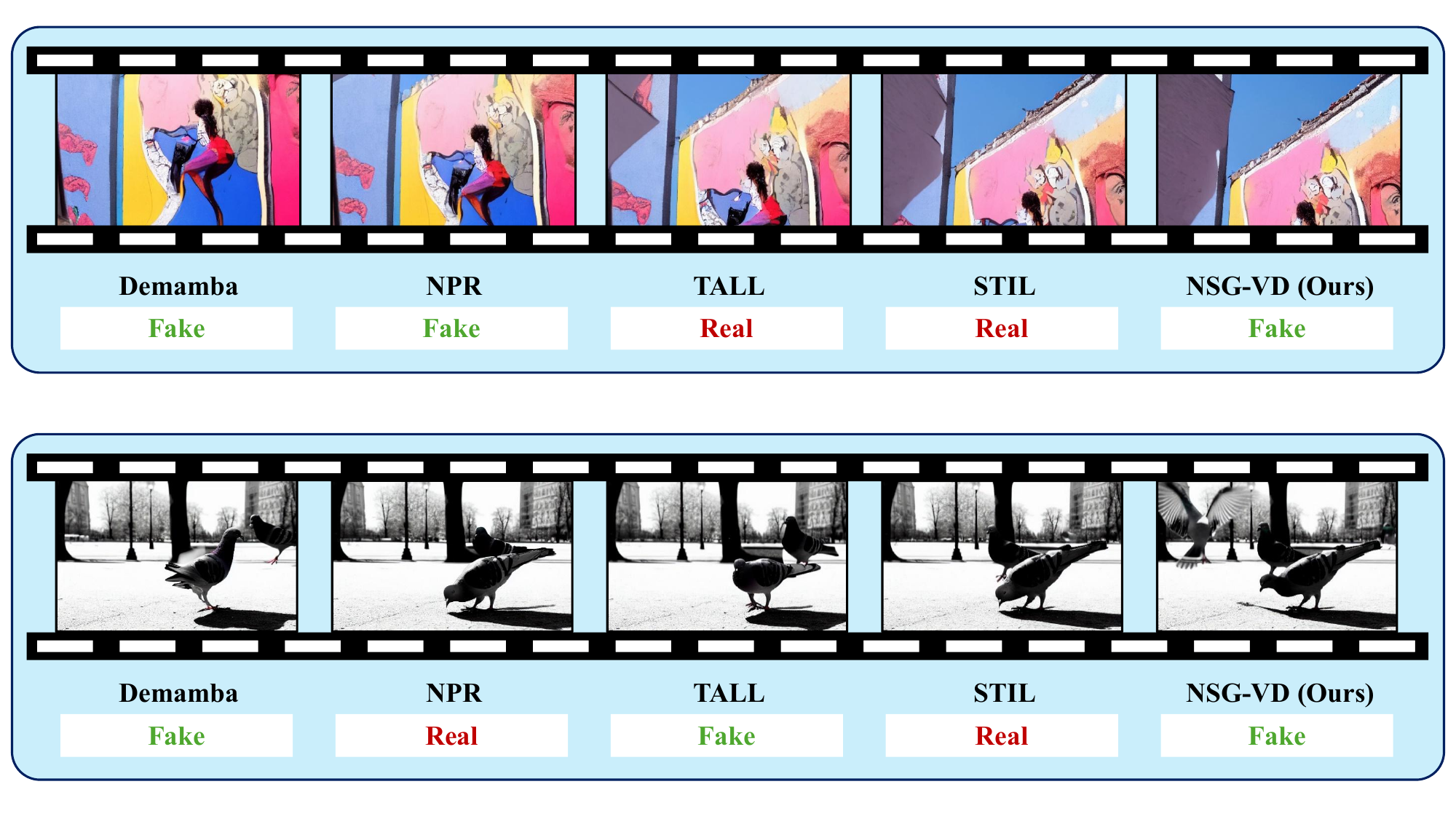}}
    \caption{Results of the detection on \textit{generated} videos from the Lavie dataset.}
    \label{fig: visualization_Lavie}
    \end{center}
\end{figure*}

\begin{figure*}[!h]
    \begin{center}
    {\includegraphics[width=0.95\linewidth]{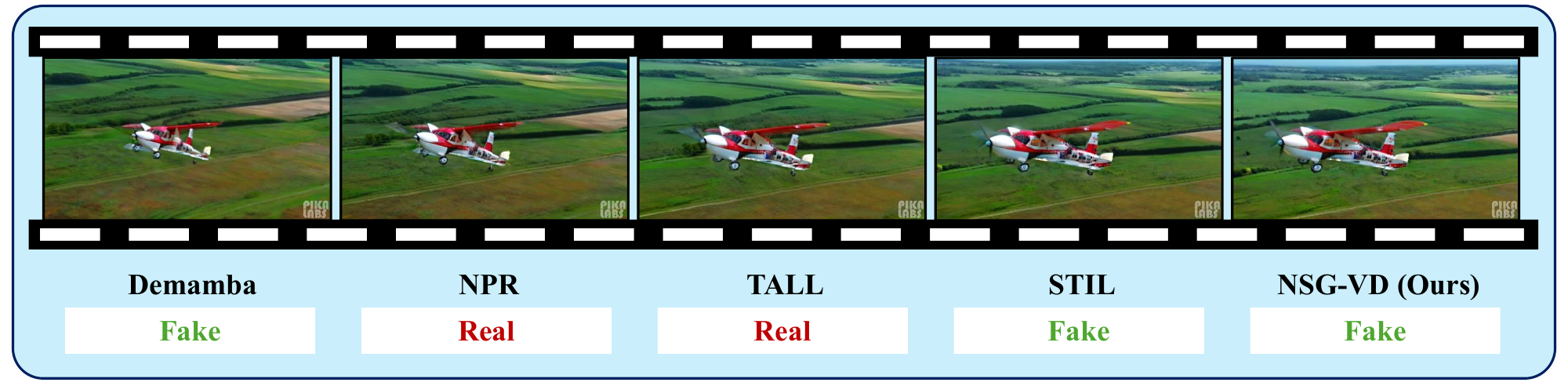}}
    {\includegraphics[width=0.95\linewidth]{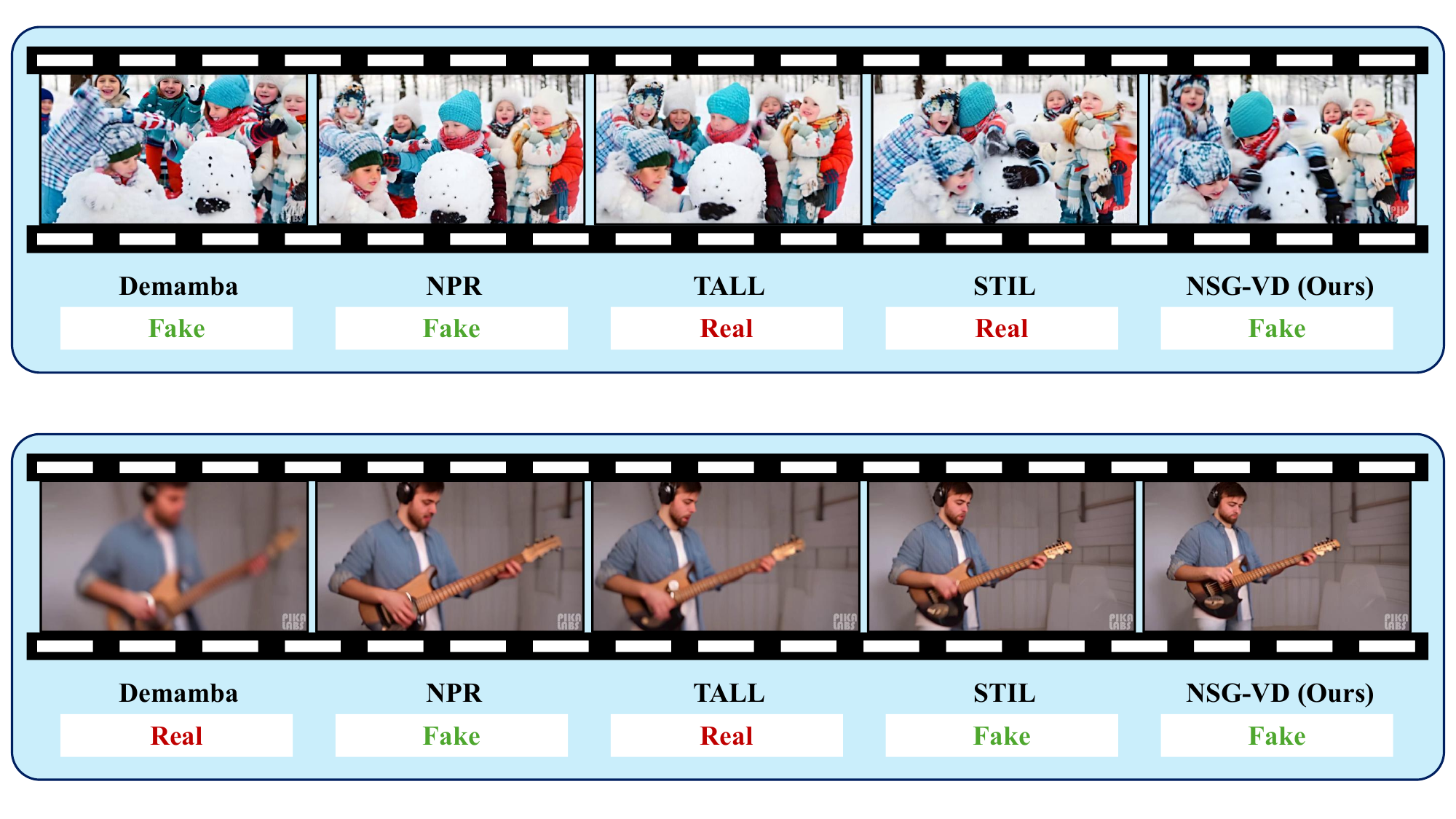}}
    {\includegraphics[width=0.95\linewidth]{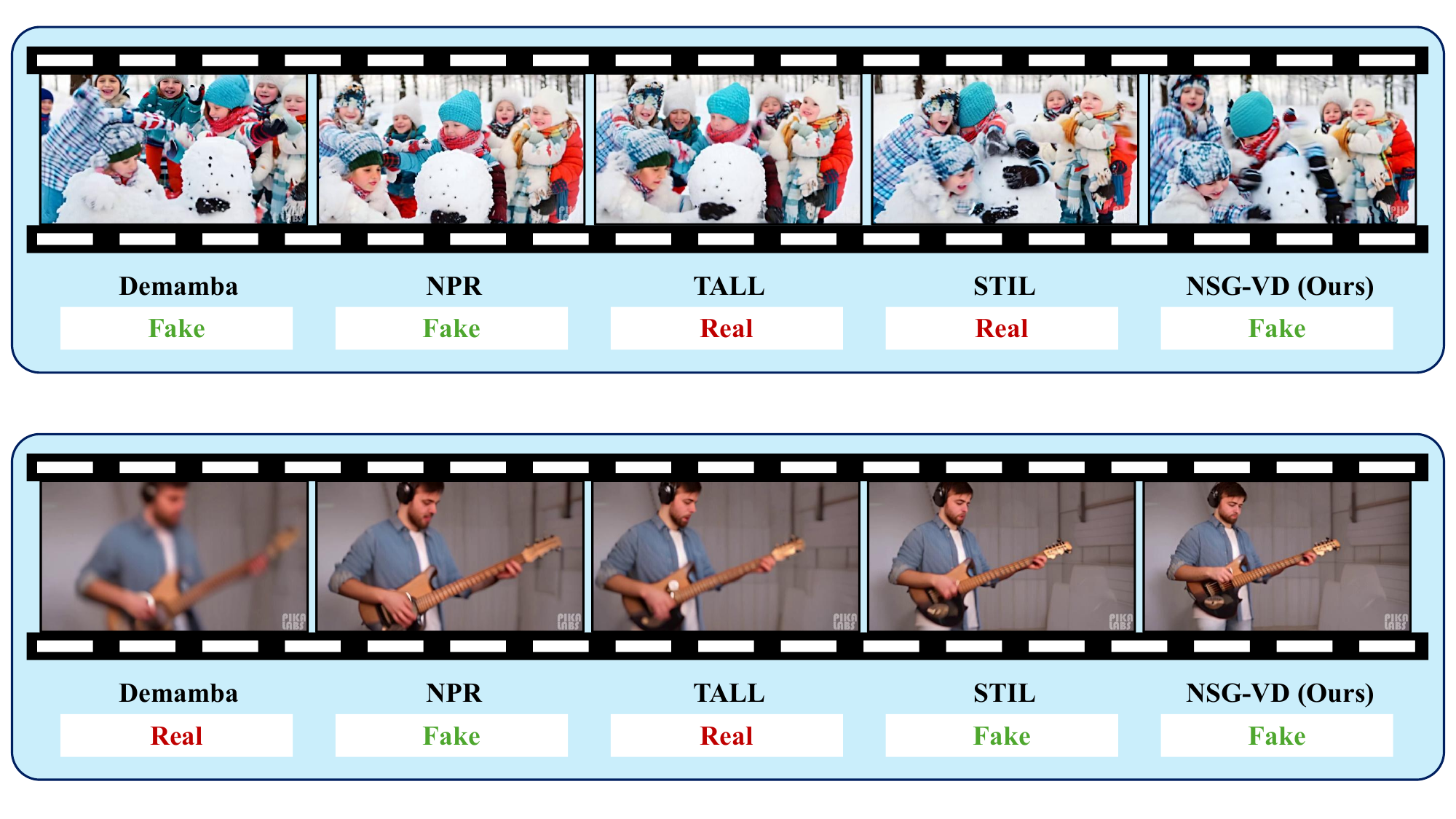}}
    \caption{Results of the detection on \textit{generated} videos from the ModelScope dataset.}
    \label{fig: visualization_ModelScope}
    \end{center}
\end{figure*}

\begin{figure*}[!h]
    \begin{center}
    {\includegraphics[width=0.95\linewidth]{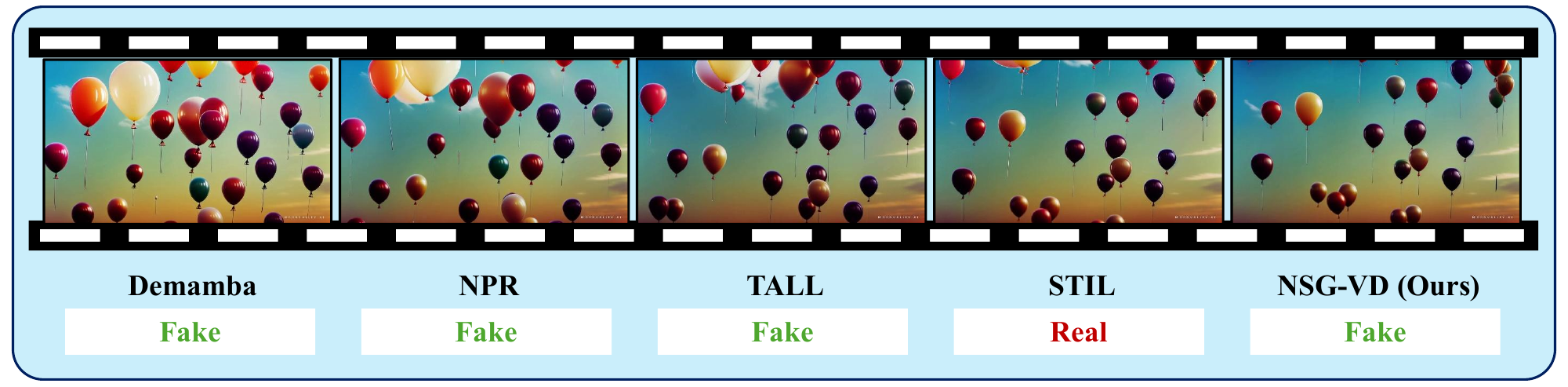}}
    {\includegraphics[width=0.95\linewidth]{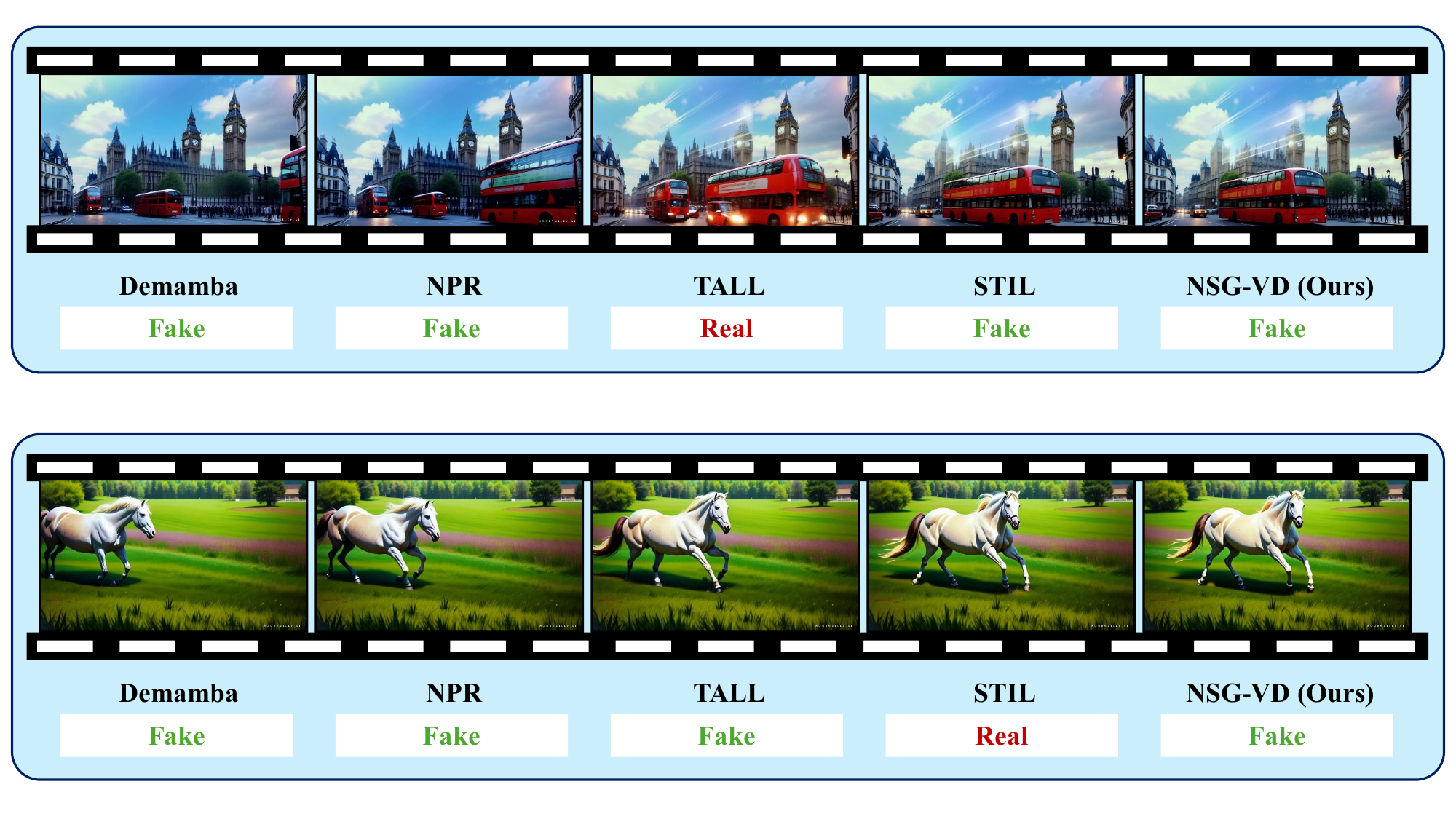}}
    {\includegraphics[width=0.95\linewidth]{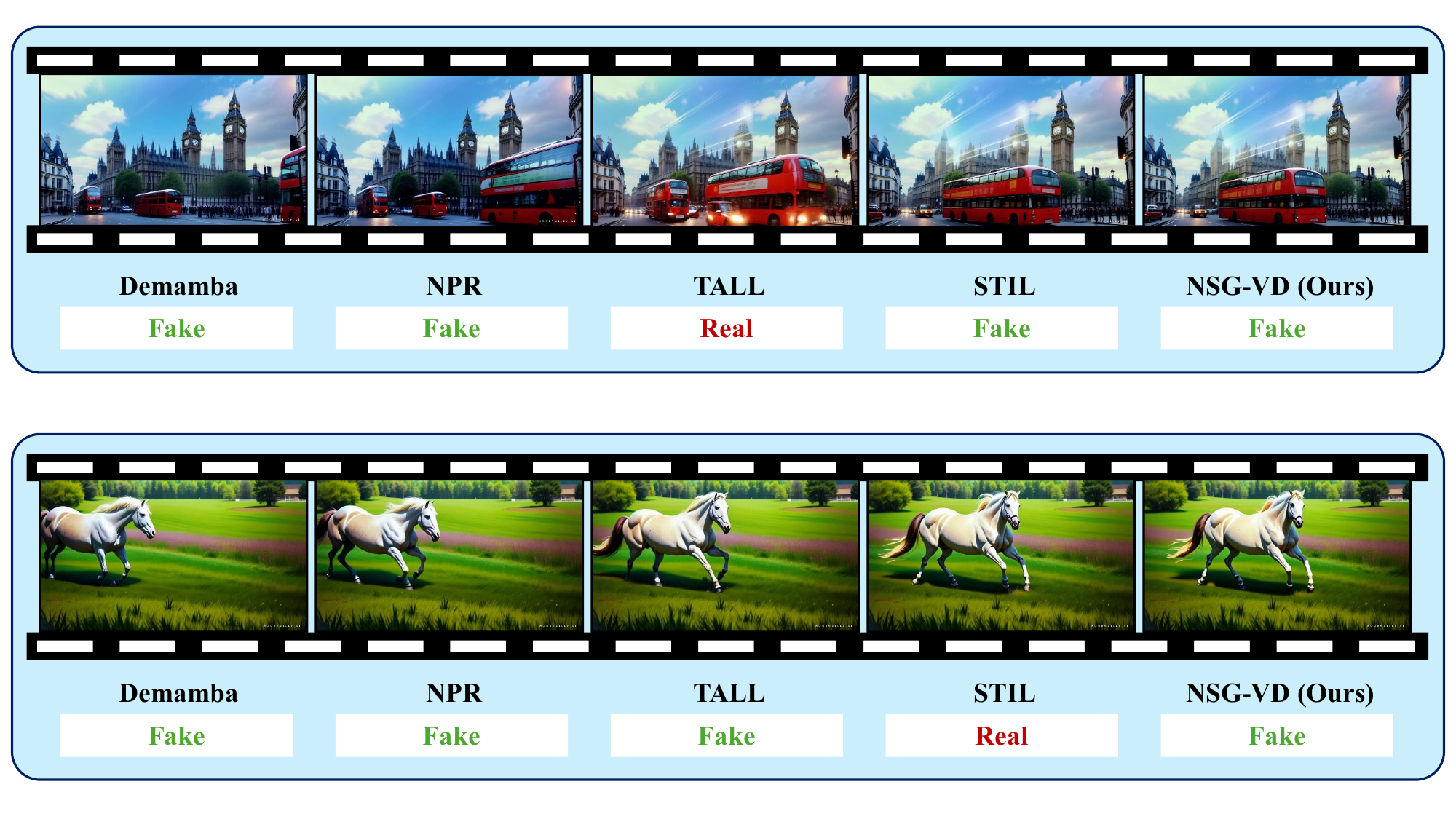}}
    \caption{Results of the detection on \textit{generated} videos from the MoonValley dataset.}
    \label{fig: visualization_MoonValley}
    \end{center}
\end{figure*}

\begin{figure*}[!h]
    \begin{center}
    {\includegraphics[width=0.95\linewidth]{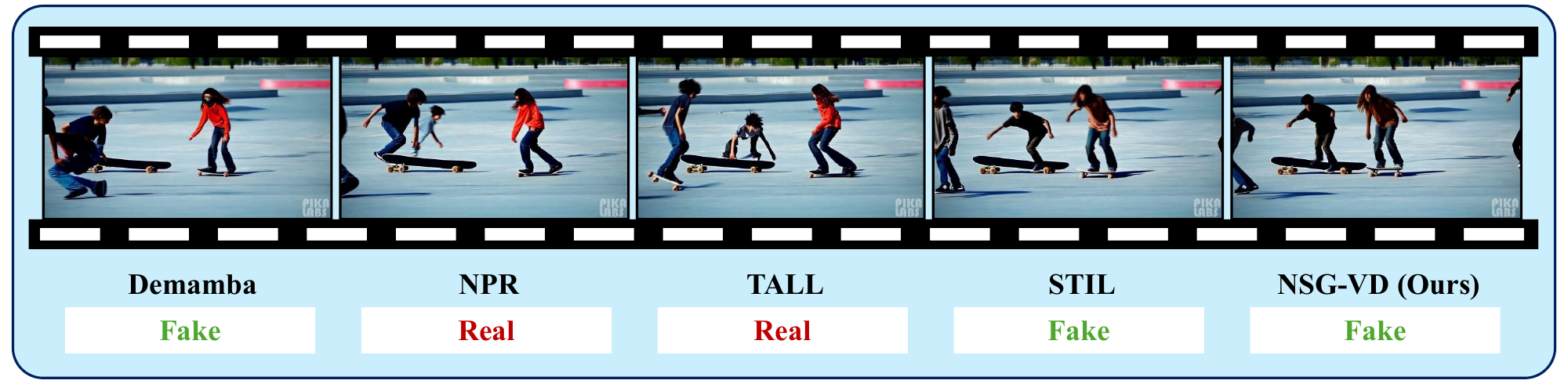}}
    {\includegraphics[width=0.95\linewidth]{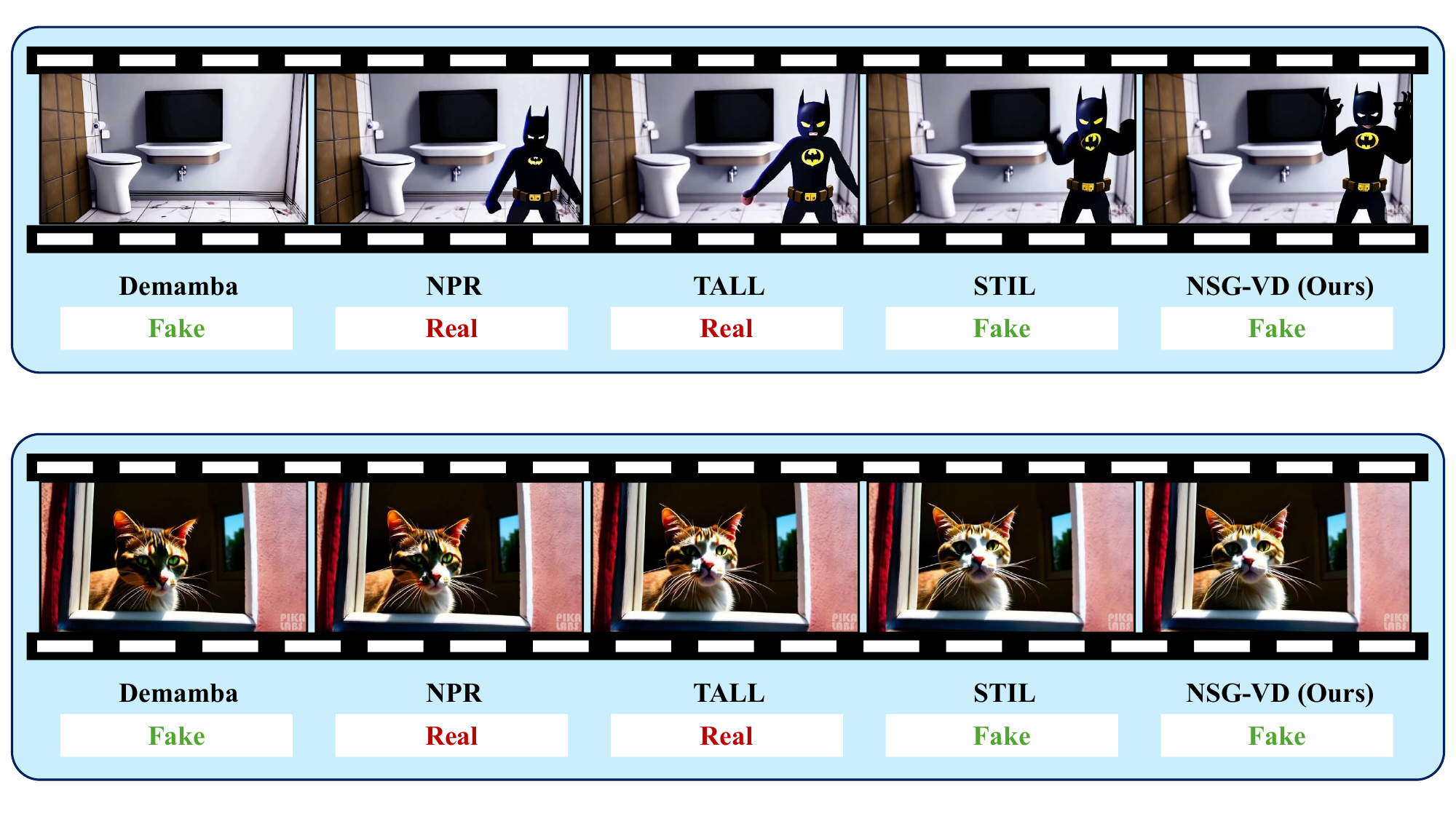}}
    {\includegraphics[width=0.95\linewidth]{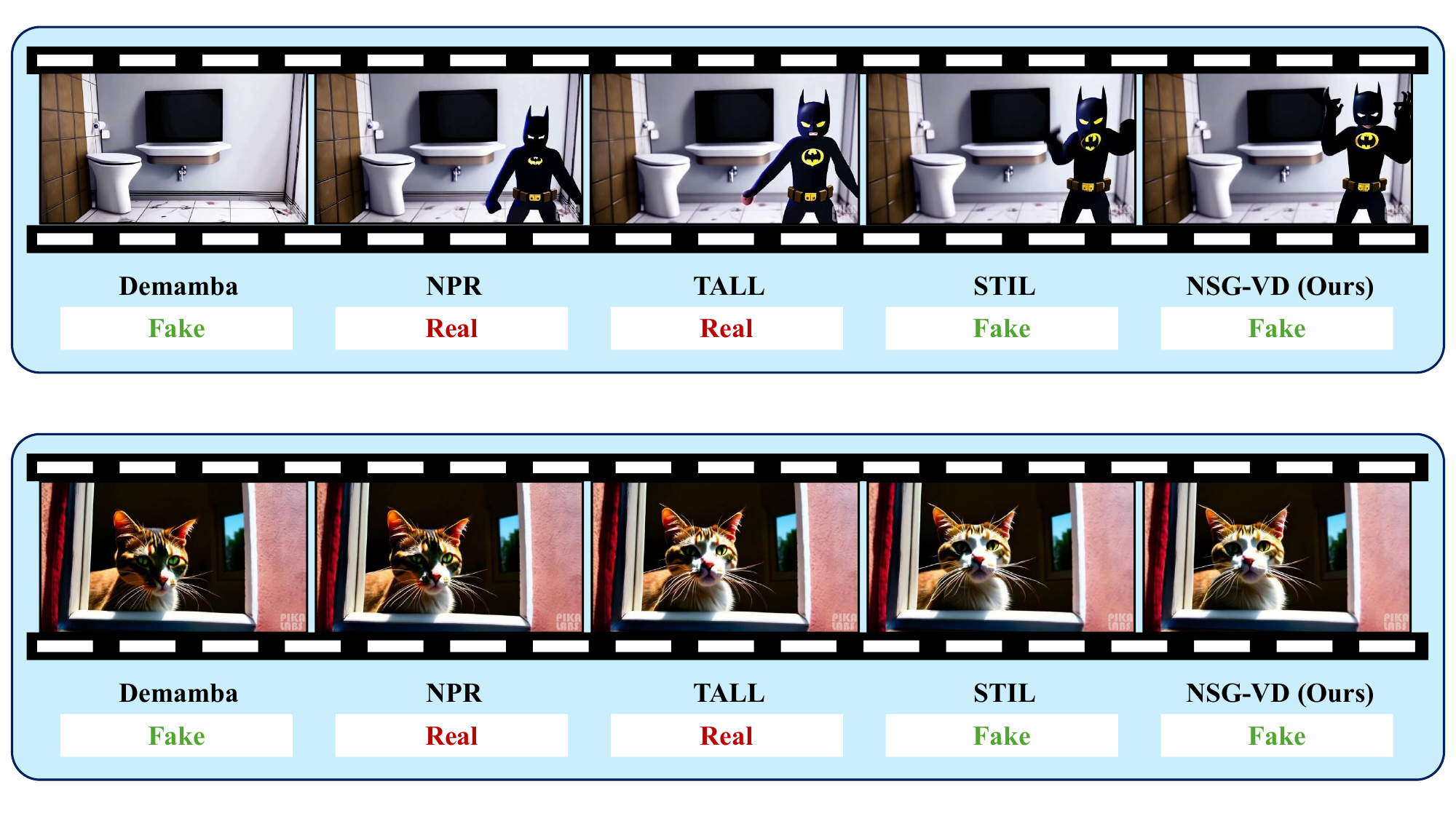}}
    \caption{Results of the detection on \textit{generated} videos from the MorphStudio dataset.}
    \label{fig: visualization_MorphStudio}
    \end{center}
\end{figure*}

\begin{figure*}[!h]
    \begin{center}
    {\includegraphics[width=0.95\linewidth]{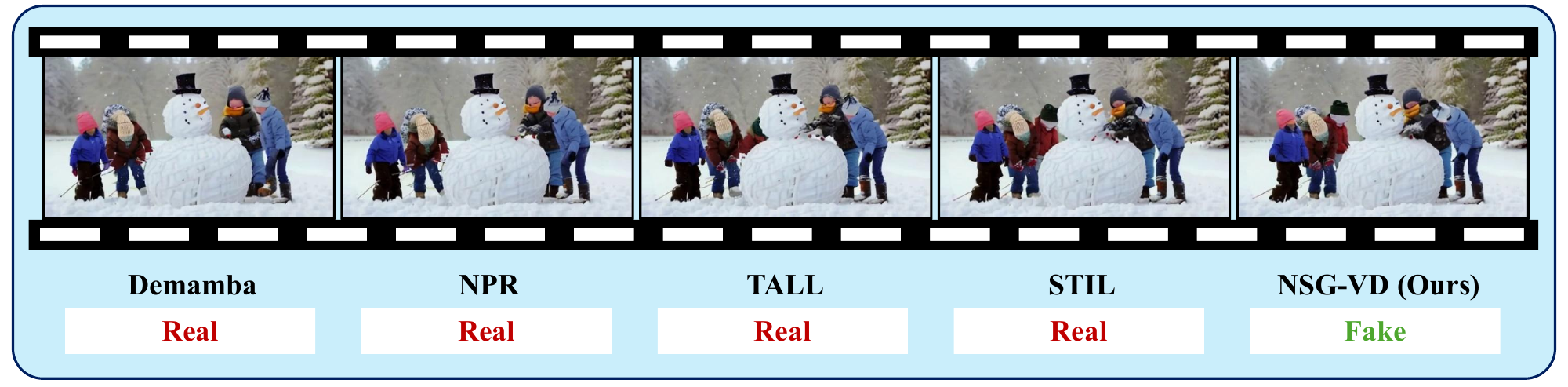}}
    {\includegraphics[width=0.95\linewidth]{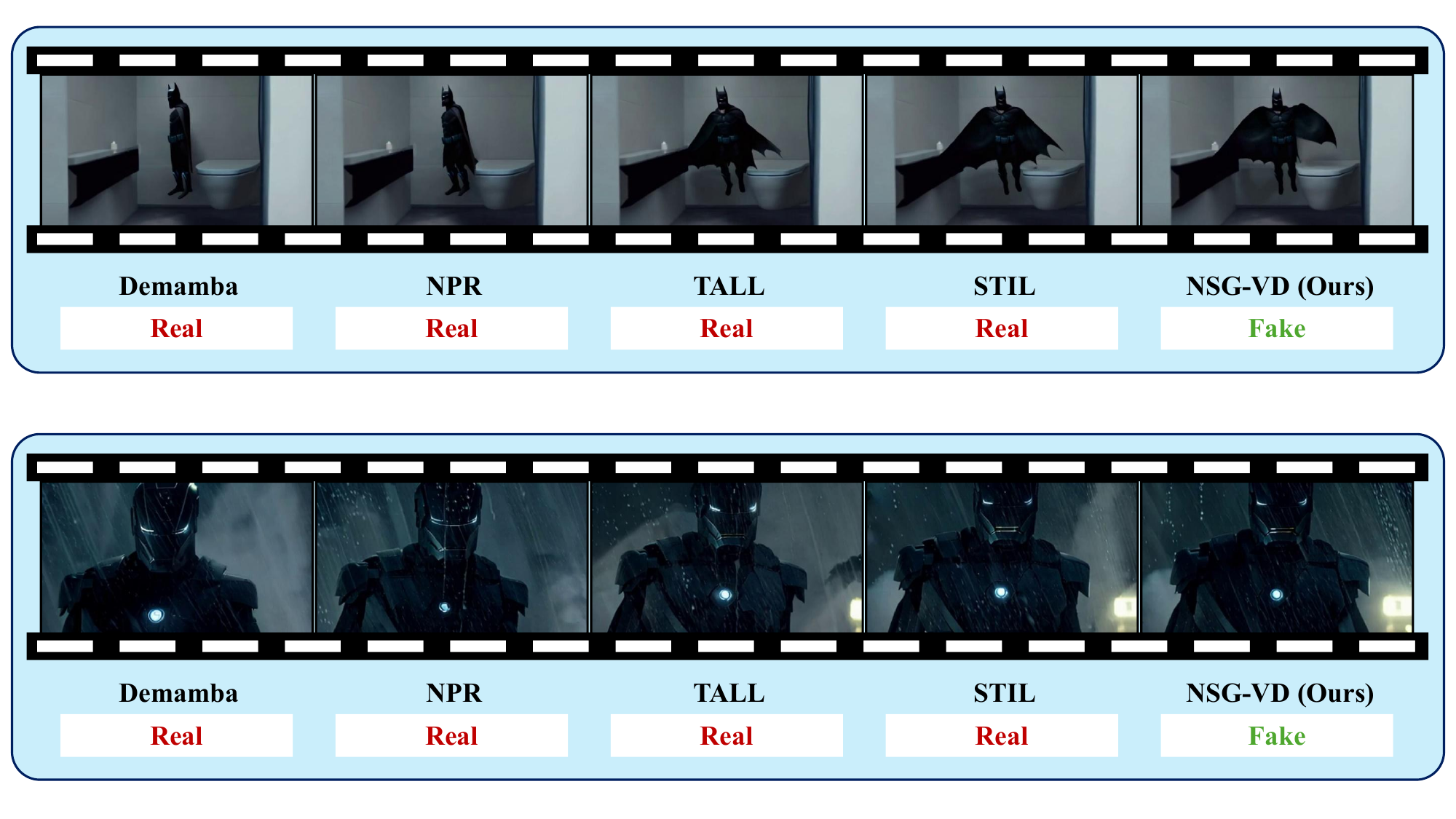}}
    {\includegraphics[width=0.95\linewidth]{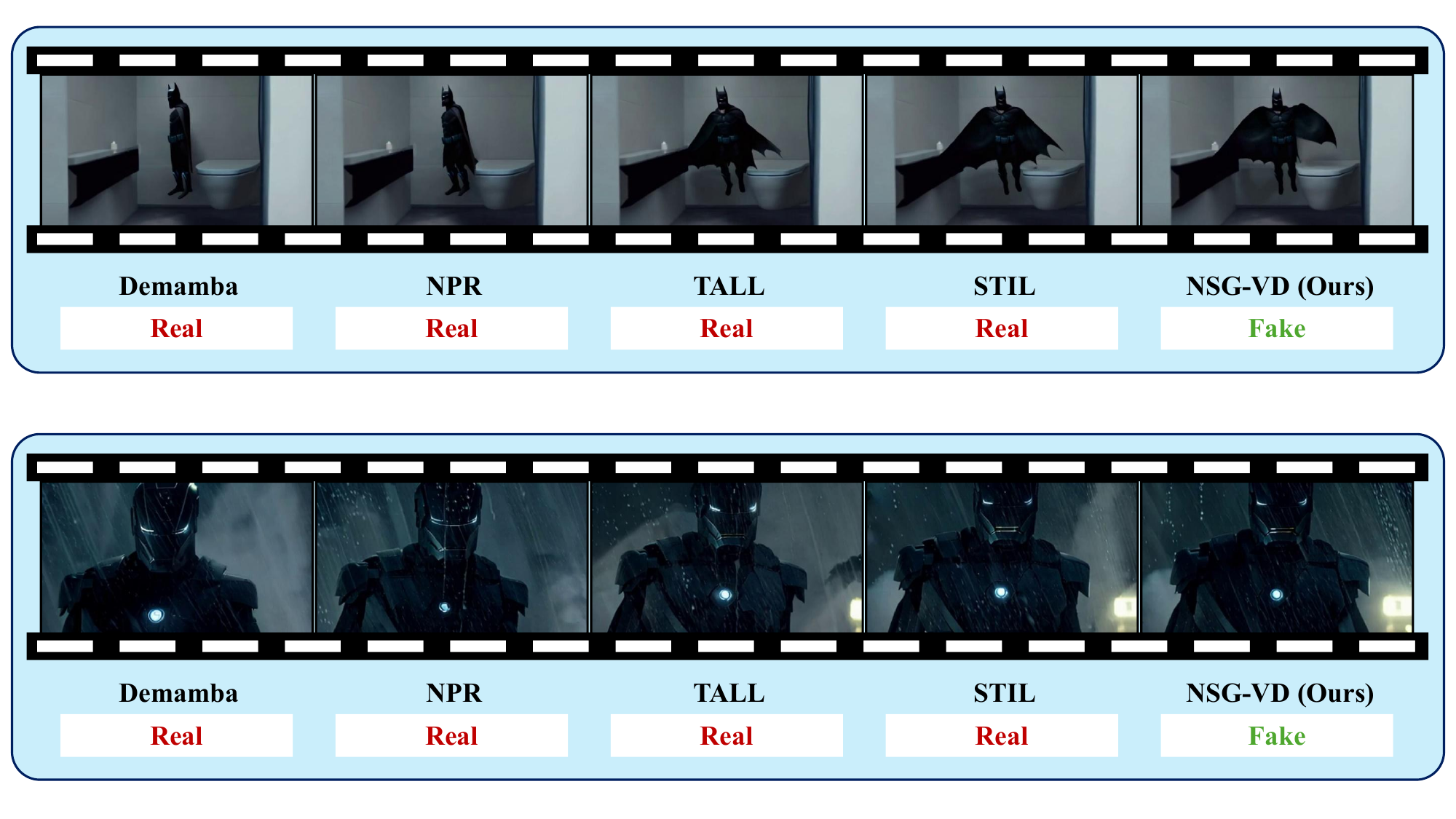}}
    \caption{Results of the detection on \textit{generated} videos from the Show1 dataset.}
    \label{fig: visualization_Show_1}
    \end{center}
\end{figure*}

\begin{figure*}[!h]
    \begin{center}
    {\includegraphics[width=0.95\linewidth]{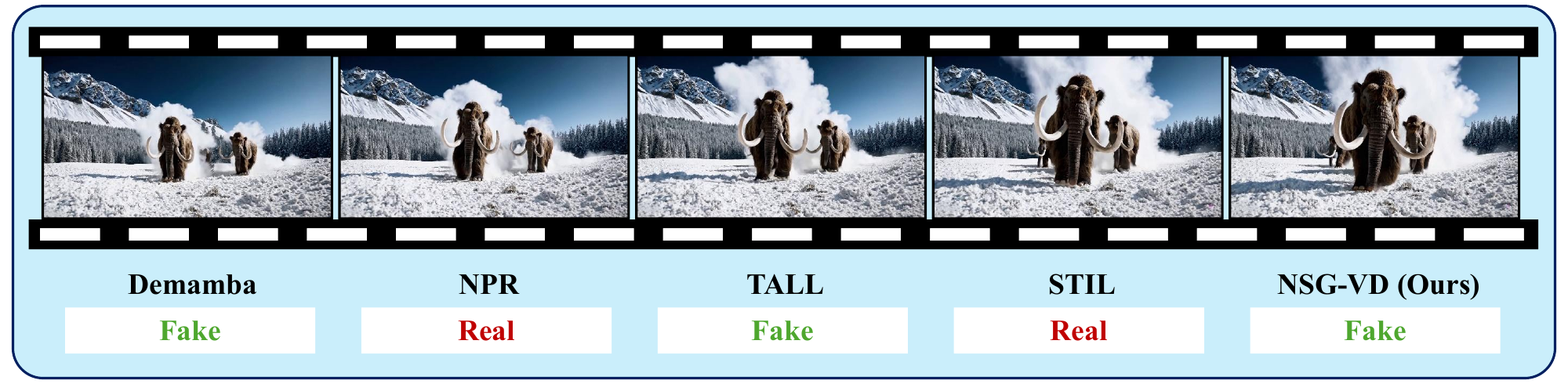}}
    {\includegraphics[width=0.95\linewidth]{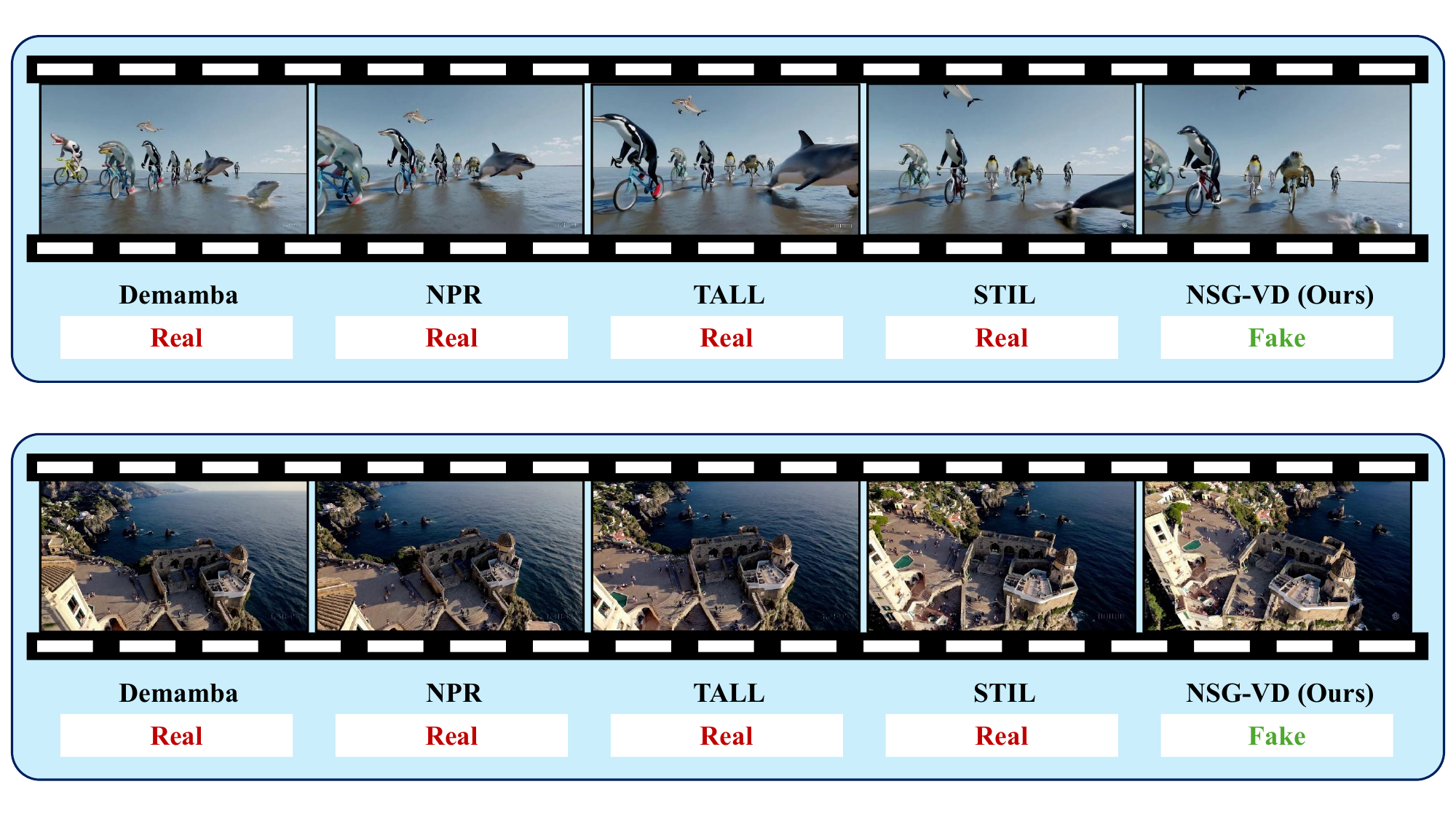}}
    {\includegraphics[width=0.95\linewidth]{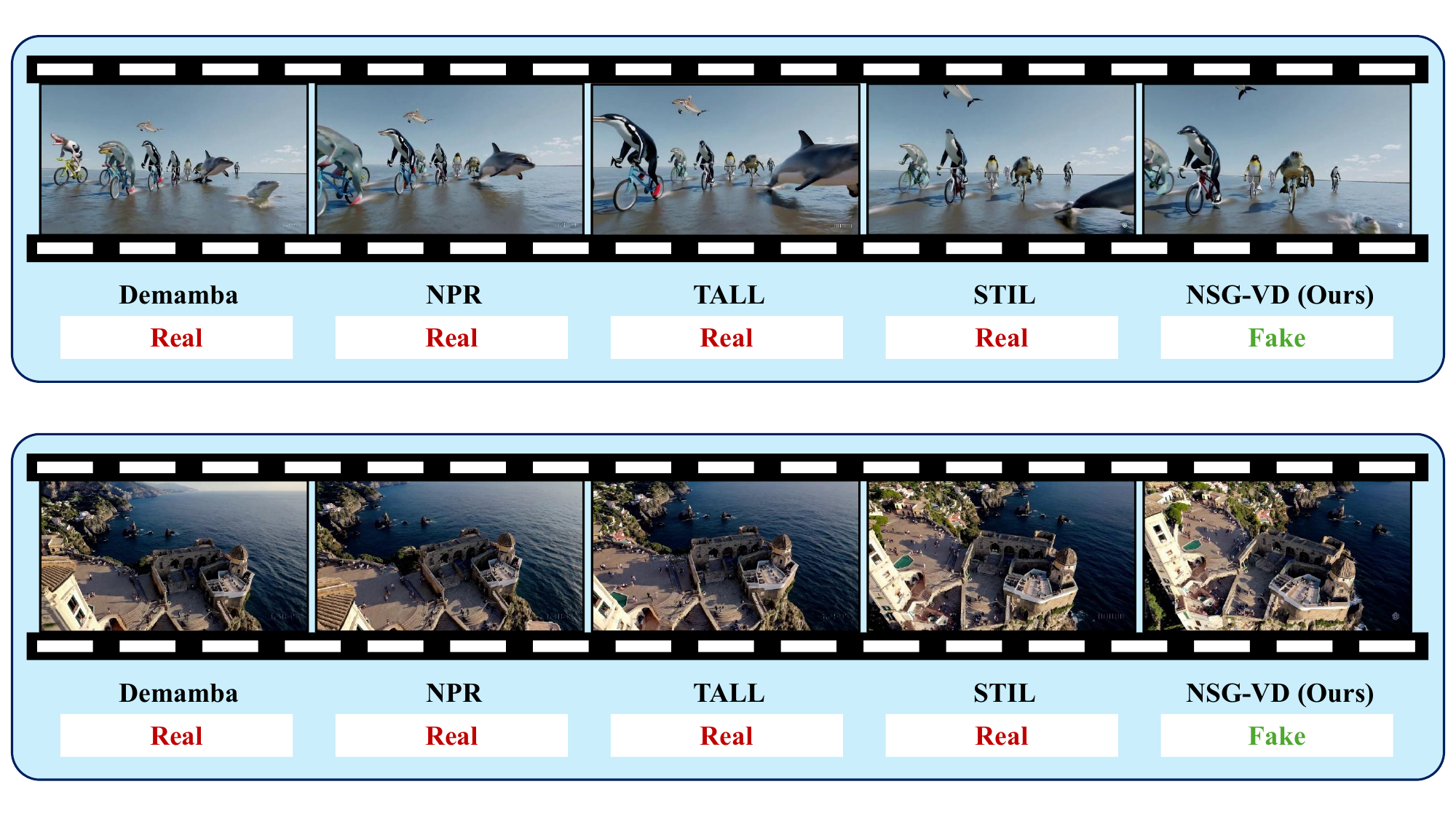}}
    \caption{Results of the detection on \textit{generated} videos from the Sora dataset.}
    \label{fig: visualization_Sora}
    \end{center}
\end{figure*}

\begin{figure*}[!h]
    \begin{center}
    {\includegraphics[width=0.95\linewidth]{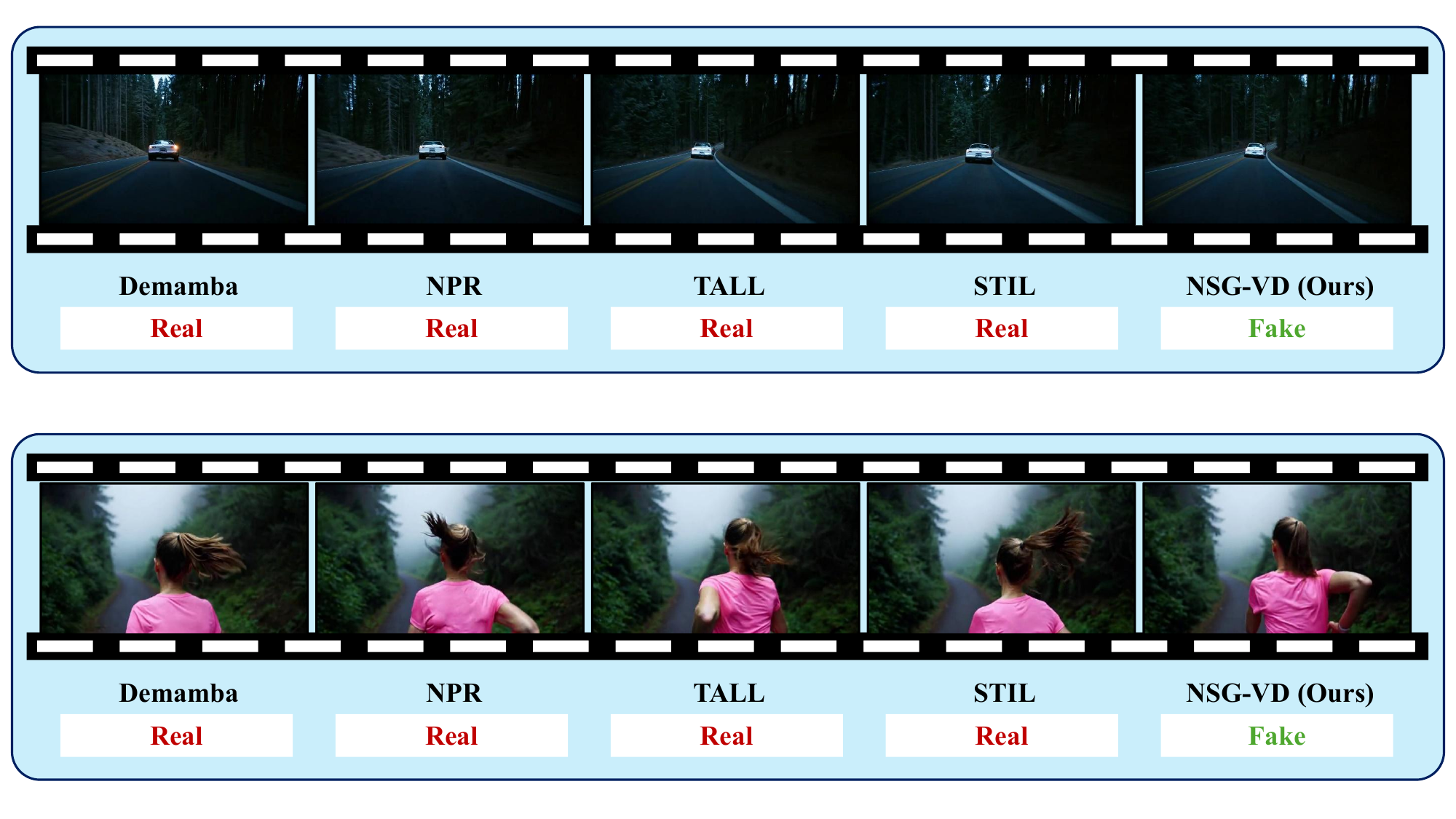}}
    {\includegraphics[width=0.95\linewidth]{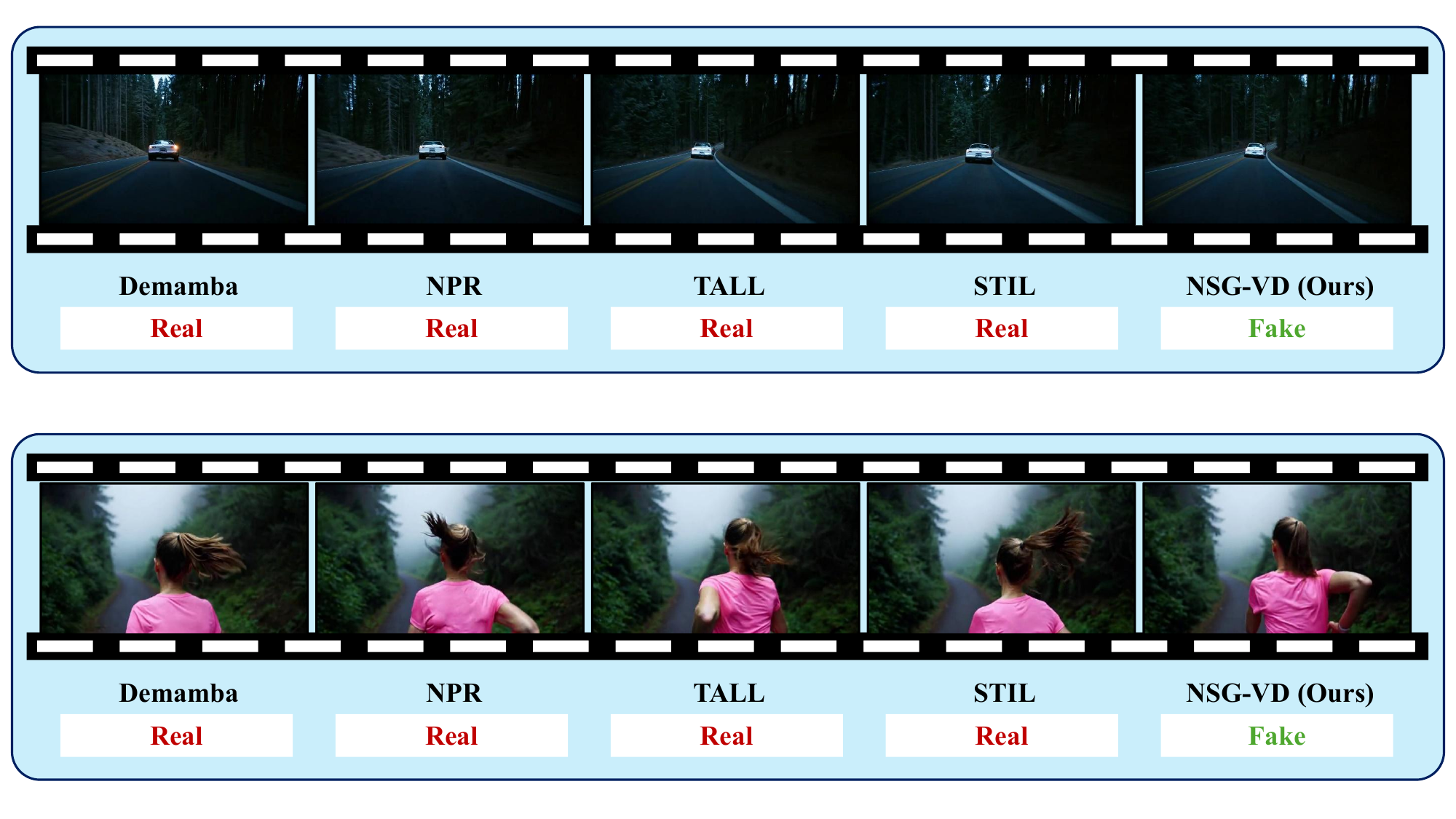}}
    {\includegraphics[width=0.95\linewidth]{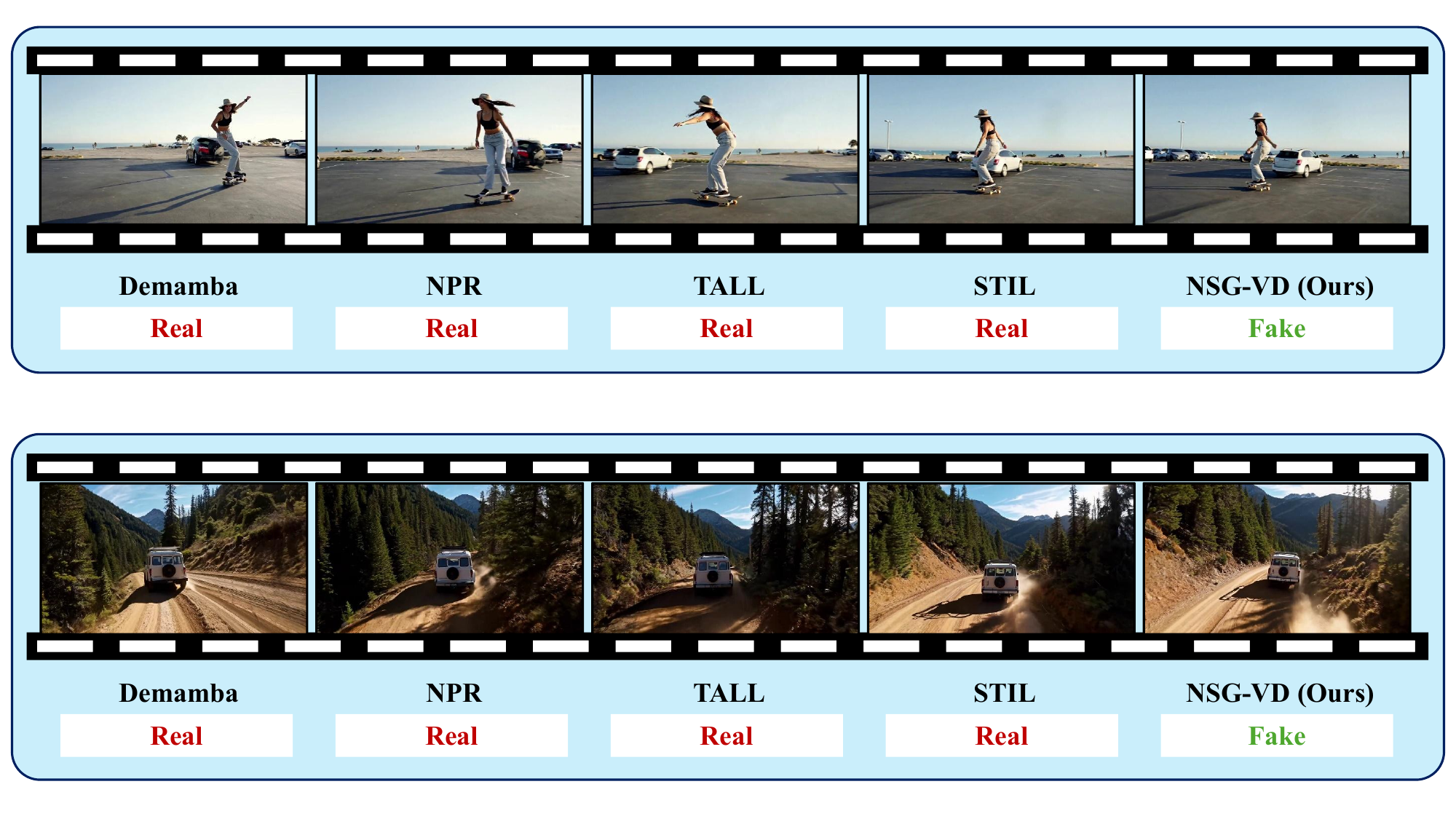}}
    \caption{Results of the detection on \textit{generated} videos from the Seaweed dataset.}
    \label{fig: visualization_Seaweed_1}
    \end{center}
\end{figure*}

\begin{figure*}[!h]
    \begin{center}
    {\includegraphics[width=0.95\linewidth]
    {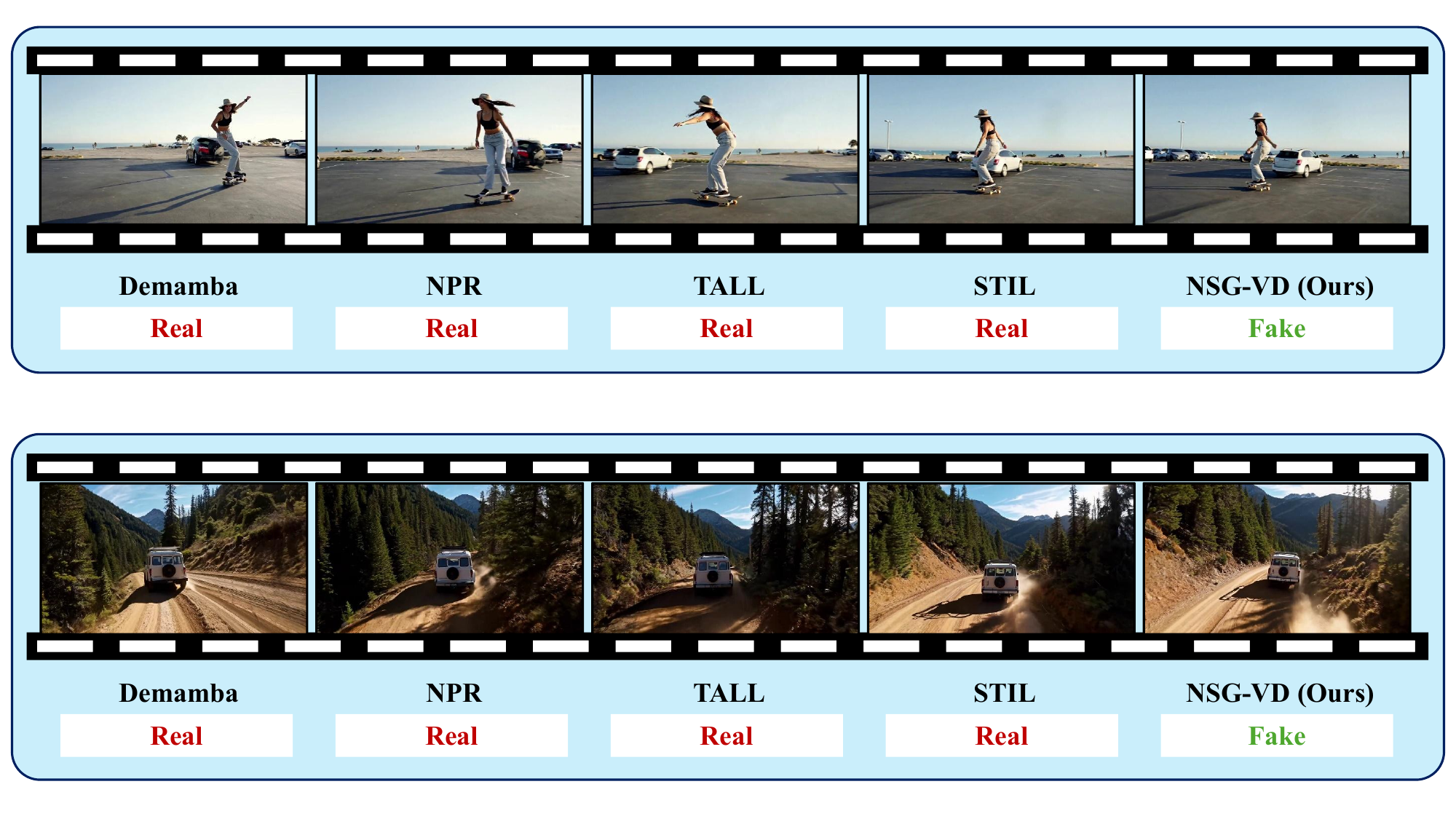}}
    {\includegraphics[width=0.95\linewidth]{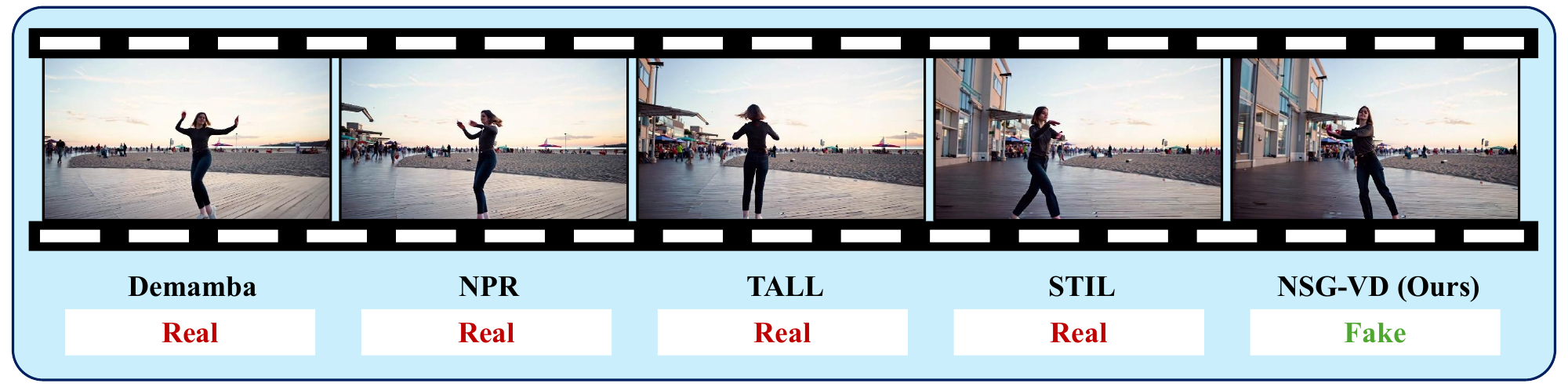}}
    % {\includegraphics[width=\linewidth]{figure/example/Seaweed_6.pdf}}
    {\includegraphics[width=0.95\linewidth]{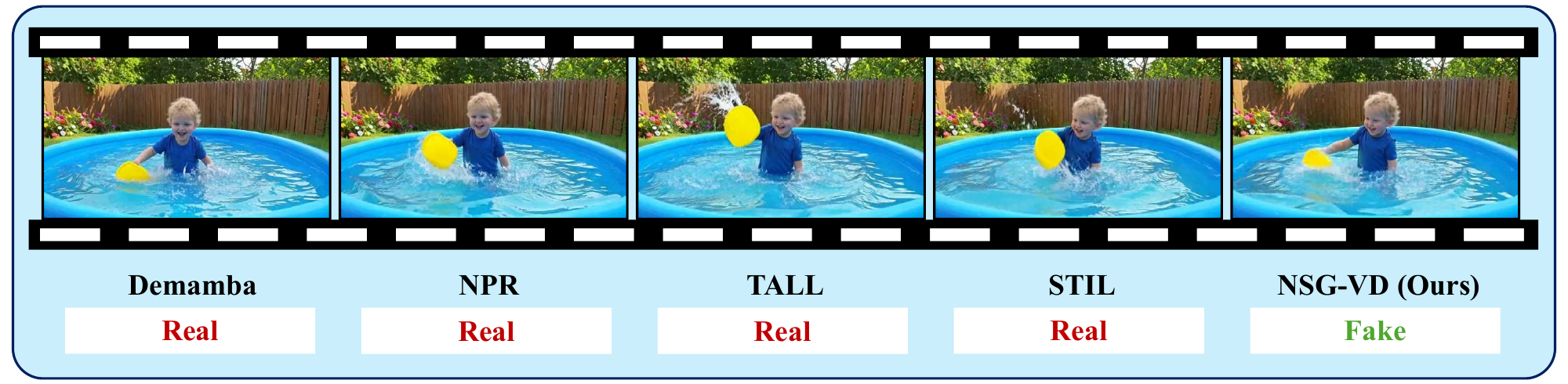}}
    \caption{Results of the detection on \textit{generated} videos from the Seaweed dataset.}
    \label{fig: visualization_Seaweed_2}
    \end{center}
\end{figure*}

\begin{figure*}[!h]
    \begin{center}
    {\includegraphics[width=0.95\linewidth]{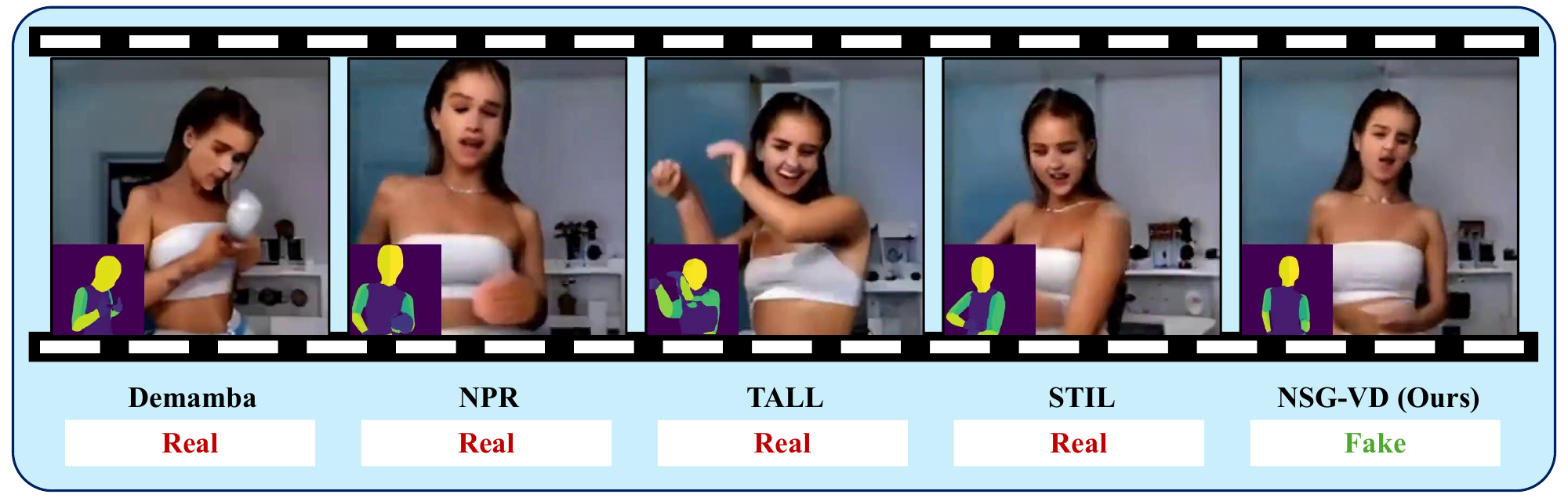}}
    {\includegraphics[width=0.95\linewidth]{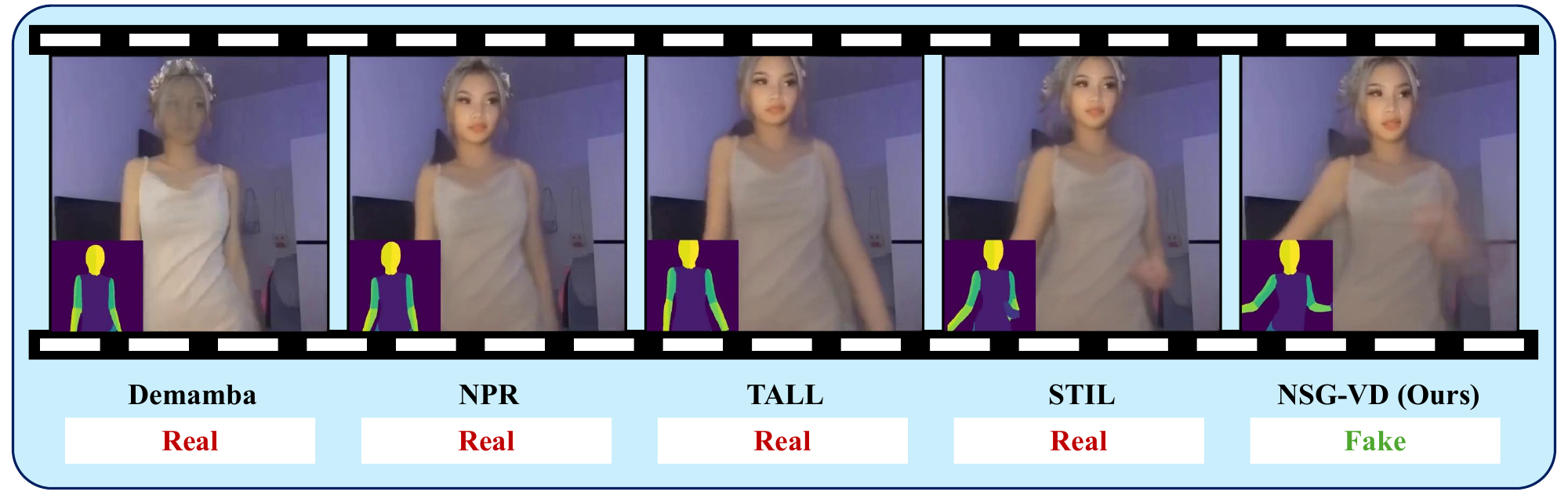}}
    {\includegraphics[width=0.95\linewidth]{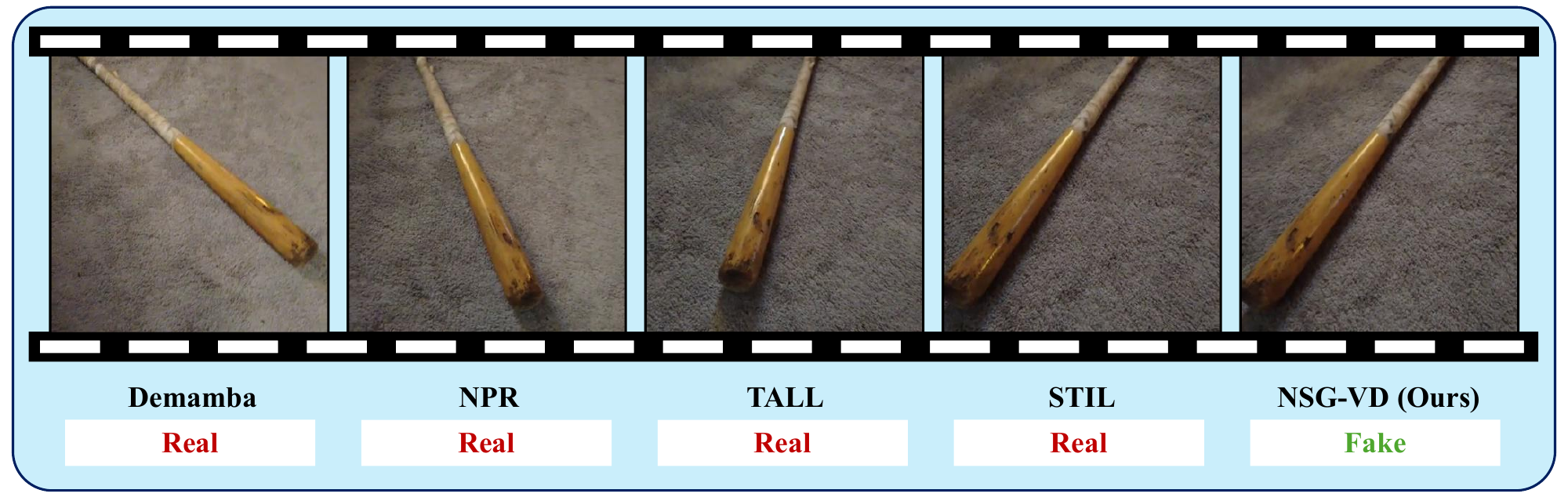}}
    \caption{Results of the detection on \textit{generated} videos from the WildScrape dataset.}
    \label{fig: visualization_WildScrape}
    \end{center}
\end{figure*}

To further demonstrate the excellent performance of our NSG-VD, we present visual detection results on both real and generated videos across all 10 datasets. As illustrated in Figures \ref{fig: visualization_Natural}-\ref{fig: visualization_WildScrape}, both the baselines and NSG-VD demonstrate satisfactory detection on real video samples. For generated videos, the existing baselines achieve reasonable performance on early generation models (e.g., Crafter, Gen2, and MoonValley), but exhibit significant performance degradation when applied to more advanced generative models (e.g., Show1, Sora, and WildScrape). In contrast, NSG-VD consistently achieves strong detection performance across all generation levels.

On this basis, we consider the recently proposed Seaweed \cite{seawead2025seaweed} method (as shown in Figures \ref{fig: visualization_Seaweed_1}, \ref{fig: visualization_Seaweed_2}), which generates highly realistic long-form videos. All four baselines exhibit near-complete failure on this dataset, whereas NSG-VD continues to deliver effective detection performance.

\newpage
\section*{NeurIPS Paper Checklist}

\begin{enumerate}

\item {\bf Claims}
    \item[] Question: Do the main claims made in the abstract and introduction accurately reflect the paper's contributions and scope?
    \item[] Answer: \answerYes{} % Replace by \answerYes{}, \answerNo{}, or \answerNA{}.
    \item[] Justification: The abstract and introduction clearly state the paper's contributions, including the proposed Normalized Spatiotemporal Gradient (NSG) statistic, the NSG-VD detection method based on probability flow conservation, and the theoretical analysis.
    \item[] Guidelines:
    \begin{itemize}
        \item The answer NA means that the abstract and introduction do not include the claims made in the paper.
        \item The abstract and/or introduction should clearly state the claims made, including the contributions made in the paper and important assumptions and limitations. A No or NA answer to this question will not be perceived well by the reviewers. 
        \item The claims made should match theoretical and experimental results, and reflect how much the results can be expected to generalize to other settings. 
        \item It is fine to include aspirational goals as motivation as long as it is clear that these goals are not attained by the paper. 
    \end{itemize}

\item {\bf Limitations}
    \item[] Question: Does the paper discuss the limitations of the work performed by the authors?
    \item[] Answer: \answerYes{} % Replace by \answerYes{}, \answerNo{}, or \answerNA{}.
    \item[] Justification:  We discuss the limitations in Appendix \ref{sec: Limitations and Future Directions}.
    \item[] Guidelines:
    \begin{itemize}
        \item The answer NA means that the paper has no limitation while the answer No means that the paper has limitations, but those are not discussed in the paper. 
        \item The authors are encouraged to create a separate "Limitations" section in their paper.
        \item The paper should point out any strong assumptions and how robust the results are to violations of these assumptions (e.g., independence assumptions, noiseless settings, model well-specification, asymptotic approximations only holding locally). The authors should reflect on how these assumptions might be violated in practice and what the implications would be.
        \item The authors should reflect on the scope of the claims made, e.g., if the approach was only tested on a few datasets or with a few runs. In general, empirical results often depend on implicit assumptions, which should be articulated.
        \item The authors should reflect on the factors that influence the performance of the approach. For example, a facial recognition algorithm may perform poorly when image resolution is low or images are taken in low lighting. Or a speech-to-text system might not be used reliably to provide closed captions for online lectures because it fails to handle technical jargon.
        \item The authors should discuss the computational efficiency of the proposed algorithms and how they scale with dataset size.
        \item If applicable, the authors should discuss possible limitations of their approach to address problems of privacy and fairness.
        \item While the authors might fear that complete honesty about limitations might be used by reviewers as grounds for rejection, a worse outcome might be that reviewers discover limitations that aren't acknowledged in the paper. The authors should use their best judgment and recognize that individual actions in favor of transparency play an important role in developing norms that preserve the integrity of the community. Reviewers will be specifically instructed to not penalize honesty concerning limitations.
    \end{itemize}

\item {\bf Theory assumptions and proofs}
    \item[] Question: For each theoretical result, does the paper provide the full set of assumptions and a complete (and correct) proof?
    \item[] Answer: \answerYes{} % Replace by \answerYes{}, \answerNo{}, or \answerNA{}.
    \item[] Justification: The paper provides complete theoretical results for the NSG feature lower bound (Theorem \ref{thm: bound of NSG} in Section \ref{sec: Theoretical Analysis}), including assumptions (\eg, Gaussian-distributed real/fake videos and temporal derivatives constraints). The proof is detailed in Appendix \ref{sec: proof bound}, including all mathematical derivations and references to supporting lemmas. All theorems and lemmas are numbered and cross-referenced, and proofs are accessible in Appendix \ref{sec:proofs}.
    \item[] Guidelines:
    \begin{itemize}
        \item The answer NA means that the paper does not include theoretical results. 
        \item All the theorems, formulas, and proofs in the paper should be numbered and cross-referenced.
        \item All assumptions should be clearly stated or referenced in the statement of any theorems.
        \item The proofs can either appear in the main paper or the supplemental material, but if they appear in the supplemental material, the authors are encouraged to provide a short proof sketch to provide intuition. 
        \item Inversely, any informal proof provided in the core of the paper should be complemented by formal proofs provided in appendix or supplemental material.
        \item Theorems and Lemmas that the proof relies upon should be properly referenced. 
    \end{itemize}

    \item {\bf Experimental result reproducibility}
    \item[] Question: Does the paper fully disclose all the information needed to reproduce the main experimental results of the paper to the extent that it affects the main claims and/or conclusions of the paper (regardless of whether the code and data are provided or not)?
    \item[] Answer: \answerYes{} % Replace by \answerYes{}, \answerNo{}, or \answerNA{}.
    \item[] Justification: We fully disclose all information needed to reproduce the main experimental results of the paper, see Section \ref{sec: experiments} and Appendix \ref{sec: more details for exp}.
    \item[] Guidelines:
    \begin{itemize}
        \item The answer NA means that the paper does not include experiments.
        \item If the paper includes experiments, a No answer to this question will not be perceived well by the reviewers: Making the paper reproducible is important, regardless of whether the code and data are provided or not.
        \item If the contribution is a dataset and/or model, the authors should describe the steps taken to make their results reproducible or verifiable. 
        \item Depending on the contribution, reproducibility can be accomplished in various ways. For example, if the contribution is a novel architecture, describing the architecture fully might suffice, or if the contribution is a specific model and empirical evaluation, it may be necessary to either make it possible for others to replicate the model with the same dataset, or provide access to the model. In general. releasing code and data is often one good way to accomplish this, but reproducibility can also be provided via detailed instructions for how to replicate the results, access to a hosted model (e.g., in the case of a large language model), releasing of a model checkpoint, or other means that are appropriate to the research performed.
        \item While NeurIPS does not require releasing code, the conference does require all submissions to provide some reasonable avenue for reproducibility, which may depend on the nature of the contribution. For example
        \begin{enumerate}
            \item If the contribution is primarily a new algorithm, the paper should make it clear how to reproduce that algorithm.
            \item If the contribution is primarily a new model architecture, the paper should describe the architecture clearly and fully.
            \item If the contribution is a new model (e.g., a large language model), then there should either be a way to access this model for reproducing the results or a way to reproduce the model (e.g., with an open-source dataset or instructions for how to construct the dataset).
            \item We recognize that reproducibility may be tricky in some cases, in which case authors are welcome to describe the particular way they provide for reproducibility. In the case of closed-source models, it may be that access to the model is limited in some way (e.g., to registered users), but it should be possible for other researchers to have some path to reproducing or verifying the results.
        \end{enumerate}
    \end{itemize}

\item {\bf Open access to data and code}
    \item[] Question: Does the paper provide open access to the data and code, with sufficient instructions to faithfully reproduce the main experimental results, as described in supplemental material?
    \item[] Answer: \answerNo{} % Replace by \answerYes{}, \answerNo{}, or \answerNA{}.
    \item[] Justification: We will release our code upon acceptance.
    \item[] Guidelines:
    \begin{itemize}
        \item The answer NA means that paper does not include experiments requiring code.
        \item Please see the NeurIPS code and data submission guidelines (\url{https://nips.cc/public/guides/CodeSubmissionPolicy}) for more details.
        \item While we encourage the release of code and data, we understand that this might not be possible, so “No” is an acceptable answer. Papers cannot be rejected simply for not including code, unless this is central to the contribution (e.g., for a new open-source benchmark).
        \item The instructions should contain the exact command and environment needed to run to reproduce the results. See the NeurIPS code and data submission guidelines (\url{https://nips.cc/public/guides/CodeSubmissionPolicy}) for more details.
        \item The authors should provide instructions on data access and preparation, including how to access the raw data, preprocessed data, intermediate data, and generated data, etc.
        \item The authors should provide scripts to reproduce all experimental results for the new proposed method and baselines. If only a subset of experiments are reproducible, they should state which ones are omitted from the script and why.
        \item At submission time, to preserve anonymity, the authors should release anonymized versions (if applicable).
        \item Providing as much information as possible in supplemental material (appended to the paper) is recommended, but including URLs to data and code is permitted.
    \end{itemize}

\item {\bf Experimental setting/details}
    \item[] Question: Does the paper specify all the training and test details (e.g., data splits, hyperparameters, how they were chosen, type of optimizer, etc.) necessary to understand the results?
    \item[] Answer: \answerYes{} % Replace by \answerYes{}, \answerNo{}, or \answerNA{}.
    \item[] Justification: We provide full experimental detail content in our experimental settings (see Section \ref{sec: experiments} and Appendix \ref{sec: more details for exp}).
    \item[] Guidelines:
    \begin{itemize}
        \item The answer NA means that the paper does not include experiments.
        \item The experimental setting should be presented in the core of the paper to a level of detail that is necessary to appreciate the results and make sense of them.
        \item The full details can be provided either with the code, in appendix, or as supplemental material.
    \end{itemize}

\item {\bf Experiment statistical significance}
    \item[] Question: Does the paper report error bars suitably and correctly defined or other appropriate information about the statistical significance of the experiments?
    \item[] Answer: \answerYes{} % Replace by \answerYes{}, \answerNo{}, or \answerNA{}.
    \item[] Justification: We evaluate statistical significance by reporting mean and standard deviation across multiple runs with different random seeds (Appendix \ref{sec: Results of random seed}).
    \item[] Guidelines:
    \begin{itemize}
        \item The answer NA means that the paper does not include experiments.
        \item The authors should answer "Yes" if the results are accompanied by error bars, confidence intervals, or statistical significance tests, at least for the experiments that support the main claims of the paper.
        \item The factors of variability that the error bars are capturing should be clearly stated (for example, train/test split, initialization, random drawing of some parameter, or overall run with given experimental conditions).
        \item The method for calculating the error bars should be explained (closed form formula, call to a library function, bootstrap, etc.)
        \item The assumptions made should be given (e.g., Normally distributed errors).
        \item It should be clear whether the error bar is the standard deviation or the standard error of the mean.
        \item It is OK to report 1-sigma error bars, but one should state it. The authors should preferably report a 2-sigma error bar than state that they have a 96\% CI, if the hypothesis of Normality of errors is not verified.
        \item For asymmetric distributions, the authors should be careful not to show in tables or figures symmetric error bars that would yield results that are out of range (e.g. negative error rates).
        \item If error bars are reported in tables or plots, The authors should explain in the text how they were calculated and reference the corresponding figures or tables in the text.
    \end{itemize}

\item {\bf Experiments compute resources}
    \item[] Question: For each experiment, does the paper provide sufficient information on the computer resources (type of compute workers, memory, time of execution) needed to reproduce the experiments?
    \item[] Answer: \answerYes{} % Replace by \answerYes{}, \answerNo{}, or \answerNA{}.
    \item[] Justification: We provide detailed information about on computing resources in Appendix \ref{sec: Details on Our Method}.
    \item[] Guidelines:
    \begin{itemize}
        \item The answer NA means that the paper does not include experiments.
        \item The paper should indicate the type of compute workers CPU or GPU, internal cluster, or cloud provider, including relevant memory and storage.
        \item The paper should provide the amount of compute required for each of the individual experimental runs as well as estimate the total compute. 
        \item The paper should disclose whether the full research project required more compute than the experiments reported in the paper (e.g., preliminary or failed experiments that didn't make it into the paper). 
    \end{itemize}
    
\item {\bf Code of ethics}
    \item[] Question: Does the research conducted in the paper conform, in every respect, with the NeurIPS Code of Ethics \url{https://neurips.cc/public/EthicsGuidelines}?
    \item[] Answer: \answerYes{} % Replace by \answerYes{}, \answerNo{}, or \answerNA{}.
    \item[] Justification: The paper meets the NeurIPS Code of Ethics.
    \item[] Guidelines:
    \begin{itemize}
        \item The answer NA means that the authors have not reviewed the NeurIPS Code of Ethics.
        \item If the authors answer No, they should explain the special circumstances that require a deviation from the Code of Ethics.
        \item The authors should make sure to preserve anonymity (e.g., if there is a special consideration due to laws or regulations in their jurisdiction).
    \end{itemize}

\item {\bf Broader impacts}
    \item[] Question: Does the paper discuss both potential positive societal impacts and negative societal impacts of the work performed?
    \item[] Answer: \answerYes{} % Replace by \answerYes{}, \answerNo{}, or \answerNA{}.
    \item[] Justification: We discuss broader impacts in Appendix \ref{sec: Broader Impacts}.
    \item[] Guidelines:
    \begin{itemize}
        \item The answer NA means that there is no societal impact of the work performed.
        \item If the authors answer NA or No, they should explain why their work has no societal impact or why the paper does not address societal impact.
        \item Examples of negative societal impacts include potential malicious or unintended uses (e.g., disinformation, generating fake profiles, surveillance), fairness considerations (e.g., deployment of technologies that could make decisions that unfairly impact specific groups), privacy considerations, and security considerations.
        \item The conference expects that many papers will be foundational research and not tied to particular applications, let alone deployments. However, if there is a direct path to any negative applications, the authors should point it out. For example, it is legitimate to point out that an improvement in the quality of generative models could be used to generate deepfakes for disinformation. On the other hand, it is not needed to point out that a generic algorithm for optimizing neural networks could enable people to train models that generate Deepfakes faster.
        \item The authors should consider possible harms that could arise when the technology is being used as intended and functioning correctly, harms that could arise when the technology is being used as intended but gives incorrect results, and harms following from (intentional or unintentional) misuse of the technology.
        \item If there are negative societal impacts, the authors could also discuss possible mitigation strategies (e.g., gated release of models, providing defenses in addition to attacks, mechanisms for monitoring misuse, mechanisms to monitor how a system learns from feedback over time, improving the efficiency and accessibility of ML).
    \end{itemize}
    
\item {\bf Safeguards}
    \item[] Question: Does the paper describe safeguards that have been put in place for responsible release of data or models that have a high risk for misuse (e.g., pretrained language models, image generators, or scraped datasets)?
    \item[] Answer: \answerNA{} % Replace by \answerYes{}, \answerNo{}, or \answerNA{}.
    \item[] Justification: This paper poses no such risks.
    \item[] Guidelines:
    \begin{itemize}
        \item The answer NA means that the paper poses no such risks.
        \item Released models that have a high risk for misuse or dual-use should be released with necessary safeguards to allow for controlled use of the model, for example by requiring that users adhere to usage guidelines or restrictions to access the model or implementing safety filters. 
        \item Datasets that have been scraped from the Internet could pose safety risks. The authors should describe how they avoided releasing unsafe images.
        \item We recognize that providing effective safeguards is challenging, and many papers do not require this, but we encourage authors to take this into account and make a best faith effort.
    \end{itemize}

\item {\bf Licenses for existing assets}
    \item[] Question: Are the creators or original owners of assets (e.g., code, data, models), used in the paper, properly credited and are the license and terms of use explicitly mentioned and properly respected?
    \item[] Answer: \answerYes{} % Replace by \answerYes{}, \answerNo{}, or \answerNA{}.
    \item[] Justification: We strictly follow the license of the assets.
    \item[] Guidelines:
    \begin{itemize}
        \item The answer NA means that the paper does not use existing assets.
        \item The authors should cite the original paper that produced the code package or dataset.
        \item The authors should state which version of the asset is used and, if possible, include a URL.
        \item The name of the license (e.g., CC-BY 4.0) should be included for each asset.
        \item For scraped data from a particular source (e.g., website), the copyright and terms of service of that source should be provided.
        \item If assets are released, the license, copyright information, and terms of use in the package should be provided. For popular datasets, \url{paperswithcode.com/datasets} has curated licenses for some datasets. Their licensing guide can help determine the license of a dataset.
        \item For existing datasets that are re-packaged, both the original license and the license of the derived asset (if it has changed) should be provided.
        \item If this information is not available online, the authors are encouraged to reach out to the asset's creators.
    \end{itemize}

\item {\bf New assets}
    \item[] Question: Are new assets introduced in the paper well documented and is the documentation provided alongside the assets?
    \item[] Answer: \answerNA{} % Replace by \answerYes{}, \answerNo{}, or \answerNA{}.
    \item[] Justification: The paper does not release new assets.
    \item[] Guidelines:
    \begin{itemize}
        \item The answer NA means that the paper does not release new assets.
        \item Researchers should communicate the details of the dataset/code/model as part of their submissions via structured templates. This includes details about training, license, limitations, etc. 
        \item The paper should discuss whether and how consent was obtained from people whose asset is used.
        \item At submission time, remember to anonymize your assets (if applicable). You can either create an anonymized URL or include an anonymized zip file.
    \end{itemize}

\item {\bf Crowdsourcing and research with human subjects}
    \item[] Question: For crowdsourcing experiments and research with human subjects, does the paper include the full text of instructions given to participants and screenshots, if applicable, as well as details about compensation (if any)? 
    \item[] Answer: \answerNA{} % Replace by \answerYes{}, \answerNo{}, or \answerNA{}.
    \item[] Justification: The paper does not involve crowdsourcing nor research with human subjects.
    \item[] Guidelines:
    \begin{itemize}
        \item The answer NA means that the paper does not involve crowdsourcing nor research with human subjects.
        \item Including this information in the supplemental material is fine, but if the main contribution of the paper involves human subjects, then as much detail as possible should be included in the main paper. 
        \item According to the NeurIPS Code of Ethics, workers involved in data collection, curation, or other labor should be paid at least the minimum wage in the country of the data collector. 
    \end{itemize}

\item {\bf Institutional review board (IRB) approvals or equivalent for research with human subjects}
    \item[] Question: Does the paper describe potential risks incurred by study participants, whether such risks were disclosed to the subjects, and whether Institutional Review Board (IRB) approvals (or an equivalent approval/review based on the requirements of your country or institution) were obtained?
    \item[] Answer: \answerNA{} % Replace by \answerYes{}, \answerNo{}, or \answerNA{}.
    \item[] Justification: The paper does not involve crowdsourcing nor research with human subjects.
    \item[] Guidelines:
    \begin{itemize}
        \item The answer NA means that the paper does not involve crowdsourcing nor research with human subjects.
        \item Depending on the country in which research is conducted, IRB approval (or equivalent) may be required for any human subjects research. If you obtained IRB approval, you should clearly state this in the paper. 
        \item We recognize that the procedures for this may vary significantly between institutions and locations, and we expect authors to adhere to the NeurIPS Code of Ethics and the guidelines for their institution. 
        \item For initial submissions, do not include any information that would break anonymity (if applicable), such as the institution conducting the review.
    \end{itemize}

\item {\bf Declaration of LLM usage}
    \item[] Question: Does the paper describe the usage of LLMs if it is an important, original, or non-standard component of the core methods in this research? Note that if the LLM is used only for writing, editing, or formatting purposes and does not impact the core methodology, scientific rigorousness, or originality of the research, declaration is not required.
    %this research? 
    \item[] Answer: \answerNA{} % Replace by \answerYes{}, \answerNo{}, or \answerNA{}.
    \item[] Justification: This paper does not involve LLMs.
    \item[] Guidelines:
    \begin{itemize}
        \item The answer NA means that the core method development in this research does not involve LLMs as any important, original, or non-standard components.
        \item Please refer to our LLM policy (\url{https://neurips.cc/Conferences/2025/LLM}) for what should or should not be described.
    \end{itemize}

\end{enumerate}

\end{document}